\pdfoutput=1

\documentclass{article}
\usepackage{arxiv}
\usepackage{graphicx}
\usepackage{float}

\usepackage{amsmath}
\usepackage{amssymb}
\usepackage{mathtools}
\usepackage{amsthm}
\usepackage{bm}
\usepackage{dsfont}
\usepackage{times}

\usepackage{tabularx}
\usepackage{amsfonts}       
\usepackage{nicefrac}       
\usepackage{microtype}      
\usepackage{xcolor}         
\usepackage[colorlinks=true,citecolor=blue]{hyperref}       
\usepackage{enumitem}
\usepackage{subcaption}
\usepackage[english]{babel}
\usepackage[capitalize,noabbrev]{cleveref}
\usepackage{parskip}
\usepackage{overpic}
\usepackage{cite}
\usepackage{multirow}
\usepackage{makecell}
\usepackage{tablefootnote}
\usepackage{comment}

\theoremstyle{remark}

\theoremstyle{plain}
\newtheorem{theorem}{Theorem}
\newtheorem{corollary}{Corollary}
\newtheorem{proposition}{Proposition}
\newtheorem{lemma}{Lemma}

\theoremstyle{definition}

\newtheorem{assumption}{Assumption}

\crefname{assumption}{Assumption}{Assumptions}
\crefname{definition}{Definition}{Definitions}
\crefname{theorem}{Theorem}{Theorems}
\crefname{lemma}{Lemma}{Lemmas}
\crefname{remark}{Remark}{Remarks}
\crefname{corollary}{Corollary}{Corollaries}

\usepackage[toc,page,header]{appendix}
\usepackage{titletoc}

\usepackage{comment}

\usepackage{accents}

\providecommand{\ind}[1]{\mathds{1}\cbrc{ #1 }}
\providecommand{\Ham}[1]{\mathrm{Ham}\brc{ #1 }}
\providecommand{\E}{\mathbb{E}}

\newcommand{\floor}[1]{\left\lfloor #1 \right\rfloor}

\newcommand{\abs}[1]{\left| #1 \right|}

\makeatletter
\DeclareRobustCommand{\rvec}[1]{%
  \mathpalette\do@rvec{#1}%
}
\newcommand{\do@rvec}[2]{%
  \fix@rvec{#1}{+}%
  \reflectbox{%
    \ensuremath{%
      \m@th#1\vec{%
        \reflectbox{%
          \ensuremath{%
            \fix@rvec{#1}{-}\m@th#1#2\fix@rvec{#1}{+}%
          }%
        }%
      }%
    }%
  }%
  \fix@rvec{#1}{-}%
}
\newcommand{\fix@rvec}[2]{%
  \ifx#1\displaystyle
    \mkern#23mu
  \else
    \ifx#1\textstyle
      \mkern#23mu
    \else
        \mkern#22mu
    \fi
  \fi
}
\makeatother

\providecommand{\KL}[2]{\mathrm{KL}( #1 || #2 )}
\providecommand{\TV}[2]{\mathrm{TV}( #1 , #2 )}
\newcommand{\brc}[1]{\left( #1 \right)}
\newcommand{\sbrc}[1]{\left[ #1 \right]}
\newcommand{\cbrc}[1]{\left\{ #1 \right\}}
\providecommand{\eps}{\varepsilon}
\providecommand{\T}{\intercal}
\renewcommand{\d}{\mathrm{d}}

\providecommand{\calM}{\mathcal{M}}

\providecommand{\calX}{\mathcal{X}}
\providecommand{\calL}{\mathcal{L}}

\providecommand{\calO}{\mathcal{O}}

\providecommand{\mask}{\mathrm{[MASK]}}

\let\citep\cite

\title{Sharp Convergence Rates for Masked Diffusion Models}

\author{Yuchen Liang$^{*}$ \quad Zhiheng Tan$^{*}$ \quad \quad Ness Shroff \quad Yingbin Liang\\
The Ohio State University
}

\begin{document}

\maketitle
\def\thefootnote{*}\footnotetext{These authors contributed equally to this work.}\def\thefootnote{\arabic{footnote}}

\begin{abstract}
Discrete diffusion models have achieved strong empirical performance in text and other symbolic domains, with masked (absorbing-rate) variants emerging as competitive alternatives to autoregressive models. Among existing samplers, the Euler method remains the standard choice in many applications, and more recently, the First-Hitting Sampler (FHS) has shown considerable promise for masked diffusion models. Despite their practical success, the theoretical understanding of these samplers remains limited. 
Existing analyses are conducted in Kullback–Leibler (KL) divergence, which often yields loose parameter dependencies and requires strong assumptions on score estimation. Moreover, these guarantees do not cover recently developed high-performance sampler of FHS.
In this work, we first develop a direct total-variation (TV)–based analysis for the Euler method that overcomes these limitations. Our results relax assumptions on score estimation, improve parameter dependencies, and establish convergence guarantees without requiring any surrogate initialization. Also for this setting, we provide the first convergence lower bound for the Euler sampler, establishing tightness with respect to both the data dimension $d$ and the target accuracy $\eps$. Finally, we analyze the FHS sampler and show that it incurs no sampling error beyond that induced by score estimation, which we show to be tight with a matching lower error bound. Overall, our analysis introduces a direct TV-based error decomposition along the CTMC trajectory and a decoupling-based path-wise analysis for FHS, which may be of independent interest.
\end{abstract}

{
  \hypersetup{linkcolor=blue}
  \tableofcontents
}

\section{Introduction}

Diffusion models have become one of the central paradigms of generative modeling, offering state-of-the-art performance across a wide range of domains, including image synthesis \citep{dhariwal2021diff-beat-gan}, video generation \citep{rombach2022stable-diffusion}, audio generation \citep{huang2023audio}, and text modeling \citep{austin2021discrete,lou2024entropy}. The core idea is to define a forward noising process that gradually perturbs data into a tractable distribution, together with a reverse denoising process that reconstructs samples from noise. 

Recently, for applications in discrete and symbolic domains such as language, graphs, and molecules \citep{diffusion-survey-graph, diffusion-survey-drug-design}, {\bf discrete-state diffusion models} have demonstrated advantages over their continuous counterparts, achieving strong perplexity and negative log-likelihood performance on language modeling benchmarks \citep{austin2021discrete, lou2024entropy, shi2024simplified, ou2025absorb}. In the discrete setting, the forward noising process is typically formulated as a continuous-time Markov chain (CTMC) governed by a structured rate matrix \citep{campbell2022discrete}. Two rate matrices are most commonly used: a {\em uniform} rate matrix, whose stationary distribution is uniform over the state space, and an {\em absorbing} rate matrix, whose stationary distribution concentrates on a fully masked absorbing state. The absorbing-rate diffusion models, often referred to as masked diffusion models (MDMs), are particularly appealing due to its close connections to established architectures such as BERT \citep{devlin2019bert} and conditional masked language models (CMLM) \citep{ghazvininejad2019cmlm}. Empirically, MDMs have achieved competitive performance relative to autoregressive models on language modeling tasks \citep{shi2024simplified}.

\begin{table*}[t]
    \centering
    \caption{{\em Summary of results for absorbing-rate (masked) discrete diffusion samplers in terms of the number of steps needed to achieve $\sqrt{\eps}$-accuracy in total-variation.}}
    \begin{tabular}{c|c|c|c}
        \textbf{Algorithm} & \textbf{Estimation Error} & \textbf{Num of steps} & \textbf{Reference} \\ \hline \hline

        $\tau$-leaping & Score Entropy &  $\widetilde{\calO}\brc{\frac{d \log M}{\eps}}$ & \cite[Thm 2]{liang2025absorb} \\ \hline

        Euler & Score Entropy & $\widetilde{\calO}\brc{\frac{d^2 M^2}{\eps^{3/4}}}^\dagger$ & \cite[Thm 4.1]{huang2025mask} \\ \hline

        DMPM & Score Entropy & $\widetilde{\calO}\brc{\frac{d}{\eps^{2}}}$ & \cite[Thm 3.2.3]{conforti2025discrete} \\ \hline
        
        \color{blue} Euler & \color{blue} $L_1$ & \color{blue} $\widetilde{\calO}\brc{\frac{d}{\eps^{1/2}}}$ & \color{blue} [This paper, Thm~\ref{thm:conv_tv_absorb}] \\ \hline

        \color{blue} FHS & \color{blue} Score Entropy & \color{blue} Exactly $d$ & \color{blue} [This paper, Thm~\ref{thm: error bound of fhs}]
    \end{tabular}
    \vspace{0.5em}
    \parbox{\linewidth}{\small \textit{\bf Comparison:} Our result for the Euler method achieves the best dependence on $\eps$ among $\tau$-leaping type algorithms while retaining small dependencies on $d$ and being $M$-free. 
    Moreover, we provide the first sampling guarantee for FHS, which attains the same sampling error as other methods but using \emph{exactly} $d$ steps.\\
    \textit{\bf Notes:} $(i)$ Only deterministic-step-size samplers are included in the table, because uniformization \citep{chen2024uniformization} yields random (potentially infinite) converging steps.
    $(ii)$ Here, $d$ denotes the dimension (e.g., length of the generated sentence), $M$ denotes the upper bound on the score estimates. The dependence on the vocabulary size $S$ is not explicitly shown.
    $(iii)$ In \Cref{prop:est-relax}, we show that our $L_1$ estimation error is more relaxed than score-entropy.
    $(iv)$ While \cite{huang2025mask} does not show the dependence on $M$ in the main text, the dependence can be straightforwardly derived from their proof.
    }
    \label{tab:literature}
     \vspace{-5mm}
\end{table*}

Despite these empirical advances, the theoretical understanding of discrete diffusion models, especially MDMs, remains limited. 
Existing works using deterministic-step-size samplers have largely relied on convergence analyses in terms of Kullback–Leibler (KL) divergence, most under the uniform-rate matrix \citep{chen2024uniformization,zhang2025conv-disc,conforti2025markov,ren2025stoc-int,liang2025sampler}, and, more recently, some under the absorbing-rate \citep{liang2025absorb,huang2025mask,conforti2025discrete}. 
However, there are currently no lower bounds showing whether existing KL-based guarantees for discrete diffusion are tight. In {\em continuous} diffusion, prior work \citep{li2024accl-prov} often finds that TV analyses yield sharper bounds and more accurate parameter dependence such as the dimension $d$. This motivates a TV-based analysis for {\em discrete} diffusion to obtain tighter guarantees that potentially admit matching lower bounds. Doing so is nontrivial, since KL-specific tools such as change-of-measure arguments \citep{ren2025stoc-int} do not extend to TV, and the discontinuity of the underlying space \citep{liang2025sampler} introduces additional technical challenges.
Furthermore, these KL-based guarantees often require strong assumptions on the estimated score function. For example, all of \cite{chen2024uniformization,zhang2025conv-disc,ren2025stoc-int,liang2025sampler} require the assumption of a bounded score estimate, which needs typically be achieved through an additional clipping step during training \citep{zhang2025conv-disc}. 
These limitations give rise to our first central question. 

\emph{Question 1: Can we establish convergence guarantees with \emph{tighter} parameter dependencies via a direct analysis in TV distance? Moreover, can we derive matching lower bounds that complement the existing upper bounds?}

More recently, the First-Hitting Sampler (FHS) has been shown to be highly effective for generation with MDMs \citep{zheng2025fhs}. Unlike classical deterministic-step-size samplers, such as Euler methods, whose number of steps must grow unboundedly to achieve vanishing TV distance, FHS can produce a high-quality sample in exactly $d$ steps, where $d$ denotes the data dimension. Despite its empirical success, the theoretical error bound of FHS has not been explored. Given its different design philosophy, the analysis of FHS requires new techniques that explicitly exploit its underlying structure. This leads us to the next question below.

\emph{Question 2. Given a $d$-step sampling procedure like FHS, how to characterize the sampling error of the algorithm? Moreover, can we also derive a matching lower bound for FHS?}

In this work, we address both questions and give affirmative answers to both of them.

\subsection{Our Contributions}

The central contributions of this work are (i) a new direct TV analysis tailored to the Euler method for masked diffusion models with a matching lower bound, and (ii) a novel analysis for the FHS algorithm, leading to 
tight error bound that matches our lower bound.
Our detailed contributions include:

\textbf{Improved Parameter Dependencies for the Euler Method (with straightforward generalization to $\tau$-leaping type samplers):} For the Euler method for the absorbing-rate (masked) discrete diffusion models,
we strengthen existing theoretical guarantees with an improved $\eps$ dependence while retaining the best-known dependencies on $d$ and $S$.
Note that the same guarantees can be straightforwardly extended to vanilla $\tau$-leaping samplers using the technique in \cite{liang2025sampler}.
In particular, we show that, in order to reach a slightly perturbed target distribution, the required number of steps is 
$\widetilde{\calO}\brc{d S / \sqrt{\eps}}$. Compared to prior results, ours yields an improvement by a factor of $\calO(1/\sqrt{\eps})$. Notably, our analysis removes the need for bounded score assumptions and removes the need for a surrogate initialization typically required in previous works.

\textbf{Tight Lower Bound for the Euler Method:} We provide, to our knowledge, the first \emph{lower-bound} analysis for absorbing-rate discrete diffusion models using the Euler method in TV distance. Combined with our improved upper-bound, we show that our bound is tight in both the data dimension $d$ and the target accuracy $\eps$ with constant step-sizes and without early-stopping. 
Our result identifies the intrinsic limits of convergence when the Euler method is used.

\textbf{Error Analysis for FHS and Tight Lower Bound:}
We perform a rigorous analysis for the First-Hitting Sampler (FHS) \citep{zheng2025fhs} for masked diffusion models, which generates a sample in exactly $d$ steps. 
Specifically, we show that FHS incurs no sampling error beyond that arising from score estimation itself. In particular, the resulting error bound is independent of both the data dimension $d$ and the vocabulary size $S$. 
We further show that this bound is tight, by constructing a worst-case instance for which the FHS output error matches exactly the upper bound of the score estimation error. Our result further highlights the advantages of masked diffusion models, for which efficient finite-step sampling with dimension-free guarantees is possible. 
{Specifically, we develop a novel error analysis that decomposes the total sampling error into token-wise components. Our key insight is that FHS exploits an \textit{intrinsic decoupling structure} in MDMs: the transition time $\tau$ and transition index $i$ can be sampled exactly from the true reverse process, regardless of the accuracy of score estimation. Consequently, the overall sampling error depends solely on the estimation error incurred at each predicted token, and is therefore independent of $d$ and $S$.}

\subsection{Related Works}

See \Cref{app:works} for a full list of related works. As follows we only include a brief survey of the most relevant works.

\textbf{Theory on Uniform-Rate Discrete Diffusion Models.} Considerable effort has been devoted to improving convergence guarantees for uniform-rate diffusion models. Unlike continuous diffusion, the resulting guarantees in the discrete setting crucially depend on the choice of sampler. Early analyses focused on uniformization-based methods \citep{chen2024uniformization,ren2025stoc-int}, exact per-step samplers \citep{zhang2025conv-disc}, and $\tau$-leaping schemes \citep{campbell2022discrete,ren2025stoc-int}. More recently, theoretical guarantees have been established for DMPM \citep{conforti2025markov,conforti2025discrete} as well as for more practical Tweedie $\tau$-leaping and Euler method \citep{liang2025sampler}.
Notably, all of these results are derived in terms of KL divergence, which yields suboptimal guarantees when translated to total variation distance. 

\textbf{Theory on Masked (Absorbing-rate) Discrete Diffusion Models.}
\cite{liang2025absorb} was the first to establish convergence guarantees for masked diffusion models under the $\tau$-leaping and the uniformization samplers, showing improved dimensional dependence compared to uniform-rate diffusion. \cite{huang2025mask} studied faster convergence rates using the Euler method and a specialized uniformization scheme, the Mask-Aware Truncated Uniformization (MATU), which eliminates the need for a bounded score assumption. In parallel, \cite{conforti2025discrete} derived improved dimension-dependent rates for the DMPM sampler, also without assuming bounded score. 


{\bf Comparison with concurrent work \cite{dmitriev2026discrete}.} After we submitted this paper to a conference, a concurrent work \cite{dmitriev2026discrete} was recently posted on arXiv, which also studies convergence guarantees for absorbing-rate discrete diffusion models albeit under a modified truncated $\tau$-leaping sampler (in addition to uniform-rate models, which is not the focus of our paper). The two papers differ in several important ways. (i) The two works make orthogonal improvements over prior convergence results for masked diffusion models. \cite{dmitriev2026discrete} derives a convergence rate of $\calO(\mathcal{D}/\epsilon)$ under the KL divergence, where $\mathcal{D}\le d\log S$ is a distribution-dependent quantity, improving previous bounds in the dependence on $S$ and potentially $d$. In contrast, our analysis focuses on Euler discretization under the total variation (TV) metric for general distributions, improving prior results by a factor of $\calO(1/\sqrt{\epsilon})$. Moreover, the TV metric allows us to start from the all-mask singleton distribution (as is typical in practice) without resorting to a surrogate initialization. (ii) We establish the first lower bound for masked diffusion models, which matches our upper bound in its dependence on $d$ and $\epsilon$, whereas \cite{dmitriev2026discrete} does not provide a lower bound for masked diffusion models. (iii) We also analyze the recently introduced, highly efficient masked diffusion sampler FHS, and show that it attains provable $\epsilon$-accuracy with the best-known sampling complexity of finite $d$ steps for masked diffusion models. This sampler is not considered in \cite{dmitriev2026discrete}.


\section{Preliminaries of Discrete Diffusion Models}
\label{sec:prelim}

As in standard diffusion models, discrete diffusion models comprise a forward noising process and a reverse generation process. In this section, we describe two common frameworks used to model discrete diffusion dynamics. 

\subsection{CTMC Forward Process}

Discrete diffusion models can be modeled as continuous-time Markov chains (CTMCs) defined over the {\em discrete} data space $[S]^d$. Here $d$ is the number of tokens and each token is drawn from a vocabulary of size $S$ \citep{campbell2022discrete}.
In the forward process, the initial data is denoted by $x_0 \in [S]^d$, with corresponding probability mass function (p.m.f.) $q_0$. Let $R_t \in \mathbb{R}^{S^d \times S^d}$ be the rate matrix of the forward process. For states $x,y \in [S]^d$, $R_t(x,y)$ specifies the instantaneous rate of transition from $x$ to $y$ at time $t$. The conditional distribution of $y$ at $t+\Delta t$ given $x$ at $t$ satisfies
\begin{equation*}
    q_{t+\Delta t|t}(y|x) = \ind{y=x} + R_t(x,y)\Delta t + o(\Delta t)
\end{equation*}
where $\ind{\cdot}$ is the indicator function. For a valid CTMC, $R_t(x,y)\geq 0$ whenever $x\neq y$, and $\sum_{y}R_t(x,y)=0$. 

To simplify computation for large $S$ and $d$, it is common to assume that tokens evolve {\em independently} and {\em homogeneously} across dimensions \citep{campbell2022discrete,lou2024entropy}. Under this assumption, the forward conditional distribution factorizes as
\begin{equation} \label{eq:forward-factorize}
    q_{t|0}(x_t|x_0) = \prod_{i=1}^d q_{t|0}^i(x_t^i|x_0^i),
\end{equation}
where $q_{t|0}^i$ is the conditional for the $i$-th token. Equivalently, if we define the corresponding token-level rate matrix by $R_t^{tok}\in\mathbb{R}^{S\times S}$, then \cite{campbell2022discrete} shows that
\begin{equation}\label{eq:def_forward_rate}
    R_t(x,y) = \begin{cases}
        R_t^{tok}(x^i, y^i), & \text{if }\Ham{x,y}=1, \\
        0, & \text{otherwise},
    \end{cases}
\end{equation}
where $\Ham{x,y}$ is the Hamming distance between $x$ and $y$.  
Following \cite{campbell2022discrete}, we set $R_t^{tok} = \beta_t R_{\text{base}}$ with a noise schedule $\beta_t>0$. In this paper, we focus on the constant schedule $\beta_t \equiv 1$, as in prior works \citep{ren2025stoc-int,zhang2025conv-disc,liang2025sampler,liang2025absorb}. 

In the literature of CTMC framework, 
the \textbf{absorbing-rate} (corresponding to masked diffusion model) matrix has attracted special attention \citep{austin2021discrete,lou2024entropy}. 
Define one vocabulary word as $\mask \in [S]$. 
The rate matrix is given by
\begin{equation}
\label{def:rate_absorb}
    R_{\text{base}} := \mathbf{1}_S e_\mask^\T - I_S,
\end{equation}
where $e_i$ is a unit vector with only the $i$-th element being 1. Then, $q_T \approx \bm{\delta}_{\mask^d}$, a singleton at the all-mask state.

\subsection{CTMC Reverse Process and the Euler Method}

The reverse process under the CTMC framework is defined as the exact time reversal of the forward CTMC, with initial distribution $\rvec{q}_0 = q_T$ \citep{kelly2011reverse,campbell2022discrete}. In other words, $\rvec{q}_t = q_{T-t}$ for all $t\in[0,T]$. By \cite[Proposition~1]{campbell2022discrete}, the reverse process is also a CTMC with the rate matrix given by
\begin{equation}\label{eq:def_rev_proc}
    \rvec{R}_t(x,y) = R_{T-t}(y,x)\frac{q_{T-t}(y)}{q_{T-t}(x)}, \quad x\neq y,
\end{equation}
and $\rvec{R}_t(x,x) = -\sum_{y\neq x}\rvec{R}_t(x,y)$.  
In practice, to avoid instability near $t=0$, one often employs {\em early-stopping} by truncating the reverse chain at $t=T-\delta$ for some small $\delta>0$. 
Also, since the ratio $\frac{q_{t}(y)}{q_{t}(x)}$ (a.k.a., the \textbf{concrete score function}) is intractable, one typically train a neural-network to estimate it. Let $s_{t}(y,x)$ be such an estimator. 
One typical loss is the score-entropy (SE) loss \citep{lou2024entropy}:
\begin{equation}
\label{eq:def_score_entropy}
    \mathcal{L}_{SE}(s;t) = \mathbb{E}_{x_t \sim q_t} \sum_{y:y\neq x_t} R_{t}(y,x_t) \cdot \\
    \brc{s_t(y,x) - \frac{q_t(y)}{q_t(x)} -\frac{q_t(y)}{q_t(x)}\log\frac{s_t(y,x)}{q_t(y) / q_t(x)}}.
\end{equation}
Let $q_t$ be the marginal p.m.f. at time $t \in [0,T-\delta]$ in the sampling process. Since $q_T$ is unavailable, the sampling process starts with $p_0$ to be the stationary distribution of the forward CTMC. Then, the continuous-path is discretized for practical algorithms. Let $\{t_k\}_{k\in [N]}$ be the discretization points on which $s_{T-t_k}$ is accessible, where $t_0 = 0$ and $t_N = T-\delta$. Let $\eta_k := t_{k+1}-t_k$ be the step-sizes. Define the estimated reverse rate as 
\begin{equation} \label{eq:def_est_reverse_rate}
    \hat{R}_{t_k}(x,y) := R_{T-t_k}(y,x) s_{T-t_k}(y,x).
\end{equation}
To achieve efficient sampling from the continuous-time process, one commonly used sampler is the \textbf{Euler method} \citep{lou2024entropy,nisonoff2025dcfg}.\footnote{Our results for the Euler method in this paper naturally extend to the vanilla $\tau$-leaping and Tweedie $\tau$-leaping using the techniques in \citep{liang2025sampler}.}
For each $k=0,\dots,N-1$, the next-token on the $i$-th index is given as
\begin{equation} \label{eq:def_euler}
x^i_{t_{k+1}} =
\begin{cases}
    a, & \text{w.p.}~\hat{R}_{k}^i(x_{t_k}^i,a) (t_{k+1}-t_k),~\forall a \neq x^i_{t_{k}}\\
    x^i_{t_{k}}, & \text{w.p.}~1 + \hat{R}_{k}^i(x_{t_k}^i,x_{t_k}^i) (t_{k+1}-t_k)
\end{cases},
\end{equation}
where $\hat{R}_k^i(x_{t_k}^i,a) := \hat{R}_{t_k}(x_{t_k}, x_{t_k}^{-i} \oplus_i a)~(\text{where}~a \neq x_{t_k}^i)$ is the token-wise rate, where $x^{-i} \oplus_i a$ denotes the vector obtained by replacing the $i$-th entry of $x$ with token $a$. 

\subsection{D3PM and MDM}

Apart from the continuous-time Markov chain (CTMC) framework, discrete diffusion models can also be modeled as in Discrete Denoising Diffusion Probabilistic Models (D3PMs). 
In the \textbf{forward process}, still assuming that each dimension propagates independently as in \eqref{eq:forward-factorize}, the (token-wise) transition probability is represented as 
\begin{equation}
\label{eq: d3pm forward}
q^i_{t|0}(x_t^i|x_0^i) = \mathrm{Cat}(\bar{Q}_t^\T e_{x_0^i}),
\end{equation}
where $\mathrm{Cat}(\mathbf{p})$ represents a categorical distribution with parameter $\mathbf{p}$, 
and $\bar{Q}_t$ is the transition probability matrix. 
Specifically, define the \textbf{mask} transition matrix as
\begin{equation} \label{eq:d3pm-mask-Q}
    \bar{Q}_t = \alpha_t I + (1-\alpha_t) \mathbf{1}_S e_{\mask}^T.
\end{equation}
Here $\alpha_t$ is some pre-defined noise schedule, and in this paper we adopt a typical choice that $\alpha_t = e^{-t}$.



Following \cite{ho2020ddpm,austin2021discrete}, a so-called ``bridge distribution'' $q_{s|t,0}$ for $s < t$ is particularly useful.
Indeed, under the mask transition in \eqref{eq:d3pm-mask-Q}, one can show that \citep{sahoo2024mdlm,zheng2025fhs}
\begin{equation*}
    q(x_s^i|x_t^i,x_0^i)= 
    \begin{cases}
        \mathrm{Cat}(e_{x_t^i}), & \text{if } x_t^i\neq \mask \\
        \mathrm{Cat}\Big(\frac{(1-\alpha_s)e_{\mask} + (\alpha_s - \alpha_t)(e_{x_0^i})}{1-\alpha_t}\Big), & \text{if } x_t^i= \mask
    \end{cases}.
\end{equation*}
Thus, in the sampling process of MDMs with discretized time grid $\{t_k\}_k$, a sample $x_{t_{k-1}}$ is obtained token-wisely with the sampling probabilities $q(x_{t_{k-1}}^i|x_{t_k}) = \sum_{x_0^i} q(x_{t_{k-1}}^i|x_{t_k}^i,x_0^i) q(x_0^i|x_{t_k})$.
While in practice, in order to sample $x_{t_{k-1}}^i$ in MDMs, one typical way is to train a predictive model $\mu^i_\theta(x_t,t)$ to estimate the posterior conditional distribution $q(x_0^i|x_{t_k})$. Then, given the discretized time $t_{k}$'s and the current state $x_{t_k}$, the algorithm replace the $e_{x_0}^i$ above with $e_{\hat{x}_0}^i$ sampled from the predictive model $\mu_\theta^i(x_{t_k},t_k)$ at each step. 

One typical loss function for training $\mu_\theta$ is \textit{negative evidence lower bound} (NELBO) of the parameterized model \citep{zheng2025fhs}, given as
\begin{equation}
\label{eq:def of nelbo}
\mathcal{L}_\infty(x_0)
= \int_0^1 \frac{\alpha_t'}{1-\alpha_t}
\; \mathbb{E}_{q_{t|0}(x_t \mid x_0)} \left[
\sum_{l:\, x_t^{l}=\mask} e_{x_0^l}^\top
\log \mu_\theta^{l}(x_t,t)
\right] dt,
\end{equation}
where $\alpha_t' = \d \alpha_t/\d t = -e^{-t}$. 
Interestingly, there is a close relationship \citep{zheng2025fhs} between the predictive model $\mu_\theta$ in MDMs and the score estimator $s_t(y,x)$ in the absorbing-rate CTMC:
\begin{equation}
    \label{eq:equivalence-between-score-and-mu}
    s_t(\cdot,x)
= \frac{\alpha_t}{1-\alpha_t}\,\mu_\theta(x,t)[\cdot].
\end{equation}

Indeed, with $\alpha_t = e^{-t}$, 
one can show that the forward process of such an MDM is equivalent to that of the time-homogeneous absorbing-rate CTMC. Therefore, such MDM shares the same reverse process with the absorbing-rate CTMC. 

This relationship can also be generalized from between estimators into between true values, i.e., concrete score $\frac{q_t\!\left(x_t^1,\cdots,\hat{x}_t^i,\cdots,x_t^d\right)}{q_t\!\left(x_t^1,\cdots,x_t^i,\cdots,x_t^d\right)}$ and true token-wise reverse conditional distribution $q_0\!\left(\hat{x}_t^i | x_t^{\mathrm{UM}}\right)$ \citep{ou2025absorb}:
\begin{equation}
\label{eq: equivalence-between-true-score-and-clean-distribution}
\frac{q_t\!\left(x_t^1,\cdots,\hat{x}_t^i,\cdots,x_t^d\right)}
     {q_t\!\left(x_t^1,\cdots,x_t^i,\cdots,x_t^d\right)}
=
\frac{\alpha_t}{1-\alpha_t}
\,
q_0^i\!\left(\hat{x}_t^i | x_t^{UM}\right),
\end{equation}
where $x_t^{UM}$ denotes the collection of unmasked coordinates. Given such a close relationship, we will use ``MDMs'' and ``absorbing-rate diffusion models'' interchangeably. 

\subsection{First Hitting Sampler (FHS)}

First Hitting Sampler (FHS), as referenced in \Cref{alg:first-hitting-sampling}, is a sampling algorithm specifically designed for MDMs, which was first proposed in \citep{zheng2025fhs}. 
Instead of employing a pre-defined discretized time-grid as seen in previous samplers (e.g., $\tau$-leaping and the Euler's method),
FHS simulates the sampling process in MDMs 
by directly characterizing each individual unmasking event. 
As a result, FHS avoids unnecessary intermediate steps and requires exactly 
$d$ sampling steps to generate a complete sequence, making it a substantially more efficient sampler. 

Despite such a significant empirical improvement, there still lacks theoretical characterization of the convergence error of FHS, i.e., how score estimation error influences the convergence rate of FHS. Furthermore, it also remains unclear whether the minimizer of the training objective (NELBO) for MDMs coincides with the minimizer of the score entropy loss. To address these open questions and provide a principled understanding of FHS, we will conduct analysis on FHS in \Cref{sec: convergence rate fhs}.




\begin{algorithm2e}
\caption{First Hitting Sampler (FHS) \citep{zheng2025fhs}}
\label{alg:first-hitting-sampling}


\KwIn{
noise schedule $\alpha_t = e^{-t}$ with inverse $\alpha^{-1}(u) = -\log u$,
pretrained masked diffusion model $\mu_\theta$}
\BlankLine
\hrule
\BlankLine

$\mathbf{x}_d \gets [\mask,\mask,\ldots,\mask]$\;
$\alpha_{\tau_d} \gets 0$\;

\For{$n \gets d$ \KwTo $1$}{
  Sample $u_n \sim \mathrm{Uniform}([0,1])$\;
  
  $\tau_{n-1} \gets \alpha^{-1}\!\bigl(1 - u_n^{1/n}(1-\alpha_{\tau_n})\bigr)$\;
  
  Randomly select $l$ uniformly from $\{\, i : \mathbf{x}_n^{(i)}=\mask \,\}$\;
  
  $\mathbf{x}_{n-1} \gets \mathbf{x}_n$\;
  
  Sample $z \sim \mathrm{Cat}\!\bigl(\mu_\theta^l(\mathbf{x}_n,\tau_{n-1})\bigr)$\;
  
  $\mathbf{x}_{n-1}^{(l)} \gets z$\;
}

\KwOut{$\mathbf{x}_0$}
\end{algorithm2e}

\subsection{Key Notations}

Let $x^i~(1\leq i \leq d)$ denote the $i$-th element of a vector $x \in [S]^d$ and $x^{-i} \in [S]^{d-1}$ denote the $i$-th element removed.
Define $\Ham{x,y}$ as the Hamming distance between two vectors $x$ and $y$.
For a positive integer $n$, $[n] := \{1,\dots,n\}$. 
Write $\bm{1}_S$ as a vector of length $S$ that contains all 1's, and $I_S$ as an identity matrix of size $S \times S$.

\section{Convergence Guarantees for the Euler Method}
\label{sec:euler}

In this section, we present our improved upper error bounds and the first convergence lower bound for the Euler method.
In particular, differently from most existing theoretical results (e.g., \cite{ren2025stoc-int,liang2025absorb}), our results are directly characterized by TV distance without providing an upper bound in KL divergence followed by invoking the Pinsker's inequality. As a result, our approach yields tighter bounds on the number of steps and improves other parameter dependencies as well.

\subsection{General Decomposition in Total Variation}

The following theorem shows the decomposition of the total-variation error incurred by a mismatched reverse process under general reverse rates. It will later be instantiated to obtain an improved parameter dependence.

\begin{theorem} \label{thm:gen_conv_tv}
Recall that the reverse process $p_t$ also follows a CTMC. Denote its initial distribution by $p_0$ and the rate by $\hat{R}_t$. Then,
\begin{equation*}
\TV{\rvec{q}_{T-\delta}}{p_{T-\delta}} \leq \TV{\rvec{q}_0}{p_0} +\sum_{k=0}^{N-1} \int_{t_k}^{t_{k+1}} \E_{x_t \sim \rvec{q}_t} \sum_{y: y \neq x_t} \abs{ \hat{R}_t(x_t,y) - \rvec{R}_t(x_t,y) } \d t.
\end{equation*}
\end{theorem}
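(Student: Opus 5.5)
We compare the two forward Kolmogorov equations along the true reverse trajectory. Write the evolutions as row vectors. The true reverse marginal satisfies $\partial_t \rvec{q}_t = \rvec{q}_t \rvec{R}_t$ and the sampling marginal satisfies $\partial_t p_t = p_t \hat{R}_t$, where $\hat{R}_t$ is piecewise constant on each $[t_k,t_{k+1})$. Let $P_{s,t}$ denote the transition kernel of the sampling CTMC from time $s$ to time $t$; it is a stochastic matrix.

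\textbf{Step 1 (Duhamel representation).} Consider $\rvec{q}_t - p_t$. The map $s \mapsto \rvec{q}_s P_{s,t}$ has derivative
\[
\rvec{q}_s \rvec{R}_s P_{s,t} - \rvec{q}_s \hat{R}_s P_{s,t},
\]
using $\partial_s P_{s,t} = -\hat{R}_s P_{s,t}$. Integrating from $0$ to $t=T-\delta$ gives
\[
\rvec{q}_{T-\delta} - p_{T-\delta} = (\rvec{q}_0 - p_0)P_{0,T-\delta} + \int_0^{T-\delta} \rvec{q}_s(\rvec{R}_s - \hat{R}_s)P_{s,T-\delta}\,\d s .
\]

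\textbf{Step 2 (contraction).} Stochastic matrices are $\ell_1$-contractions on row vectors, so $\|vP\|_1\le\|v\|_1$. Taking half the $\ell_1$ norm bounds the first term by $\TV{\rvec{q}_0}{p_0}$. The integral term is at most
\[
\tfrac12\int \|\rvec{q}_s(\rvec{R}_s-\hat{R}_s)\|_1\,\d s .
\]

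\textbf{Step 3 (rate-difference bound).} Set $D_s=\hat R_s-\rvec R_s$. Then
\[
\|\rvec{q}_s D_s\|_1 \le \sum_x \rvec{q}_s(x)\sum_y |D_s(x,y)| .
\]
Since both rate matrices have zero row sums, the diagonal entry satisfies $|D_s(x,x)| = |\sum_{y\ne x}D_s(x,y)| \le \sum_{y\neq x}|D_s(x,y)|$. Hence
\[
\sum_y|D_s(x,y)|\le 2\sum_{y\ne x}|D_s(x,y)|,
\]
and the factor $2$ cancels the $\tfrac12$. Splitting the integral over $[t_k,t_{k+1}]$ yields exactly the claimed bound.

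\textbf{Main obstacle: regularity.} The state space is finite, but $\rvec R_t$ may blow up as $t\to T$; this is avoided because we stop at $T-\delta$. The sampling rate $\hat R_t$ is only piecewise constant, so $P_{s,t}$ is a product of matrix exponentials and is differentiable in $s$ except at the grid points. The Duhamel identity should therefore be verified on each subinterval and chained together using the Markov property, $P_{s,t}=P_{s,t_{k+1}}P_{t_{k+1},t}$.

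One further point needs care. If the estimator yields negative diagonal entries with $1+\hat R\eta<0$, the Euler step is not a genuine CTMC. The theorem, however, assumes that $p_t$ is a CTMC with rate $\hat R_t$, so we only need $\hat R_t$ to have nonnegative off-diagonal entries, which makes each $P_{s,t}$ stochastic.
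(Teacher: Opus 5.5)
Your proof is correct, and it takes a genuinely different route from the paper's.

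\textbf{The paper's route.} It differentiates $t\mapsto\TV{\rvec{q}_t}{p_t}$ directly, in three moves:
\begin{itemize}
\item It writes the distance as $\sum_{x:\rvec{q}_t>p_t}(\rvec{q}_t(x)-p_t(x))$.
\item It proves a discrete substitute for Reynolds' transport theorem (\Cref{lem:tv_boundary_vanish}): the term containing $\partial_t\ind{\rvec{q}_t(x)>p_t(x)}$ vanishes.
\item It inserts both Kolmogorov forward equations and splits the result into the flow from $\{\rvec{q}_t\le p_t\}$ into $\{\rvec{q}_t>p_t\}$ and the reverse flow ($T_1$, $T_2$). On each set it uses $\hat{R}_t(x,y)\ge 0$ off the diagonal, together with the sign of $\rvec{q}_t(x)-p_t(x)$, to replace $p_t(x)$ by $\rvec{q}_t(x)$.
\end{itemize}

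\textbf{Your route.} You use variation of constants, propagating the defect $\rvec{q}_s(\rvec{R}_s-\hat{R}_s)$ through the sampling kernel $P_{s,T-\delta}$. The only inputs are then the $\ell_1$-contractivity of stochastic matrices and the zero-row-sum bound $|D_s(x,x)|\le\sum_{y\neq x}|D_s(x,y)|$. That bound exactly absorbs the factor $\tfrac12$, so your constants coincide with the paper's.

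\textbf{What each buys.} Your argument is linear, and it sidesteps the delicate point in the paper's proof. The TV distance is a maximum of linear functionals, so it need not be differentiable at times where some $\rvec{q}_t(x)-p_t(x)$ changes sign. \Cref{lem:tv_boundary_vanish} only treats states with $\rvec{q}_t(x)\neq p_t(x)$, so the paper implicitly relies on an absolute-continuity or one-sided-derivative argument at those times. You need only the product rule on finitely many subintervals. Your formula also keeps the propagator $P_{s,T-\delta}$ explicit, so any additional contraction of the sampling chain could be exploited.

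The paper's route, for its part, gives a pointwise-in-time differential inequality. Its intermediate bound counts only boundary-crossing pairs $(x,y)$, mirroring the continuous-space argument it adapts. After the final relaxation to all $y\neq x$, both routes yield the same bound.
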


\Cref{thm:gen_conv_tv} shows that the final TV error between the marginal distributions of the true and mismatched processes can be upper-bounded by the sum of (1) the initial TV error and (2) the error accumulated along the sampling path from mismatched rate matrices.
Compared with prior KL-based analyses \citep{ren2025stoc-int,liang2025sampler}, the TV decomposition yields an accumulation term characterized not by a Bregman divergence but by the absolute difference of the rate matrices. Such a dependence on the absolute rate difference was also seen in \cite{campbell2022discrete}.

The full proof of \Cref{thm:gen_conv_tv} is in \Cref{app:thm:gen_conv_tv_proof}. Our argument starts by decomposing the TV distance into the initial error and the integral of its rate of change over time. While Reynolds’ Transport Theorem can be used to handle this rate of change in continuous spaces \citep[Appendix~K]{li2025unified}, it cannot be directly applied in the discrete setting due to the absence of directional vectors. To address this, we develop a discrete analogue. We first prove a result showing that the rate of change in the marginal difference for the boundary terms vanishes (\Cref{lem:tv_boundary_vanish}), leaving only probability derivatives. Then, these are expressed through the Kolmogorov forward equations and rearranged, yielding a bound in terms of the absolute differences between the true and mismatched rate matrices.

\subsection{Convergence Upper Bound for the Euler Method}

In this subsection, we characterize the convergence rate with explicit parameter dependencies for the Euler method. Our analysis is made only under the following estimation error.

\begin{assumption}
\label{ass:score_tv}
    The score estimation error satisfies that $\calL_{TV}(s) \leq \sqrt{\eps_{\text{score}}}$, where
    \begin{equation*}
        \calL_{TV}(s) := \sum_{k=0}^{N-1} (t_{k+1}-t_k) \E_{x_{k} \sim q_{T-t_k}} \bigg[
        \sum_{y: y \neq x_{k}} R_{T-t_k}(y,x_{k}) \abs{ s_{T-t_k}(y,x_{k}) - \frac{q_{T-t_k}(y)}{q_{T-t_k}(x_{k})} } \bigg] .
    \end{equation*}
\end{assumption}

Here we assume that the score estimation is accurate in $L_1$ (i.e., the expected sum of absolute differences between the rate matrices), a condition that naturally arises from our direct TV-based analysis. A similar criterion also appears in the TV-based analysis of \cite{campbell2022discrete}.

While both \Cref{ass:score_tv} and \cite[Theorem~1]{campbell2022discrete} capture the estimation error in terms of the sum of absolute rate differences, there is a key difference between the two. \cite[Theorem~1]{campbell2022discrete} requires that the sum of difference is small for \textit{all} $x$ \textit{and} time $t$, which results in a loose bound around $t \approx 0$ where the score function diverges for all $x$. In comparison, \Cref{ass:score_tv} only requires the sum \textit{average} over both $x$ and $t$ to be small. 

An alternative objective commonly used in the literature is the score entropy loss \citep{lou2024entropy}, which has also been adopted in prior theoretical analyses \citep{zhang2025conv-disc,ren2025stoc-int,liang2025sampler}.
In \Cref{prop:est-relax}, we connect the two estimation errors and show that our \Cref{ass:score_tv} is more relaxed than the score entropy error.

\begin{proposition} \label{prop:est-relax}
Recall the score entropy loss at time $t$ in \eqref{eq:def_score_entropy}.
If $\hat{R}_{t}(x,y) - \rvec{R}_{t}(x,y) = o(1),~\forall t,x,y$ and $\sup_{t \in (\delta,T)} \mathcal{L}_{SE}(s_t;t) \lesssim \eps_{\text{score}}'$, choosing $T \asymp \log(d/\sqrt{\eps})$, $\delta \asymp \frac{\sqrt{\eps_{\text{score}}'}}{d}$, and $\eta_k = \kappa \min\cbrc{1, T-t_k}$, we have
\[ \mathcal{L}_{TV}(s) \lesssim \sqrt{d S \cdot \eps_{\text{score}}'}. \]
\end{proposition}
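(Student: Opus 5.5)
The plan is to compare the $L_1$ integrand with the score-entropy integrand pointwise, then sum over the time grid. Fix a time $t=T-t_k$ and a state $x$, and write $r:=q_t(y)/q_t(x)$ and $s:=s_t(y,x)$. The score-entropy summand is the Bregman divergence of $\phi(u)=u\log u$, namely $D(r,s)=s-r-r\log(s/r)\ge 0$. It is quadratic near the diagonal: for $|s-r|\lesssim r$ we have $D(r,s)\asymp (s-r)^2/r$. The hypothesis $\hat{R}_t-\rvec{R}_t=o(1)$ is what lets us stay in this quadratic regime (up to constants) uniformly over $t,x,y$. Then Cauchy--Schwarz with weights $R_t(y,x)$ gives
\begin{equation*}
\E_{x\sim q_t}\sum_{y\ne x} R_t(y,x)\,|s-r| \le \Big(\E_{x}\sum_{y\ne x}R_t(y,x)\,r\Big)^{1/2}\Big(\E_x\sum_{y\ne x}R_t(y,x)\frac{(s-r)^2}{r}\Big)^{1/2} \lesssim \sqrt{\Lambda_t\,\mathcal{L}_{SE}(s_t;t)},
\end{equation*}
where $\Lambda_t:=\E_{x\sim q_t}\sum_{y\ne x}R_t(y,x)\,q_t(y)/q_t(x)$ is the expected total true reverse jump rate at time $t$.

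The second step bounds $\Lambda_t$ for the absorbing rate. A reverse move only unmasks a masked coordinate, and by \eqref{eq: equivalence-between-true-score-and-clean-distribution} the ratios for a fixed masked coordinate sum over the $S-1$ tokens to $\alpha_t/(1-\alpha_t)=1/(e^t-1)$. A coordinate is masked with probability $1-e^{-t}$, so
\begin{equation*}
\Lambda_t = d\,(1-e^{-t})\cdot\frac{e^{-t}}{1-e^{-t}} = d e^{-t} \le d\min\{1,1/t\}\cdot O(1).
\end{equation*}
Crudely this gives at most $dS$. A factor $S$ would only enter if the bound is done per entry, for instance by bounding the number of neighbours $y$ by $S-1$ per coordinate; keeping this crude form matches the stated $\sqrt{dS}$.

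The third step sums over the grid. Using $\mathcal{L}_{SE}\lesssim\eps'_{\text{score}}$,
\begin{equation*}
\mathcal{L}_{TV}(s)\lesssim \sqrt{\eps'_{\text{score}}}\sum_k \eta_k\sqrt{\Lambda_{T-t_k}}.
\end{equation*}
Note that $\sqrt{dS\,\eps'_{\text{score}}}\cdot\sum_k\eta_k$ would cost a factor $T$, so we should instead apply Cauchy--Schwarz to the time sum as well:
\begin{equation*}
\sum_k\eta_k\sqrt{\Lambda\,\mathcal{L}_{SE}}\le\Big(\sum_k\eta_k\Lambda\Big)^{1/2}\Big(\sum_k\eta_k\mathcal{L}_{SE}\Big)^{1/2}.
\end{equation*}
With $\eta_k=\kappa\min\{1,T-t_k\}$, $\sum_k\eta_k\Lambda_{T-t_k}\lesssim d\int_\delta^T e^{-u}\,\d u\lesssim d$. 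I expect the intended reading is that $\sum_k\eta_k\mathcal{L}_{SE}$ is the integrated score-entropy error of order $S\eps'_{\text{score}}$ (or $\eps'_{\text{score}}$), which yields $\sqrt{dS\,\eps'_{\text{score}}}$ up to logarithms absorbed by $T\asymp\log(d/\sqrt\eps)$. The choice of $\delta$ keeps the near-zero interval, where $\Lambda_t$ could be largest, contributing only $O(\sqrt{\eps'_{\text{score}}})$.

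The main obstacle is the first step: turning the qualitative $o(1)$ rate-closeness into the quantitative inequality $|s-r|^2/r\lesssim D(r,s)$. The ratio $r$ can be tiny, so additive $o(1)$ closeness does not imply relative closeness. My first attempt avoids this: split on $s\le 2r$, where the quadratic bound holds, and $s>2r$, where $D(r,s)\gtrsim s-r$ directly gives $|s-r|\lesssim D(r,s)$, a linear term. The linear part then contributes $\sum_k\eta_k\mathcal{L}_{SE}\lesssim T\eps'_{\text{score}}$, which is dominated by $\sqrt{dS\,\eps'_{\text{score}}}$ whenever $\eps'_{\text{score}}$ is small.
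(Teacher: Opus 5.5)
Your proof is correct, and it takes a genuinely different and sharper route than the paper's. The paper bounds each grid time by unweighted Jensen over the at most $d(S-1)$ neighbours of $x$. This converts the $L_1$ error into an $L_2$ error at the cost of a factor $d(S-1)$. It then writes both the true and estimated scores as $\frac{1}{e^t-1}$ times a conditional p.m.f., Taylor-expands the score entropy to second order using the $o(1)$ hypothesis, and discards the weight $1/q_0$ via $q_0<1$. The resulting per-time bound $\sqrt{d(S-1)}\max\{1,(T-t_k)^{-1}\}\sqrt{\varepsilon'_{\text{score}}}$ blows up as $T-t_k\to0$. The paper therefore needs the schedule $\eta_k=\kappa\min\{1,T-t_k\}$ and its Lemma~1, and ends with $\sqrt{dS}\,(T+\log\delta^{-1})\sqrt{\varepsilon'_{\text{score}}}$, with the logarithms hidden in $\lesssim$.

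Your Cauchy--Schwarz weighted by $R_t(y,x)\,q_t(y)/q_t(x)$ is the natural chi-square pairing for the Bregman divergence $s-r-r\log(s/r)$. Your computation $\Lambda_t=de^{-t}$ holds in general as $\Lambda_t\le de^{-t}$, because $\Lambda_t=\E_{y\sim q_t}\sum_{x\neq y}R_t(y,x)$ is the expected number of unmasked coordinates. This removes both the factor $S$ and the small-$t$ singularity. Your split at $s=2r$ makes the comparison with the score entropy rigorous without the $o(1)$ hypothesis. As you note, that hypothesis gives no relative closeness, and the paper uses it only heuristically.

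Two simplifications are available. First, the Cauchy--Schwarz over the time sum is unnecessary and slightly lossy. With the stated sup bound you can sum directly: $e^{-u/2}$ is decreasing and $T-t_k$ is the right endpoint of each step, so $\sum_k\eta_k\sqrt{d\,e^{-(T-t_k)}}\le\sqrt{d}\int_\delta^T e^{-u/2}\,\mathrm{d}u\le 2\sqrt{d}$. This gives $\mathcal{L}_{TV}(s)\lesssim\sqrt{d\,\varepsilon'_{\text{score}}}+T\varepsilon'_{\text{score}}$, with no logarithm in the leading term. Second, you need not reinterpret the hypothesis as an integrated score-entropy bound. Even your time-Cauchy--Schwarz version yields $\sqrt{dT\varepsilon'_{\text{score}}}$ under the sup hypothesis, which is within the logarithmic slack the paper itself uses. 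Likewise, the choice of $\delta$ and the step-size schedule play no role in your route, whereas in the paper they are exactly what controls the $1/(T-t_k)$ blow-up.
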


The following theorem provides our improved convergence rate for the absorbing-rate Euler method. 

\begin{theorem}
\label{thm:conv_tv_absorb}
Suppose that \Cref{ass:score_tv} hold. Also suppose that the number of $\mask$ tokens in the data $m(x_0) \leq m_0 = \calO(1)$ almost surely. Using the absorbing-rate matrix in \eqref{def:rate_absorb}, if we initialize with $p_0 = \bm{\delta}_{\mask^d}$, sample using the Euler method, and choose the step-sizes as $\eta_k = \kappa \min\cbrc{1, T-t_k}$, then we have
\begin{equation*}
    \TV{q_0}{p_{T-\delta}} \lesssim d e^{-T} 
    +\sqrt{\eps_{\text{score}}} + \kappa d S (1 - e^{-T} +\log \delta^{-1}) + d \delta.
\end{equation*}
Thus, if $\delta \asymp \sqrt{\eps} / d$, $T \asymp \log(d/\sqrt{\eps})$, and $\kappa \asymp \sqrt{\eps} / (d S \log\delta^{-1})$, we achieve $\TV{q_0}{p_{T-\delta}} \lesssim \sqrt{\eps}$ with $N = \widetilde{\calO}\brc{d S / \sqrt{\eps}}$ sampling steps.
\end{theorem}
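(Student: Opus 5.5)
The plan is to split $\TV{q_0}{p_{T-\delta}}$ into an early-stopping term, an initialization term, and a path-accumulation term, and to bound each one separately. By the triangle inequality,
\[
\TV{q_0}{p_{T-\delta}} \le \TV{q_0}{q_\delta} + \TV{\rvec{q}_{T-\delta}}{p_{T-\delta}},
\]
using $\rvec{q}_{T-\delta}=q_\delta$.

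\textbf{Early-stopping term.} Under the absorbing forward process each token is masked independently with probability $1-e^{-\delta}\le\delta$. Couple $x_0$ and $x_\delta$ through the forward process. Then
\[
\TV{q_0}{q_\delta}\le \Pr(x_\delta\neq x_0)\le d\delta,
\]
which is the $d\delta$ term.

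\textbf{Reverse-process term.} For the second distance I would not apply \Cref{thm:gen_conv_tv} directly, since the Euler sampler is not a CTMC with rate $\hat R_{t_k}$ frozen on each interval. Instead I would use the analogue of its proof for the piecewise process whose rate on $[t_k,t_{k+1})$ is frozen at the state $x_{t_k}$. The alternative is to view the Euler step as an exact one-step kernel and compare it with the CTMC with frozen rate $\hat R_{t_k}(x_{t_k},\cdot)$, which costs a second-order term $O(\eta_k^2\,\text{rate}^2)$ per step. Either way, this yields three pieces:
\begin{enumerate}[label=(\roman*)]
\item the initial error $\TV{q_T}{\bm{\delta}_{\mask^d}}=\Pr(\exists i: x_T^i\neq\mask)\le d e^{-T}$;
\item the estimation error, obtained by inserting the true rate at the grid time. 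The resulting term $\sum_k\eta_k\E_{q_{T-t_k}}\sum_y|\hat R_{t_k}-\rvec R_{t_k}|$ is exactly $\calL_{TV}(s)\le\sqrt{\eps_{\text{score}}}$ by \Cref{ass:score_tv};
\item the discretization error $\sum_k\int_{t_k}^{t_{k+1}}\E\sum_y|\rvec R_t(x_t,y)-\rvec R_{t_k}(x_{t_k},y)|\,\d t$, which has to be bounded by $\kappa dS(1-e^{-T}+\log\delta^{-1})$.
\end{enumerate}

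\textbf{Discretization error (main obstacle).} Using \eqref{eq: equivalence-between-true-score-and-clean-distribution}, the reverse rate at forward time $s=T-t$ from a masked coordinate equals $\frac{e^{-s}}{1-e^{-s}}q_0^i(\cdot\mid x^{UM})$. Its total over masked coordinates is $m(x)\frac{e^{-s}}{1-e^{-s}}$, and $\E[m(x_s)]\approx d(1-e^{-s})$ plus the contribution of at most $m_0$ data masks.

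I would split the discrepancy into two parts. The first is the time-variation of the scalar factor $\frac{e^{-s}}{1-e^{-s}}$ across one step. Its relative change is $O(\eta_k/\min(1,s))=O(\kappa)$, so weighted by the total rate it contributes per unit time about $\kappa d e^{-s}\cdot\frac{1}{\min(1,s)}\cdot(\text{bounded})$. The second is the change of state between $t_k$ and $t$. The state changes with probability at most $\eta_k\times(\text{total rate})\approx\kappa d$-scale. When it changes, the conditional $q_0^i(\cdot\mid x^{UM})$ may change arbitrarily, with $L_1$ difference at most $2$ per coordinate summed over $S$ entries. This gives a factor $S$ (I would bound it crudely by $S$, using that each rate row has at most $S$ entries).

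Integrating $\kappa dS\cdot\frac{e^{-s}}{\min(1,s)}$ over $s\in[\delta,T]$ gives $\kappa dS\big(\log\delta^{-1}+(1-e^{-T})\big)$, splitting into $s\le1$ and $s\ge1$. The delicate point is obtaining a factor $d$ rather than $d^2$: the product of "probability of a jump" and "rate affected" must be controlled so that only the jumped coordinate's effect, plus $O(1)$ rate per other coordinate, counts. I would handle this by conditioning on the jump and using the tower property over the reverse trajectory. The $m_0=\calO(1)$ assumption keeps the data-mask contribution bounded.

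\textbf{Parameter choices.} Substituting $\delta\asymp\sqrt\eps/d$, $T\asymp\log(d/\sqrt\eps)$ and $\kappa\asymp\sqrt\eps/(dS\log\delta^{-1})$ makes each term $\lesssim\sqrt\eps$. With the geometric-then-constant step sizes, the number of steps is $N\approx (T+\log\delta^{-1})/\kappa=\widetilde{\calO}(dS/\sqrt\eps)$.
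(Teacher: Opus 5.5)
Your early-stopping bound $d\delta$, the initialization bound $de^{-T}$, and your identification of the estimation term with $\calL_{TV}(s)$ (after inserting $\rvec{R}_{t_k}(x_{t_k},\cdot)$) are all fine. The gap is the one you flag yourself, and it is not a technicality.

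Because you freeze the sampler at $x_{t_k}$, your discretization term is the expectation of a \emph{pathwise} difference $\sum_y \abs{\rvec{R}_t(x_t,y)-\rvec{R}_{t_k}(x_{t_k},y)}$ along the coupled reverse trajectory. Its state-change part is not of order $d$. Suppose one coordinate is unmasked in $(t_k,t]$. Then for \emph{every} other still-masked coordinate $i$, the true rate moves from $\frac{e^{-s}}{1-e^{-s}}q_0^i(\cdot\mid x_{t_k}^{UM})$ to $\frac{e^{-s}}{1-e^{-s}}q_0^i(\cdot\mid x_t^{UM})$. For correlated data (e.g.\ $q_0$ uniform on constant sequences), all $m-1$ of these conditionals can move by $\Theta(1)$ in $L_1$ at once.

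What your argument actually delivers is therefore (jump probability $\lesssim \kappa\, m$) $\times$ (affected rate $\lesssim m\frac{e^{-s}}{1-e^{-s}}$). After integrating with $m(x_s)\approx d(1-e^{-s})$, this is of order $\kappa d^2$, not $\kappa dS$. Your proposed remedy (``only the jumped coordinate's effect plus $O(1)$ rate per other coordinate'') is itself a count of $d$ coordinates times probability $\kappa d$. Conditioning on the jump and using the tower property produces no cancellation, because you are averaging an absolute value of a pathwise difference.

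The paper never compares rates at two different states. Using the truncated $\tau$-leaping construction of \cite{liang2025sampler} and \eqref{eq:approx_spl_rate}, it treats the sampler as a CTMC whose rate $\hat{R}_t(x,\cdot)$ depends on the current state and is piecewise constant in time. Then $h_t(x)=\sum_{y\neq x}\abs{\hat{R}_t(x,y)-\rvec{R}_t(x,y)}$ is a fixed nonnegative function of the state, and \Cref{thm:gen_conv_tv} applies directly. The discretization error splits into two pieces.
\begin{itemize}
\item[(a)] A \emph{difference of expectations}, $\E_{\rvec{q}_t}h_t-\E_{\rvec{q}_{t_k}}h_t$. \Cref{lem:disc_err_vanish_term} bounds it by $(t-t_k)\,d\,\E_{\rvec{q}_t}h_t$, using the Kolmogorov equation and the fact that the forward masking rate out of any state is at most $d$. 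It is therefore absorbed once $\kappa d\ll 1$.
\item[(b)] The time variation at a \emph{fixed} state, $\sum_y\abs{\rvec{R}_t(x_{t_k},y)-\rvec{R}_{t_k}(x_{t_k},y)}$. Here no conditional $q_0^i(\cdot\mid x^{UM})$ changes; only the time dependence of the score enters. It is bounded via \cite[Lemma~3]{liang2025absorb} by $dS\frac{e^{T-t_{k+1}}}{(e^{T-t_{k+1}}-1)^2}(t-t_k)$, and summing over your step-size schedule gives $\kappa dS(1-e^{-T}+\log\delta^{-1})$.
\end{itemize}

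The idea your proposal is missing is this reduction: handle state dependence as a measure shift, controlled multiplicatively, rather than as a pathwise rate difference. If you keep the literal frozen-at-$x_{t_k}$ model of the Euler step, you need a separate argument for the cross-state term, and none is given.
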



Comparing to prior works, our study features the following improvements. 
\begin{itemize}

\item[(i)] \textbf{(Improving parameter dependence)} \Cref{thm:conv_tv_absorb} provides the \textit{first} guarantee for the \textit{absorbing} discrete diffusion models to achieve $\calO(1/\sqrt{\eps})$ sampling complexity. This improves the dependence of the number of steps on $\eps$ in \cite{liang2025absorb,huang2025mask,conforti2025discrete} by a factor of $\calO(1/\sqrt{\eps})$, while the dependencies on $d$ and $S$ remain to be as small as $\calO(d S)$. 

\item[(ii)] \textbf{(Exact initialization)} While all of \cite{liang2025absorb,huang2025mask,conforti2025discrete} require the algorithm to initialize from a surrogate distribution, here we can safely initialize from $\bm{\delta}_{\mask^d}$, which is the singleton at all-mask state. The underlying reason is that while $\KL{q_T}{\bm{\delta}_{\mask^d}} = \infty$, $\TV{q_T}{\bm{\delta}_{\mask^d}}$ does not diverge due to irregularities of singletons. Thus, building upon \Cref{thm:gen_conv_tv}, the convergence error can still be well-controlled even if starting from $p_0 = \bm{\delta}_{\mask^d}$. 

\item[(iii)] \textbf{(Removing boundedness assumption)} Differently from \cite{liang2025absorb}, we do not require that the score estimates are bounded. Such an assumption can be removed thanks to our direct TV-based analysis. In prior KL-based analyses, the score entropy term involves a logarithmic factor that becomes unstable when the estimated score is too small or too large \citep{ren2025stoc-int,liang2025sampler,zhang2025conv-disc}. In contrast, our TV-based analysis only depends on the absolute difference induced by score estimation, avoiding this instability and removing the need for a bounded score estimate.

\end{itemize}




Next, we study a special case for which a {\em non-early-stopped} result can be obtained. 
In particular, we introduce an extra parameter $\gamma(q_0)$ that characterizes the likelihood of $\mask$ in $q_0$, which first appeared in \cite{liang2025absorb}.
Note that in practice, $\gamma(q_0)$ is typically very small, reflecting the fact that only a small fraction of tokens are masked in the data (aside from a few corrupted entries). 

\begin{corollary}
\label{cor:conv_tv_absorb_nostop}
Define $\gamma(q_0) := \min_{i \in [d]} \frac{q_0^i(\mask|x^{-i})}{\max_{a^i \in [S]: a^i \neq \mask} q_0^i(a^i|x^{-i})} > 0$. Then, under the same set of assumptions in \Cref{thm:conv_tv_absorb}, and choosing constant step-sizes $\eta_k \equiv \kappa$, we have
\[ \TV{q_0}{p_{T}} \lesssim d e^{-T} + \sqrt{\eps_{\text{score}}} + \kappa d S (1-e^{-T} + \gamma(q_0)^{-1/2}). \]
Thus, if we let $T \asymp \log(d/\sqrt{\eps})$ and $\kappa \asymp \sqrt{\eps} / (d S)$, then $\TV{q_0}{p_{T}} \lesssim \sqrt{\eps}$ with $N = \widetilde{\calO}\brc{d S / \sqrt{\eps}}$ sampling steps.
\end{corollary}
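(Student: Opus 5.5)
We will follow the structure of the proof of \Cref{thm:conv_tv_absorb} and modify only the treatment near the end of the reverse process. The starting point is \Cref{thm:gen_conv_tv} with $\delta=0$, applied to the Euler process viewed as a CTMC with piecewise-constant rate $\hat R_{t_k}$ on each $[t_k,t_{k+1})$. This gives three contributions.

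\textbf{Initialization.} Since $p_0=\bm{\delta}_{\mask^d}$, we have $\TV{q_T}{\bm{\delta}_{\mask^d}} = 1-(1-e^{-T})^d \le d e^{-T}$, exactly as before.

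\textbf{Splitting the rate error.} We write $\hat R_{t_k}-\rvec R_t = (\hat R_{t_k}-\rvec R_{t_k}) + (\rvec R_{t_k}-\rvec R_t)$. The first (estimation) term, integrated against $\rvec q_t$ over $[t_k,t_{k+1}]$, must be compared with the same quantity under $\rvec q_{t_k}$, as in \Cref{ass:score_tv}. We plan to reuse the argument from the proof of \Cref{thm:conv_tv_absorb}: within a step, the absorbing forward marginals change by a factor controlled by $\kappa$, so the expectation switch costs only a multiplicative constant. This yields $\lesssim \sqrt{\eps_{\text{score}}}$.

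\textbf{Discretization term.} This is $\sum_k\int_{t_k}^{t_{k+1}} \E_{\rvec q_t}\sum_y |\rvec R_{t_k}(x,y)-\rvec R_t(x,y)|\,\d t$. By \eqref{eq: equivalence-between-true-score-and-clean-distribution}, for a masked coordinate $i$ and a token $a$,
\[
\rvec R_t(x,x^{-i}\oplus_i a)=\frac{e^{-(T-t)}}{1-e^{-(T-t)}}\,q_0^i(a|x^{UM}).
\]
The time dependence therefore factors through $g(s)=e^{-s}/(1-e^{-s})$, with $s=T-t$, and the conditional law $q_0^i(\cdot|x^{UM})$ does not depend on time. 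So the difference equals $|g(T-t_k)-g(T-t)|\,q_0^i(a|x^{UM})$. Summing over $a$ and over masked $i$ gives at most $d\,|g(T-t_k)-g(T-t)|$, up to the factor $S$ kept as in the theorem. For $s\gtrsim 1$ this is $\lesssim \kappa e^{-s}$, which sums to $\kappa d(1-e^{-T})$ as in the theorem.

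\textbf{Main obstacle: the last steps.} Near $s\to 0$ we have $g(s)\sim 1/s$, and with constant step $\kappa$ the integral $\int |g'|\,\kappa\,\d s$ diverges like $\kappa\log(1/\kappa)$, or worse. The theorem handles this by early stopping; here $\gamma(q_0)$ must do the work. The key observation is that under $\rvec q_t$, the probability that coordinate $i$ is still masked at forward time $s$ is not of order $s$ but is bounded below relative to the data: the data itself can contain $\mask$. The concrete score for a masked coordinate is then $q_0^i(a|x^{-i})/(q_0^i(\mask|x^{-i}) + \text{(mass from forward masking)})$, which stays bounded by $\gamma(q_0)^{-1}$ as $s\to 0$.

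We therefore plan to redo the last-step computation with the exact ratio
\[
\frac{q_s(y)}{q_s(x)} = \frac{e^{-s}q_0(\cdot)}{q_0^i(\mask|\cdot)+(1-e^{-s})(\ldots)}.
\]
Its time derivative is bounded by $\min\{s^{-2},\gamma^{-1}\cdot(\ldots)\}$. Weighting by the probability $\approx s+\gamma$ that coordinate $i$ is masked, and integrating the Lipschitz difference over a step of length $\kappa$, should give a per-coordinate contribution $\kappa\int_0^\infty \min\{s^{-2},\gamma^{-2}\}(s+\gamma)\,\d s$. After a Cauchy–Schwarz style balancing, we aim to obtain the claimed $\kappa\gamma(q_0)^{-1/2}$ rather than $\log$. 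Getting the correct exponent $-1/2$ is the delicate part. If a direct bound gives $\gamma^{-1}$ or a logarithm, we will split at the threshold $s_*=\sqrt{\gamma}$ and bound the regions $s<s_*$ and $s>s_*$ separately.

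Multiplying by $dS$ and adding the three pieces yields the claimed bound. The parameter choices then follow directly, with $N=T/\kappa$.
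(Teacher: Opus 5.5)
\textbf{The gap: the final-stretch estimate is never proved.} The decisive step is the discretization error over the final stretch $s=T-t\lesssim 1$ with constant step $\kappa$, and it is never actually established. You state a derivative bound $\min\{s^{-2},\gamma^{-1}\cdot(\ldots)\}$ with an unspecified factor. You then integrate a different quantity, $\min\{s^{-2},\gamma^{-2}\}(s+\gamma)$, and propose to reach $\gamma^{-1/2}$ by an unspecified ``Cauchy--Schwarz balancing''.

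Your fallback (split at $s_*=\sqrt{\gamma}$) presupposes a saturation level $\gamma^{-1}$, which is false pointwise. Take a single coordinate with $q_0(\mask)=\pi_M$, $q_0(a)=\pi_a$, and $a$ the most likely token. Then $\abs{\partial_s\bigl(q_s(a)/q_s(\mask)\bigr)}$ at $s=0$ equals $\pi_a/\pi_M^2\ge\gamma^{-2}$. Splitting a pointwise $\min\{s^{-2},\gamma^{-2}\}$ (at $s_*=\gamma$) only gives $\kappa\gamma^{-1}$.

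The paper avoids this by working with the \emph{expected} term. In \eqref{eq:tv-absorb-nonstop-main} it bounds
$\E_{x\sim\rvec{q}_{t_k}}\sum_{y\neq x}\sup_{t'}\abs{\partial_{t'}\bigl(q_{T-t'}(y)/q_{T-t'}(x)\bigr)}R(y,x)\lesssim dS\min\{e^{s}/(e^{s}-1)^2,\gamma^{-1}\}$, with $s=T-t_{k+1}$ (Lemma~7 of \cite{liang2025absorb}). It then takes $k^*$ to be the first index where $e^{s}/(e^{s}-1)^2\ge\gamma^{-1}$, so that $T-t_{k^*}\asymp\sqrt{\gamma}$. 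The $N-k^*\asymp\sqrt{\gamma}/\kappa$ remaining steps cost $dS\kappa^2\gamma^{-1}$ each, i.e.\ $dS\kappa\gamma^{-1/2}$ in total. The earlier steps cost $dS\kappa^2\sum_{k<k^*}e^{T-t_{k+1}}/(e^{T-t_{k+1}}-1)^2\lesssim dS\kappa(\gamma^{-1/2}+1-e^{-T})$. The exponent $-1/2$ is just $\gamma^{-1}\cdot\sqrt{\gamma}$; no balancing argument is involved.

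\textbf{Your weighting idea is sound, and a logarithm already suffices.} Keeping the weight $\rvec{q}_t(x)$ next to the score derivative is the right mechanism; it is what turns $\gamma^{-2}$ into $\gamma^{-1}$. It is also stronger than you seem to think. Your integrand is $\asymp 1/(s+\gamma)$ on $[0,1]$, so it gives $\kappa(\log\gamma^{-1}+\calO(1))$. Since $\log\gamma^{-1}\le\gamma^{-1/2}$, a logarithm is not an obstacle---it already implies the corollary.

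To make this a proof, bound the pointwise product of weight and derivative rather than multiplying two heuristic factors. In one coordinate, set $r_a(s)=q_s(a)/q_s(\mask)$ and $\sigma=\sum_a r_a$. Then $r_a'=-r_a(1+\sigma)$, $q_s(\mask)\sigma(s)=e^{-s}(1-\pi_M)$, and $q_s(\mask)$ is nondecreasing in $s$. Hence step $k$ contributes at most $\kappa\int_{s_{k+1}}^{s_k}e^{-u}(1+\sigma(u))\,\d u$, with $s_k=T-t_k$. Summing over $k$ and using $\sigma(u)\le\min\{(S-1)\gamma^{-1},(e^u-1)^{-1}\}$ gives $\lesssim\kappa(1+\log(S/\gamma))$. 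For $d>1$ you must additionally control how unmasking of the other coordinates moves the conditional, which the paper outsources to the cited lemma.

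\textbf{Two further corrections.}
First, the factorization $\rvec{R}_t(x,x^{-i}\oplus_i a)=g(T-t)\,q_0^i(a|x^{UM})$ that you use in the bulk comes from \eqref{eq: equivalence-between-true-score-and-clean-distribution}. That identity presupposes no $\mask$ in the data, which is exactly what $\gamma(q_0)>0$ excludes, so the $s\gtrsim 1$ estimate must be redone from the exact ratio.
Second, for the estimation term only the one-sided comparison $\rvec{q}_t(x)\le e^{\kappa d}\rvec{q}_{t_k}(x)$ holds. It follows because, with probability $e^{-(t-t_k)(d-m(x))}$, no unmasked coordinate of $x$ gets masked. This happens to be the direction you need; the paper handles this term via \Cref{lem:disc_err_vanish_term} instead.
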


\subsection{Convergence Lower Bound for the Euler Method}
\label{sec:lower-euler}

Given the strong performance guarantees of the Euler method, an important question is whether these upper bounds are tight, as this directly implies the fundamental theoretical limits of the algorithm. As follows, we provide a convergence lower bound under the absorbing-rate using the Euler sampler, assuming no estimation error. 

\begin{theorem} \label{thm:lower_absorb}
Suppose that the Euler method with constant step-sizes is used such that $\eta_k = \kappa$.
Then, 
there exists $q^*$ where $\gamma(q^*) > 0$,
such that in order to achieve $\TV{q^*}{p_{T}} \lesssim \sqrt{\eps}$, the number of steps must satisfy $N = \Tilde{\Omega}(d/\sqrt{\eps})$.
\end{theorem}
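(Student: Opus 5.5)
We prove \Cref{thm:lower_absorb} by building a simple product target and computing the Euler output exactly. Take $q^*$ to be a product distribution whose coordinates are i.i.d.\ with a common law $\pi$ on $[S]$. We give $\pi$ a small but positive mass on $\mask$ (so that $\gamma(q^*)>0$), and put the rest of its mass on a couple of non-mask tokens with unequal probabilities. The forward process factorizes, and so does the true reverse rate; because the score is exact, the token-wise reverse rate at every coordinate depends only on that coordinate. The Euler update \eqref{eq:def_euler} acts independently on each coordinate. Hence, starting from $p_0=\bm{\delta}_{\mask^d}$, the output law is also a product $p_T=\hat\pi^{\otimes d}$. This reduces the problem to a one-dimensional computation plus a tensorization step.

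\textbf{Step 1: the one-token bias.} For a masked token at reverse time $t$ (forward time $T-t$), the exact unmasking rate toward $a\neq\mask$ is $\frac{e^{-(T-t)}}{1-e^{-(T-t)}}\pi(a)$, up to the normalization that accounts for the mask mass in the data. Euler freezes this rate over $[t_k,t_k+\kappa]$. The exact probability of staying masked over the step is the exponential of the integrated rate, while Euler uses the first-order approximation $1-\kappa\hat R$. The ratios between the different unmasked outcomes are preserved, but the probability of remaining $\mask$ is distorted at order $\kappa^2\hat R^2$ per step. Near the end of the reverse process the rate is of order $1/(T-t)$ when $\gamma$ is small, so the final steps contribute a bias of order $\kappa$ rather than $\kappa^2$.

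I will compute $\hat\pi(\mask)$ exactly as a product of per-step survival probabilities $\prod_k(1-\kappa\hat R_k)$ and compare it with $\pi(\mask)$. The target estimate is $|\hat\pi(\mask)-\pi(\mask)|\gtrsim \kappa$, with the constant depending on $\gamma$. If the mask mass alone yields a degenerate (e.g.\ clipped) case, the unequal non-mask probabilities give a second place where first-order Euler creates an $\Omega(\kappa)$ discrepancy. Choosing $\pi$ concretely, for instance with $\pi(\mask)$ a fixed small constant, makes this an explicit telescoping product computation.

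\textbf{Step 2: tensorization.} Suppose $\TV{\pi}{\hat\pi}\ge c\kappa$. For product measures with one-coordinate discrepancy $\beta$, the TV distance is at least of order $\min(1,\sqrt d\,\beta)$ by a Hellinger argument. That bound is too weak here, since it would only give $N\gtrsim \sqrt d/\sqrt\eps$.

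To get linear dependence on $d$, I will use a test statistic, which is where the obstacle lies. Let the statistic be the event that the number of masked tokens equals $m$, or lies in a window. Under $q^*$ and under $p_T$ this count is Binomial$(d,\pi(\mask))$ versus Binomial$(d,\hat\pi(\mask))$. If $\pi(\mask)$ is taken to scale like $1/d$, so the mean count is $\Theta(1)$, the two laws are approximately Poisson. A Poisson mean shift of $d|\hat\pi(\mask)-\pi(\mask)|\asymp d\kappa$ then gives TV $\gtrsim\min(1,d\kappa)$.

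This requires $\gamma(q^*)\asymp 1/d$, which is still positive and so allowed by the theorem. The key check is that Step 1 still gives a relative bias of order $\kappa$ times the mask probability per token in this regime, with enough total bias that it adds up to order $d\kappa$. This is the hardest part: I must verify that the near-terminal rate singularity, once regularized by $\gamma$, produces exactly this order.

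\textbf{Step 3: conclusion.} With TV $\gtrsim d\kappa$, requiring TV $\lesssim\sqrt\eps$ forces $\kappa\lesssim\sqrt\eps/d$. Since $T\ge\log(1/\sqrt\eps)$ is needed for the initialization error $\TV{q_T}{\bm\delta_{\mask^d}}\approx 1-(1-e^{-T})^d$ to be small, we get $N=T/\kappa=\tilde\Omega(d/\sqrt\eps)$.
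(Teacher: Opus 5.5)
Your plan is essentially the paper's. The paper also takes a product target, $q^*=q_\gamma$: it forward-masks $\bm\delta_{\bm a}$ for time $\gamma=\eps^{1/4}/d$, so the expected mask count $d\gamma\to0$ puts it in your Poisson regime. It computes the per-token survival $p_M=\prod_k(1-\kappa f(u_k))$ with $f(u)=1/(e^u-1)$, and uses the zero-mask event $\{x=\bm a\}$ to get $\TV{q_\gamma}{p_T}\gtrsim d|p_M-\gamma|$.

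\textbf{The gap.} The estimate you flagged as the hardest step is false, and no product target can give linear dependence on $d$. One Euler step from augmented forward time $u$ to $u-\kappa$ keeps a mask with probability $\frac{e^u-1-\kappa}{e^u-1}$. The exact chain keeps it with probability $\frac{1-e^{-(u-\kappa)}}{1-e^{-u}}=\frac{e^u-e^\kappa}{e^u-1}$. Their ratio is $1+\frac{e^\kappa-1-\kappa}{e^u-e^\kappa}\in\big[1,\,1+\frac{\kappa^2}{2(e^{u-\kappa}-1)}\big]$. Multiplying over $u=\gamma+j\kappa$, $j=1,\dots,N$, gives
\[ \frac{1-e^{-\gamma}}{1-e^{-(T+\gamma)}}\le p_M\le\frac{1-e^{-\gamma}}{1-e^{-(T+\gamma)}}\exp\Big(\frac{\kappa^2}{2\gamma}+\frac{\kappa}{2}(1+\log N)\Big). \]
So Euler's first-order error is only a \emph{relative} $O(\kappa\log N)$ error on the mask mass. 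The Riemann-sum residual $\approx\kappa/(2\gamma)$ cancels the second-order term $-\frac12\sum_k\kappa^2f(u_k)^2$, because $f^2+f'=-f$. Hence $p_M-(1-e^{-\gamma})\lesssim\gamma e^{-T}+\gamma\kappa\log N+\kappa^2$.

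\textbf{Consequences.} With $\pi(\mask)\asymp1/d$ the total bias is $\tilde O(\kappa+d\kappa^2)$, not $d\kappa$. With $\pi(\mask)$ of constant order the bias is $\Theta(\kappa)$, but then, as you observed, the Binomial mask count has $\sqrt d$ fluctuations and the TV is only $\asymp\sqrt d\,\kappa$. Your fallback is empty: with exact scores the Euler jump probabilities to the non-mask values are exactly proportional to $\pi(a)$, so the mask count is a sufficient statistic. Concretely, for $q_\gamma$ (and for your $\pi(\mask)\asymp1/d$), take $T\asymp\log(1/\eps)$ and $\kappa\asymp\min\{\eps^{1/4}/\sqrt d,\ \sqrt\eps/\log(d/\eps)\}$. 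This already gives TV $\lesssim\sqrt\eps$ with $N=\tilde O(\sqrt d\,\eps^{-1/4}+\eps^{-1/2})$ steps. For any product target one similarly gets TV $\lesssim\sqrt d\,\kappa\log N+d\kappa^2$ beyond the initialization term, so product constructions cannot certify more than $\tilde\Omega(\sqrt{d/\eps})$.

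The paper's proof does not supply the missing estimate either. It shows only $\log p_M=-\int_\gamma^{T+\gamma}\frac{\d t}{e^t-1}+\tilde O(\gamma^{-1}\kappa)$, which is an upper bound. It then treats the resulting $\tilde O(\kappa)$ term in $p_M$ as if it were of exact order $\kappa$ to conclude $\kappa\lesssim\sqrt\eps/d$.

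\textbf{The missing idea: dependence between tokens.} The $d\kappa$ scaling comes from the factorized Euler update, which unmasks several tokens in one step using conditionals at the current state. Such errors add linearly in TV when they land outside the support of $q^*$. For example:
\begin{enumerate}
\item Let $d$ be even and draw $d/2$ independent pairs with $x^{2i-1}=x^{2i}$ uniform on two non-mask symbols. Then forward-mask for a constant time $\gamma_0$, which gives $\gamma(q^*)\ge e^{\gamma_0}-1>0$.
\item Under exact-score Euler the pairs evolve independently. Each token's survival is the single-token computation above with $\gamma$ replaced by $\gamma_0$, so $p_kf(u_k)\ge e^{-u_k}$, where $p_k$ is the survival before step $k$.
\item The two tokens of a pair unmask in the same step with different values with probability $\frac12\sum_k\kappa^2p_k^2f(u_k)^2\ge\frac{\kappa}{2}\sum_k\kappa e^{-2u_k}\gtrsim\kappa$ once $T\gtrsim1$.
\item Such a pair has $q^*$-probability zero. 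Hence $\TV{q^*}{p_T}\ge1-(1-c\kappa)^{d/2}\gtrsim\min\{1,d\kappa\}$, which forces $\kappa\lesssim\sqrt\eps/d$.
\end{enumerate}
The lower bound on $T$ should come from the output law, not from $\TV{q_T}{\bm\delta_{\mask^d}}$, which is only an upper-bound term. Every token's final mask probability is at least $\frac{1-e^{-\gamma_0}}{1-e^{-(T+\gamma_0)}}$, versus $1-e^{-\gamma_0}$ under $q^*$. Mask indicators are i.i.d.\ under both laws, so the Binomial mask count forces $T\gtrsim\log(d/\eps)$. Therefore $N=T/\kappa=\tilde\Omega(d/\sqrt\eps)$.
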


\Cref{thm:lower_absorb} is the \textit{first} convergence lower bound for the Euler method, and in the discrete diffusion literature at large. It shows that in order to achieve $\sqrt{\eps}$-TV error, the number of steps required for the Euler method is at least in the order of $d/\sqrt{\eps}$ (ignoring log-factors). If combined with \Cref{cor:conv_tv_absorb_nostop}, it shows that our upper bound on the number of steps is tight in $d$ and $\eps$ (up to log-factors).

The full proof of \Cref{thm:lower_absorb} is provided in \Cref{app:thm:lower_absorb_proof}.
The key idea is to couple the forward process starting from a singleton $\bm{\delta}_{\bm{a}}$ with another forward process that starts from $q_\gamma =: q^*$. Such a construction 
enables an explicit characterization of the score function and, consequently, the Euler updates at each step. Because tokens remain fixed once they are unmasked, we can derive the final element-wise mask probability $p_M$, which captures the key dynamics of the process. Comparing $q_\gamma$ with the resulting distribution then yields a lower bound on the TV distance, from which we obtain the required relationship of $T$ and $\kappa$ as a function of $\eps$, and hence the step complexity.

\section{Convergence Guarantees for the FHS Algorithm}
\label{sec: convergence rate fhs}

In this section, we provide our convergence analysis of FHS (see \Cref{alg:first-hitting-sampling}). Our analysis will be performed under the following assumption on the score estimation error.

\begin{assumption}
\label{ass: fhs integrated Lse error}
Recall $\mathcal{L}_{SE}$ from \eqref{eq:def_score_entropy}. Suppose that $s_t$ satisfies that
\begin{equation*}
    \textstyle \int_0^{+\infty}\mathcal{L}_{SE}(s;t) \mathrm{d}t \le \varepsilon''_{\mathrm{
score}}.
\end{equation*}
\end{assumption}

In the studies using the integrated score entropy loss, the time integral is typically truncated at $T \asymp \log (d/\eps)$ \citep{chen2024uniformization,ren2025stoc-int}. We instead extend the upper limit to $+\infty$, for two reasons. First, for such a choice of $T$, taking $s_t(y,x) \asymp \frac{e^{-t}}{1-e^{-t}}$ yields 
(cf. \cite{liang2025absorb}): $\int_T^{+\infty}\mathcal{L}_{SE}(s;t) \mathrm{d}t \lesssim \int_T^{+\infty} \frac{1}{e^t-1} \mathrm{d}t \lesssim \eps$.
Therefore, extending the integration range to $+\infty$ is asymptotically equivalent, as the additional tail contributes at most $\calO(\eps)$. Second, we show in the following proposition that such integrated score entropy loss is equivalent to the standard training protocol (i.e., NELBO) in \eqref{eq:def of nelbo} for masked diffusion models up to a fixed constant that depends only on the data distribution $q_0(\cdot)$.

\begin{proposition}
\label{prop: equivalence between lse and nelbo}
Suppose there are no $\mask$ tokens in the data. Then we have
\begin{equation*}
    \int_0^{+\infty} \mathcal{L}_{SE}(s;t)\, \mathrm{d}t =\mathbb{E}_{x_0 \sim q_0}[\mathcal{L}_\infty(x_0)]
    - \sum_{k = 1}^d \frac{1}{k} \mathbb{E}_{x_0 \sim q_0} \mathbb{E}_{\calM_k}
\sum_{l \in \calM_k}
H\left(q_0^l(\cdot|x_0^{\calM_k^c})\right),
\end{equation*}
where $\calM_k\subseteq [d]$, with $|\calM_k|=k$ for $k=1,\ldots,d$, denotes the subset with exactly $k$ token indices uniformly sampled from the total index set, $\calM_k^c=[d] \setminus \calM_k$, $H(\cdot)$ denotes the entropy, $x_0^{\calM_k^c}:=\{x_0^l\}_{l\in \calM_k^c}$, and $q_0^l(\cdot|x_0^{\calM_k^c})$ denotes the conditional distribution of $x_0^{l}$ given $x_0^{\calM_k^c}$. Moreover, $\mathcal{L}_{SE}$ is defined in \eqref{eq:def_score_entropy}, and $\mathcal{L}_\infty$ is the NELBO defined in \eqref{eq:def of nelbo}. In other words, the expected NELBO $\mathbb{E}_{x_0 \sim q_0}[\mathcal{L}_\infty(x_0)]$ is equivalent (up to a constant) to the integrated $\mathcal{L}_{SE}$.
\end{proposition}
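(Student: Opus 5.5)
The plan is a direct computation. I would write $\mathcal{L}_{SE}(s;t)$ in the absorbing-rate CTMC and identify it, term by term, with a cross-entropy part (giving the NELBO) and an entropy part (giving the constant).

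\textbf{Step 1: reduce the rate sum to masked coordinates.}
Under the absorbing rate \eqref{def:rate_absorb} with $\beta_t\equiv 1$, the forward rate $R_t(y,x_t)$ with $y\neq x_t$ is nonzero only when:
\begin{itemize}
\item $y$ differs from $x_t$ in exactly one coordinate $l$;
\item that coordinate satisfies $x_t^l=\mask$ and $y^l=a\neq\mask$;
\item in that case the rate equals $1$.
\end{itemize}
Hence the sum over $y$ in \eqref{eq:def_score_entropy} becomes $\sum_{l:\,x_t^l=\mask}\sum_{a\neq \mask}$.

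\textbf{Step 2: substitute the score parametrizations.}
By \eqref{eq: equivalence-between-true-score-and-clean-distribution}, the true ratio is $\frac{\alpha_t}{1-\alpha_t}q_0^l(a|x_t^{UM})$. By \eqref{eq:equivalence-between-score-and-mu}, the estimator is $\frac{\alpha_t}{1-\alpha_t}\mu^l_\theta(x_t,t)[a]$. The prefactor cancels inside the logarithm.

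Since the data has no $\mask$ tokens, both $q_0^l(\cdot|x_t^{UM})$ and $\mu_\theta^l$ are probability vectors on the non-mask tokens. I would assume $\mu_\theta^l$ is normalized there, which is the standard MDM parametrization. The linear terms $\sum_a(s-\text{ratio})$ then cancel. This leaves
\[
\mathcal{L}_{SE}(s;t)=\frac{\alpha_t}{1-\alpha_t}\,\E_{x_t\sim q_t}\sum_{l:\,x_t^l=\mask}\KL{q_0^l(\cdot|x_t^{UM})}{\mu^l_\theta(x_t,t)}.
\]

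\textbf{Step 3: split the KL into cross-entropy and entropy.}
Write each KL as cross-entropy minus entropy $H(q_0^l(\cdot|x_t^{UM}))$.

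For the cross-entropy part, $\mu_\theta$ depends only on $x_t$, and $q_0^l(\cdot|x_t^{UM})$ is the law of $x_0^l$ given $x_t$. The tower property therefore gives
\[
\E_{x_t}\sum_a q_0^l(a|x_t^{UM})\,(-\log\mu^l_\theta[a]) = -\E_{x_0}\E_{q_{t|0}}\, e_{x_0^l}^\T\log\mu_\theta^l(x_t,t).
\]
Since $\alpha_t'=-e^{-t}$, we have $\frac{\alpha_t'}{1-\alpha_t}=-\frac{\alpha_t}{1-\alpha_t}$. Integrating over time then reproduces $\E_{x_0\sim q_0}[\mathcal{L}_\infty(x_0)]$.

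One bookkeeping point needs checking here. The NELBO \eqref{eq:def of nelbo} is written over $[0,1]$, whereas with $\alpha_t=e^{-t}$ the schedule runs from $1$ to $0$ only as $t\to\infty$. I would reconcile the two by a change of variables, or by reading \eqref{eq:def of nelbo} as the integral over the full schedule. I regard this normalization/time-parametrization consistency as the main place things could go wrong.

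\textbf{Step 4: evaluate the entropy term as a $d$-independent constant.}
Under the forward process, each coordinate is masked independently with probability $1-\alpha_t$. Conditioned on $|\calM|=k$, the masked set is uniform among $k$-subsets. Also, $x_t^{UM}=x_0^{\calM_k^c}$. The entropy term therefore equals
\[
\sum_{k=1}^d \E_{x_0}\E_{\calM_k}\sum_{l\in\calM_k}H\bigl(q_0^l(\cdot|x_0^{\calM_k^c})\bigr)\int_0^\infty \frac{\alpha_t}{1-\alpha_t}\binom{d}{k}(1-\alpha_t)^k\alpha_t^{d-k}\,\d t .
\]
Substituting $u=\alpha_t=e^{-t}$, so that $\d t=-\d u/u$, turns the time integral into
\[
\binom{d}{k}\int_0^1(1-u)^{k-1}u^{d-k}\,\d u=\binom{d}{k}B(k,d-k+1)=\frac{1}{k}.
\]
This gives exactly the subtracted constant in the statement. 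Fubini is justified because all integrands are nonnegative.
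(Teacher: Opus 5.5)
Your proposal is correct and follows essentially the same route as the paper: restrict the rate sum to masked coordinates, cancel the linear terms using the normalization of $\mu_\theta^l$ and of the true ratios, split the resulting KL into a cross-entropy part (identified with $\mathbb{E}_{x_0\sim q_0}[\mathcal{L}_\infty(x_0)]$ via the tower property) and an entropy part, and evaluate the latter by conditioning on the number of masked tokens and computing a Beta integral equal to $1/k$. The paper resolves your time-range concern exactly as you suggest, by writing the NELBO integral over $[0,+\infty)$ without comment. One small slip: the entropy term is independent of $\theta$, not of $d$.
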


\Cref{prop: equivalence between lse and nelbo} establishes the effectiveness of the training process for MDMs \citep{sahoo2024mdlm}, which aims to minimize $\mathbb{E}_{x_0 \sim q_0}[\mathcal{L}_\infty(x_0)]$.
Further, our proof shows that 
\[\mathbb{E}_{x_0 \sim q_0}[\mathcal{L}_\infty(x_0)]
    - \sum_{k = 1}^d \frac{1}{k} \mathbb{E}_{x_0 \sim q_0} \mathbb{E}_{\calM_k}
\sum_{l \in \calM_k}
H\left(q_0^l(\cdot|x_0^{\calM_k^c})\right)\ge 0,\]
and the equality holds if and only if $q_0^l(\cdot|x_t^{UM}) = \mu_\theta^l(x_t,t)[\cdot]$ for all $t$, $x_t$, and $l$, where $x_t^{UM}$ denotes the collection of unmasked coordinates of $x_t$. 
This implies that the right-most summation term
is an attainable lower bound in the training process,
which can be reached when the trained model is perfect.
The full proof of \Cref{prop: equivalence between lse and nelbo} is in \Cref{app:prop: equivalence between lse and nelbo}.

We now provide the error bound for FHS in the following theorem.

\begin{theorem}
\label{thm: error bound of fhs}
     Suppose there are no $\mask$ tokens in the data, and suppose that \Cref{ass: fhs integrated Lse error} holds. 
     Then, 
\[
\textstyle \KL{q_0}{\mathbb{P}_{\mathrm{FHS}}} \le \varepsilon''_{\mathrm{score}}.
\]
Here $\mathbb{P}_{\mathrm{FHS}}$ denotes the output distribution induced by the FHS algorithm. 
\end{theorem}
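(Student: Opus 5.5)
We will bound the KL divergence between the law of the whole FHS trajectory and the law of the corresponding trajectory of the true reverse process, and then pass to the output marginals using the data-processing inequality. Concretely, the FHS path is the sequence $(\tau_{d-1},l_d,z_d,\dots,\tau_0,l_1,z_1)$, i.e.\ the unmasking times, the unmasked positions, and the tokens revealed at each step. The first step is to show that the true reverse CTMC of the absorbing-rate model, started from $\bm{\delta}_{\mask^d}$ at time $+\infty$, can be generated by exactly the same mechanism, with $\mu_\theta$ replaced by the true conditional $q_0^l(\cdot\mid x^{UM})$.

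This is the decoupling property. In the forward process each token is masked at an independent $\mathrm{Exp}(1)$ time, independently of $x_0$. Hence, in reverse time, the set of unmasking times is the order statistics of $d$ i.i.d.\ exponentials, and the order in which positions are unmasked is a uniformly random permutation; both are independent of the data. We will check two facts.

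\begin{itemize}
\item Given $n$ masked tokens at time $\tau_n$, the next unmasking time has the law $1-\alpha_{\tau_{n-1}} = u^{1/n}(1-\alpha_{\tau_n})$ used in \Cref{alg:first-hitting-sampling}. This is the maximum of $n$ i.i.d.\ variables distributed as $1-\alpha$ conditioned below $1-\alpha_{\tau_n}$.
\item The revealed token has law $q_0^l(\cdot\mid x_n^{UM})$. This follows from \eqref{eq: equivalence-between-true-score-and-clean-distribution}, or directly from the fact that, conditional on the unmasked coordinates, the masked coordinates are distributed as under $q_0$.
\end{itemize}

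It follows that the true path measure and the FHS path measure have identical conditional laws for all $\tau$'s and $l$'s. They differ only in the categorical kernels $q_0^l(\cdot\mid x_n^{UM})$ versus $\mu_\theta^l(x_n,\tau_{n-1})$.

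Next we apply the chain rule for KL along the path. The $\tau$ and $l$ terms vanish, which gives
\[
\KL{q_0}{\mathbb{P}_{\mathrm{FHS}}} \le \sum_{n=1}^d \E\Big[\KL{q_0^{l}(\cdot\mid x_n^{UM})}{\mu_\theta^{l}(x_n,\tau_{n-1})}\Big],
\]
where the expectation is over the true path. The remaining task is to identify this sum with $\int_0^\infty \mathcal{L}_{SE}(s;t)\,\d t$. We expand $\mathcal{L}_{SE}$ for the absorbing rate: only transitions $\mask\to a$ at masked positions contribute, with $R_t(y,x)=1$. Using \eqref{eq:equivalence-between-score-and-mu} and \eqref{eq: equivalence-between-true-score-and-clean-distribution}, the integrand for each masked position $l$ becomes $\frac{\alpha_t}{1-\alpha_t}$ times a generalized KL (Bregman) divergence between $q_0^l$ and $\mu_\theta^l$. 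If $\mu_\theta^l$ is normalized, this equals the ordinary KL; otherwise it is at least the KL after normalization, which suffices for an upper bound.

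Thus $\int_0^\infty\mathcal{L}_{SE}\,\d t = \int_0^\infty \E_{x_t\sim q_t}\sum_{l\text{ masked}}\frac{\alpha_t}{1-\alpha_t}\KL{q_0^l}{\mu_\theta^l(x_t,t)}\,\d t$. It remains to show that this equals the path expectation above. The reverse unmasking intensity of a masked position at time $t$ is exactly $\frac{\alpha_t}{1-\alpha_t}$ (the total jump rate out of mask per masked token). So the integral against $q_t$ of rate $\times$ per-event cost equals the expected sum of costs over the jump events, evaluated at the jump times $\tau_{n-1}$ and pre-jump states $x_n$. This is the compensator identity for the point process of unmasking events, a Lévy-system/Campbell formula. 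One subtlety is that the cost $\KL{q_0^l}{\mu_\theta^l}$ depends on $l$, whereas FHS picks $l$ uniformly. This is consistent, because the per-position rates are all equal, so the event position is uniform among the masked ones.

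We expect the main obstacle to be this compensator step. It requires carefully justifying the Fubini/martingale argument on $[0,\infty)$, handling the time-reparametrization between FHS's $\tau$ and CTMC time, and confirming that the score used at the jump time in FHS is the one evaluated at $\tau_{n-1}$, matching the left-limit state convention. The assumption of no $\mask$ tokens in the data guarantees that all $d$ tokens are eventually unmasked, so the path has exactly $d$ events.
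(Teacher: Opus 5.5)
Your proposal is correct, and its first two steps coincide with the paper's: your decoupling facts are \Cref{prop: fhs exact}, and your chain-rule bound is the decomposition \eqref{eq: total error decomposition}. The routes diverge only when identifying the sum of expected token-wise KLs with $\int_0^\infty \mathcal{L}_{SE}\,\d t$.

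The paper's route is indirect.
\begin{itemize}
\item It uses that, under the true path, $\tau_{k-1}$ is independent of $x_k$ with $\alpha_{\tau_{k-1}}\sim\mathrm{Beta}(d-k+1,k)$ (\Cref{lem: alpha follows beta distribution}).
\item It averages $\log\mu_\theta$ over this law to obtain the $\bar\mu_\theta$ of \cite{zheng2025fhs}.
\item It recognizes the sum as $\E[\mathcal{L}_\infty(x_0)]$ minus the conditional-entropy constant.
\item Only then does it convert to integrated score entropy through \Cref{prop: equivalence between lse and nelbo}.
\end{itemize}

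Your compensator identity is the same computation packaged once and for all. Conditioning on $m(x_t)=k$ and substituting $\alpha=e^{-t}$ turns $\int_0^\infty\frac{\alpha_t}{1-\alpha_t}\E_{x_t\sim q_t}\sum_{l:x_t^l=\mask}f(t,x_t,l)\,\d t$ into $\sum_{k}\E_{\alpha\sim\mathrm{Beta}(d-k+1,k)}\E_{\calM_k}\frac{1}{k}\sum_{l\in\calM_k}f$. Your route gains the following:
\begin{itemize}
\item It bypasses the NELBO, the external $\bar\mu_\theta$ reformulation, and \Cref{prop: equivalence between lse and nelbo} entirely.
\item It yields the exact identity between the path KL and $\int_0^\infty\mathcal{L}_{SE}\,\d t$.
\item It carries over to other schedules, with per-token rate $-\alpha_t'/(1-\alpha_t)$.
\item It covers unnormalized outputs via your generalized-KL remark.
\end{itemize}
The paper's detour buys an explicit link to the NELBO objective actually used in training.

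The step you flag as the main obstacle is in fact elementary. Let $E_1,\dots,E_d$ be the i.i.d.\ $\mathrm{Exp}(1)$ forward masking times, independent of $x_0$. The reverse events are then exactly $(E_i,X_{E_i},i)$. For $f\ge 0$, Tonelli gives $\E\sum_i f(E_i,X_{E_i},i)=\sum_i\int_0^\infty e^{-t}\,\E[f(t,X_t,i)\mid E_i=t]\,\d t$. Conditioning on $E_i=t$ yields the same law of $X_t$ as conditioning on $E_i\le t$, i.e.\ on $X_t^i=\mask$, because $X_t^{-i}$ depends only on $(E_j)_{j\neq i}$ and $x_0$. Writing $\E[\,\cdot\mid X_t^i=\mask]=\E[\,\cdot\;\mathbf{1}\{X_t^i=\mask\}]/(1-e^{-t})$ then produces the weight $\frac{\alpha_t}{1-\alpha_t}$, so no martingale argument is needed.
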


\Cref{thm: error bound of fhs} shows that FHS can achieve $\varepsilon$-KL error in exactly $d$ sampling steps. This is the \textit{first} result that shows that discrete diffusion samplers could achieve vanishing convergence error (as the training error decreases) within the {\em finite} number of steps. 
Notably, the resulting error bound introduces no additional sources of error beyond score estimation error and exhibits no further dependence on either the data dimension $d$ or the vocabulary size $S$. This highlights the efficiency of the FHS algorithm compared to previous diffusion samplers, which generally require substantially more than $d$ steps to reach the same level of accuracy. More generally, our result is a further showcase of the advantage of choosing masked diffusion models over their uniform-rate counterpart.

The full proof of \Cref{thm: error bound of fhs} is provided in \Cref{app:thm: error bound of fhs}. We highlight our main idea as follows. We start by showing in \Cref{prop: fhs exact} the decoupling property of MDMs: the transition time $\tau$ and transition index $i$ can be sampled exactly from true reverse process, and remain independent of the estimation error introduced by $\mu_\theta$.
We then use this decoupling property to decompose the path-wise error into the error arising from predicting $x_0^i$ via $\mu_\theta$ in each sampling step. Finally, we transform the path-wise error to resulting sampling error on $x_0$ using data processing inequality.


Continuing the error upper bound of FHS in \Cref{thm: error bound of fhs}, we now investigate whether the upper bound is tight. As follows,  we derive a lower bound in \Cref{prop: fhs upper bound is tight}. 
Specifically, since \Cref{ass: fhs integrated Lse error} allows any form of $\mu_\theta$ resulting from the training procedure, our chosen $\mu_\theta$, as one particular instance, provides an error lower bound over the class of all such estimators. 

\begin{theorem}
\label{prop: fhs upper bound is tight}
    There exists a pair $(q_0, \mu_\theta)$ satisfying \Cref{ass: fhs integrated Lse error} such that
    \[ \KL{q_0}{\mathbb{P}_{\mathrm{FHS}}} \ge \varepsilon''_{\mathrm{score}}.\]
\end{theorem}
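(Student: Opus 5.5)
Our plan is to exhibit an instance in which the upper bound of \Cref{thm: error bound of fhs} holds with equality, so that $\KL{q_0}{\mathbb{P}_{\mathrm{FHS}}} = \int_0^{+\infty}\mathcal{L}_{SE}(s;t)\,\d t$, and then to set $\varepsilon''_{\mathrm{score}}$ equal to this common value. The simplest choice is a product (token-independent) data distribution. We take $q_0 = \prod_{i=1}^d \pi$ with $\pi$ a distribution on $[S]\setminus\{\mask\}$, and we let the predictor ignore time and context, $\mu_\theta^l(x_t,t) \equiv \nu$ for every masked position $l$, where $\nu \neq \pi$ has full support on the non-mask tokens. In fact $d=1$ already suffices, but a general $d$ works equally well.

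\textbf{Step 1: output distribution of FHS.} Because $\mu_\theta$ does not depend on $x_t$ or on the sampled time $\tau_{n-1}$, each unmasking step in \Cref{alg:first-hitting-sampling} draws the new token i.i.d.\ from $\nu$. The order in which indices are unmasked is irrelevant, so $\mathbb{P}_{\mathrm{FHS}} = \nu^{\otimes d}$. It follows that $\KL{q_0}{\mathbb{P}_{\mathrm{FHS}}} = d\,\KL{\pi}{\nu}$.

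\textbf{Step 2: computing the integrated score entropy.} We use \Cref{prop: equivalence between lse and nelbo}. Since there are no masks in the data, the integral $\int_0^\infty \mathcal{L}_{SE}\,\d t$ equals the expected NELBO minus the conditional-entropy constant. In the product case $q_0^l(\cdot\mid x_0^{\calM_k^c}) = \pi$, so the constant is $\sum_{k=1}^d \frac1k \cdot k\, H(\pi) = d H(\pi)$.

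For the NELBO we substitute $\alpha_t = e^{-t}$, so that $\alpha_t'/(1-\alpha_t) = -e^{-t}/(1-e^{-t})$. Each token is masked at time $t$ with probability $1-e^{-t}$, and the time integral (taken over the range matching the integrated $\mathcal{L}_{SE}$, i.e.\ $[0,\infty)$) of $e^{-t}$ is $1$. This gives $\E[\mathcal{L}_\infty] = d\cdot \E_{x\sim\pi}[-\log \nu(x)] = d\,(H(\pi)+\KL{\pi}{\nu})$. We must double-check the sign convention in \eqref{eq:def of nelbo}: it is intended as a negative log-likelihood, and the factor $\alpha'_t<0$ makes it positive.

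Alternatively, and as a consistency check, we can compute $\mathcal{L}_{SE}$ directly. Using \eqref{eq:equivalence-between-score-and-mu} and \eqref{eq: equivalence-between-true-score-and-clean-distribution}, the estimated and true scores are $\frac{\alpha_t}{1-\alpha_t}\nu$ and $\frac{\alpha_t}{1-\alpha_t}\pi$. The Bregman-type integrand then reduces per masked token to $\frac{\alpha_t}{1-\alpha_t}\KL{\pi}{\nu}$, using $\sum\nu = \sum\pi = 1$. Weighting by $\Pr[\text{masked}] = 1-e^{-t}$ and integrating gives $\int_0^\infty e^{-t}\,\d t = 1$. The total is $d\,\KL{\pi}{\nu}$ in both computations.

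\textbf{Step 3: conclusion.} We set $\varepsilon''_{\mathrm{score}} := d\,\KL{\pi}{\nu}$. Then \Cref{ass: fhs integrated Lse error} holds with equality, and $\KL{q_0}{\mathbb{P}_{\mathrm{FHS}}} = \varepsilon''_{\mathrm{score}}$. By choosing $\nu$ close to $\pi$, this value can be made to equal any prescribed small level.

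The main obstacle is Step 2, specifically getting the constants right. We need to verify that the rate $R_t(y,x)$ for an unmask transition combined with the masking probability produces exactly the weight $e^{-t}$, with no extra factor. We also need to check that \Cref{prop: equivalence between lse and nelbo} applies with $\mathcal{L}_\infty$ integrated over $[0,\infty)$ rather than $[0,1]$. We will do the direct $\mathcal{L}_{SE}$ computation first, since it avoids this ambiguity.
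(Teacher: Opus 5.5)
Your proposal is correct and follows essentially the same route as the paper: a token-independent target with a time- and context-independent predictor, so that both $\int_0^{\infty}\mathcal{L}_{SE}\,\d t$ and $\KL{q_0}{\mathbb{P}_{\mathrm{FHS}}}$ evaluate in closed form to $d$ times the same per-token KL. The paper's instance is your special case $\pi=\bm{\delta}_{\bm{a}^i}$, $\nu=(1-\rho)\bm{\delta}_{\bm{a}^i}+\rho\,\bm{\delta}_{\bm{b}^i}$ with $\rho=1-e^{-\varepsilon''_{\mathrm{score}}/d}$; there the entropy term in \Cref{prop: equivalence between lse and nelbo} vanishes and a prescribed $\varepsilon''_{\mathrm{score}}$ is hit exactly, whereas you reach it by tuning $\nu$ continuously.
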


\Cref{prop: fhs upper bound is tight} is the \textit{first} convergence lower bound for the FHS algorithm. Specifically, it shows that the upper bound in \Cref{thm: error bound of fhs} is exactly tight.
The full proof is provided in \Cref{app:prop: fhs upper bound is tight}. Different from the proof of \Cref{thm:lower_absorb}, we construct both a target distribution $q_0$ \textit{and a special form of the estimated predictive model} $\mu_\theta$. 
Indeed, such a construction corresponds to a worst-case pair $(q_0,\mu_\theta)$, where $\mu_\theta$ can possibly be obtained during the training under \Cref{ass: fhs integrated Lse error}. Such a construction allows us to explicitly characterize both the score estimation error and the final KL convergence error. 

\section{Conclusion}

In this work, we presented a novel direct TV analysis for discrete diffusion models that yields tighter parameter dependencies than prior KL-based results. 
Our framework improves convergence rates, removes the need for bounded score assumptions and surrogate initialization, and establishes the first tight lower bounds in TV. 
Further, 
we showed that FHS incurs no sampling error beyond that arising from score estimation itself, without additional dependence on the system parameters. We then showed that this error bound is tight via a lower bound by a worst-case construction.
Looking ahead, an interesting direction is to 
investigate high-order variants of FHS for further acceleration.

\section*{Acknowledgements}
    The work was supported in part by the U.S. National Science Foundation under the grants: NSF AI Institute (AI-EDGE) 2112471, ECCS-2113860, CNS-2312836, CNS-2223452, CNS-2225561, and was sponsored by the Army Research Laboratory under Cooperative Agreement Number W911NF-23-2-0225. The views and conclusions contained in this document are those of the authors and should not be interpreted as representing the official policies, either expressed or implied, of the Army Research Laboratory or the U.S. Government. The U.S. Government is authorized to reproduce and distribute reprints for Government purposes notwithstanding any copyright notation herein.

\bibliography{diffusion}

\newpage
\appendix


\allowdisplaybreaks

\crefalias{section}{appendix} 





\section{Full Related Works}
\label{app:works}

\textbf{Empirical Studies on Discrete Diffusion Models.} Due to the large portion of works, we will only include those early works and those most relevant works recently. Compared to continuous-space diffusion models, discrete-space diffusion models are emerging as strong alternatives for generative tasks involving symbolic data \citep{campbell2022discrete,lou2024entropy} (see also surveys: \cite{diffusion-survey-graph,diffusion-survey-drug-design}). The continuous-time formulation of discrete diffusion was first introduced in \cite{campbell2022discrete}. More recently, \cite{lou2024entropy} proposed the score-entropy estimation loss and demonstrated empirical success in text generation. They further introduced Tweedie $\tau$-leaping, a new sampler derived from an approximation of Tweedie’s formula. For per-step updates, most empirical implementations adopt categorical sampling, which has shown strong practical performance.

More recently, an increasing number of empirical studies have focused on masked diffusion models, which is often found to deliver superior performance in text and image applications. \cite{shi2024simplified} simplified the variational training objective into a weighted cross-entropy integral and introduced a state-dependent masking schedule to dynamically adjust rates for better generation quality. Building on this, \cite{ou2025absorb} reparameterized the concrete score as a time-dependent scalar and a conditional distribution, leading to the Reparameterized Absorbing Discrete Diffusion (RADD) model for efficient training and sampling. Similarly, \cite{sahoo2024mdlm} exploited the absorbing state structure to derive a tighter ELBO through Rao-Blackwellization and proposed a semi-autoregressive decoding strategy for flexible sequence generation. Building on this line of work, \cite{zheng2025fhs} introduced the First-Hitting Sampler, which exactly samples from the continuous-time diffusion process under the assumption of a perfectly learned score.  More recently, \cite{nie2025llada} developed a pre-trained model for masked diffusion models.

\textbf{Theory on Uniform-Rate Discrete Diffusion Models.} In contrast to the extensive theoretical literature on continuous diffusion models, rigorous results for discrete diffusion remain relatively limited. An early contribution is \cite{campbell2022discrete}, which analyzed $\tau$-leaping under the total variation metric but relied on strong estimation assumptions and exhibited unfavorable parameter dependencies. Subsequent works shifted focus to controlling the score–entropy estimation error. In particular, \cite{chen2024uniformization} established convergence guarantees for the uniformization sampler on the $d$-dimensional hypercube, later extended to general discrete spaces $[S]^d$ in \cite{ren2025stoc-int}.
For deterministic step-size methods, \cite{zhang2025conv-disc,conforti2025markov} assumed access to an exact per-step solver, while \cite{ren2025stoc-int} analyzed the more practical $\tau$-leaping sampler. Among these works, \cite{zhang2025conv-disc} required control of the score–entropy loss along the entire continuous sampling trajectory, whereas \cite{ren2025stoc-int,conforti2025markov} imposed this requirement only on the discrete time grid. More recently, \cite{liang2025sampler} developed an alternative analysis that avoids the Girsanov change-of-measure technique, yielding improved parameter dependencies and directly motivating the present work.
Notably, all of these analyses \citep{chen2024uniformization,ren2025stoc-int,zhang2025conv-disc,conforti2025markov,liang2025sampler} are conducted in terms of KL divergence, which leads to suboptimal bounds when converted to total variation distance. Beyond convergence analysis, \cite{rojas2026theory-improved-guide} proposed a theoretically motivated improvement to discrete classifier-free guidance (CFG), and \cite{huang2025quantized} extended discrete diffusion techniques to continuous data distributions via quantization.

\textbf{Theory on Masked (Absorbing-rate) Discrete Diffusion Models.}
On the theoretical side, \cite{liang2025absorb} was the first to establish convergence guarantees for masked diffusion models, showing improved dimensional dependence compared to uniform-rate diffusion. \cite{huang2025mask} studied faster convergence rates using the Euler method and a specialized uniformization scheme, the Mask-Aware Truncated Uniformization (MATU), which eliminates the need for a bounded score assumption. In parallel, \cite{conforti2025discrete} derived improved dimension-dependent rates for the DMPM sampler, also without the need of a bounded score. From a different perspective, \cite{li2025mask,chen2025mask} provided theoretical guarantees for masked language models. Beyond the analysis of unconditional models, \cite{he2026guidance,rojas2026theory-improved-guide} examined classifier-free guidance in the context of masked diffusion models.

{\bf Comparison with concurrent work \cite{dmitriev2026discrete}.} After we submitted this paper to a conference, a concurrent work \cite{dmitriev2026discrete} was recently posted on arXiv, which also studies convergence guarantees for absorbing-rate discrete diffusion models albeit under a modified truncated $\tau$-leaping sampler (in addition to uniform-rate models, which is not the focus of our paper). The two papers differ in several important ways. (i) The two works make orthogonal improvements over prior convergence results for masked diffusion models. \cite{dmitriev2026discrete} derives a convergence rate of $\calO(\mathcal{D}/\epsilon)$ under the KL divergence, where $\mathcal{D}\le d\log S$ is a distribution-dependent quantity, improving previous bounds in the dependence on $S$ and potentially $d$. In contrast, our analysis focuses on Euler discretization under the total variation (TV) metric for general distributions, improving prior results by a factor of $\calO(1/\sqrt{\epsilon})$. Moreover, the TV metric allows us to start from the all-mask singleton distribution (as is typical in practice) without resorting to a surrogate initialization. (ii) We establish the first lower bound for masked diffusion models, which matches our upper bound in its dependence on $d$ and $\epsilon$, whereas \cite{dmitriev2026discrete} does not provide a lower bound for masked diffusion models. (iii) We also analyze the recently introduced, highly efficient masked diffusion sampler FHS, and show that it attains provable $\epsilon$-accuracy with the best-known sampling complexity of finite $d$ steps for masked diffusion models. This sampler is not considered in \cite{dmitriev2026discrete}.


\section{List of Notations}\label{app:notations}

We write $\ind{x=y}$ as a function of $x$ and $y$ which equals 1 only if $x=y$. For $i=1,\dots,d$, 
we write $e_i$ is a vector where only the $i$-th element is 1 and other elements are $0$'s, and we write $\bm{\delta}_i$ as the distribution of a singleton whose p.m.f. is $e_i$. 
For a positive integer $S$, $[S] := \{1,\dots,S\}$. Write $\bm{1}_S$ as a vector of length $S$ that contains all 1's, and $I_S$ as an identity matrix of size $S \times S$.
Write $m(x)$ to denote the number of $\mask$ states in the vector $x$.

\section{Proof of \texorpdfstring{\Cref{prop:est-relax}}{Proposition 1}}
\label{app:prop:est-relax}

Define $m(x)$ as the number of $\mask$ tokens in the vector $x$. From the definition, the TV loss at time $t$ satisfies that
\begin{align*}
    \mathcal{L}_{TV}^2(s;T-t_k) &= \brc{\E_{x_{k} \sim \rvec{q}_{t_k}} \sbrc{ \sum_{y: y \neq x_{k}} \abs{ \hat{R}_{t_k}(x_{k},y) - \rvec{R}_{t_k}(x_{k},y) } }}^2 \nonumber \\
    & \leq \E_{x_{k} \sim \rvec{q}_{t_k}} \brc{ \sum_{y: y \neq x_{k}} \abs{ \hat{R}_{t_k}(x_{k},y) - \rvec{R}_{t_k}(x_{k},y) } }^2 \nonumber \\
    &= \E_{x_{k} \sim \rvec{q}_{t_k}} m(x_k)^2 (S-1)^2 \brc{ \frac{1}{m(x_k) (S-1)} \sum_{y: y \neq x_{k}} \abs{ \hat{R}_{t_k}(x_{k},y) - \rvec{R}_{t_k}(x_{k},y) } }^2 \\
    &\leq \E_{x_{k} \sim \rvec{q}_{t_k}} m(x_k)^2 (S-1)^2  \frac{1}{m(x_k) (S-1)} \sum_{y: y \neq x_{k}} \brc{ \hat{R}_{t_k}(x_{k},y) - \rvec{R}_{t_k}(x_{k},y) }^2 \nonumber \\
    &\leq d (S-1) \E_{x_{k} \sim \rvec{q}_{t_k}} \sum_{y: y \neq x_{k}} \brc{ \hat{R}_{t_k}(x_{k},y) - \rvec{R}_{t_k}(x_{k},y) }^2 \nonumber 
\end{align*}
where both inequalities are due to Jensen's inequality.
Now we recall an important property for the absorbing score: $\frac{q_t(y)}{q_t(x)} = \frac{1}{e^t-1} q_0(y^i|x^{UM})$ where only $y^i \neq x^i = \mask$. Thus, we are able to factor out the explicit time-related coefficient, as follows. 
Since $\hat{R}_{t}(x,y) - \rvec{R}_{t}(x,y) = o(1)$, the score estimate also satisfy $s_t(y,x) = \frac{1}{e^t-1} \hat{q}_0(y^i|x^{UM})$ for some time-independent function $\hat{q}_0$.
Then, since $R(y,x_k) = 1$ for such pair of $(x_k,y)$, we can continue as
\[ \mathcal{L}_{TV}^2(s;T-t_k) \lesssim \frac{1}{(e^t-1)^2} d (S-1) \E_{x_{k} \sim \rvec{q}_{t_k}} \sum_{\substack{y: y \neq x_{k} \\ \text{only}~y^i \neq x_k^i = \mask}} \brc{ \hat{q}_{0}(y^i|x_k^{UM}) - q_{0}(y^i|x_k^{UM}) }^2. \]

On the other hand, the score-entropy loss at time $t$ satisfies that
\begin{align*}
    &\mathcal{L}_{SE}(s;T-t_k) \\
    &= \mathbb{E}_{x_t \sim \rvec{q}_{t_k}} \sum_{y:y\neq x_k} \hat{R}_{t_k}(x_{k},y) - \rvec{R}_{t_k}(x_{k},y) - \rvec{R}_{t_k}(x_{k},y) \log \frac{\hat{R}_{t_k}(x_{k},y)}{\rvec{R}_{t_k}(x_{k},y)} \\
    &= \frac{1}{e^t-1} \mathbb{E}_{x_t \sim \rvec{q}_{t_k}} \sum_{\substack{y: y \neq x_{k} \\ \text{only}~y^i \neq x^i = \mask}} \hat{q}_{0}(y^i|x_k^{UM}) - q_{0}(y^i|x_k^{UM}) - q_{0}(y^i|x_k^{UM}) \log \frac{\hat{q}_{0}(y^i|x_k^{UM})}{q_{0}(y^i|x_k^{UM})} \\
    &\stackrel{(i)}{=} \frac{1}{e^t-1} \mathbb{E}_{x_t \sim \rvec{q}_{t_k}} \sum_{\substack{y: y \neq x_{k} \\ \text{only}~y^i \neq x^i = \mask}} \frac{(\hat{q}_{0}(y^i|x_k^{UM}) - q_{0}(y^i|x_k^{UM}))^2}{2 q_{0}(y^i|x_k^{UM})} + o(1) \\
    &\stackrel{(ii)}{\gtrsim} \frac{1}{e^t-1} \mathbb{E}_{x_t \sim \rvec{q}_{t_k}} \sum_{\substack{y: y \neq x_{k} \\ \text{only}~y^i \neq x^i = \mask}} (\hat{q}_0(y^i|x_k^{UM}) - q_0(y^i|x_k^{UM}))^2.
\end{align*}
where $(i)$ follows by assuming that
$\hat{q}_{0}(y^i|x_k^{UM}) - q_{0}(y^i|x_k^{UM}) = o(1)$ for all such $(x,y)$ pairs, and $(ii)$ follows because $q_{0}(y^i|x_k^{UM}) < 1$.

Therefore, suppose that $\sup_{t \in (\delta,T)} \mathcal{L}_{SE}(s_t;t) \lesssim \eps_{\text{score}}'$, we have that, for all $t_k$'s,
\[ \mathcal{L}_{TV}(s;T-t_k) \lesssim \sqrt{d (S-1) } \max\{1, (T-t_k)^{-1}\} \cdot \sqrt{ \eps_{\text{score}}' }. \]


Finally, given that the step-sizes are $\eta_k = \kappa \min\cbrc{1, T-t_k}$, the time-weighted loss satisfies
\begin{align*}
    \mathcal{L}_{TV}(s) &\leq \sum_{k=0}^{N-1} (t_{k+1}-t_k) \sqrt{d (S-1) } \max\{1, (T-t_k)^{-1}\} \cdot \sqrt{ \eps_{\text{score}}' } \\
    &= \sqrt{d (S-1)} \sqrt{ \eps_{\text{score}}' } \sum_{k=0}^{N-1} (t_{k+1}-t_k) \max\{1, (T-t_k)^{-1}\} \\
    &\stackrel{(iii)}{\lesssim} \sqrt{d S} (T+\log\delta^{-1}) \cdot \sqrt{ \eps_{\text{score}}' }.
\end{align*}
Here $(iii)$ follows from \Cref{lem:sum-pi-sqrt} below. Also note that $T \asymp \log(d / \sqrt{\eps})$ under the setting of \Cref{thm:conv_tv_absorb}.


\begin{lemma} \label{lem:sum-pi-sqrt}
    Fix $p \geq 1$. Suppose that $t_{k+1} - t_k \leq \kappa \min\cbrc{1, T-t_k}$. Then,
    \[ \sum_{k=0}^{N-1} (t_{k+1}-t_k)^p \max\cbrc{1, (T-t_{k+1})^{-p}} \leq \kappa^p N. \]
    Meanwhile, if we take $t_{k+1} - t_k = \kappa \min\cbrc{1, T-t_k}$, the number of steps satisfies that
    \[ N \lesssim \kappa^{-1} (T + \log \delta^{-1}). \]
\end{lemma}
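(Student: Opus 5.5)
The plan is to treat the two claims separately. Each is elementary once the step-size rule is split according to whether $T-t_k\ge 1$ (constant-step regime) or $T-t_k<1$ (geometric regime).

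\textbf{Per-term bound.} Take an arbitrary $k$ and bound the single summand $(t_{k+1}-t_k)^p\max\{1,(T-t_{k+1})^{-p}\}$ by $\kappa^p$ up to a constant; summing over the $N$ terms then gives the first claim. If the maximum equals $1$, the summand is at most $(t_{k+1}-t_k)^p\le\kappa^p$, since $\min\{1,T-t_k\}\le 1$. Otherwise the summand is $\bigl((t_{k+1}-t_k)/(T-t_{k+1})\bigr)^p$, and I control this ratio as follows.
\begin{itemize}
\item The hypothesis gives $T-t_{k+1}=(T-t_k)-(t_{k+1}-t_k)\ge (1-\kappa)(T-t_k)$.
\item It also gives $t_{k+1}-t_k\le \kappa\min\{1,T-t_k\}\le\kappa(T-t_k)$.
\item Hence $(t_{k+1}-t_k)/(T-t_{k+1})\le \kappa/(1-\kappa)$.
\end{itemize}
So every summand is at most $\kappa^p(1-\kappa)^{-p}$. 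With $\kappa\le 1/2$, which is the regime in which the lemma is used since $\kappa\asymp\sqrt{\eps}/(dS\log\delta^{-1})$, this is at most $2^p\kappa^p$. Summing gives $\lesssim\kappa^p N$.

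The main subtlety is this factor. The statement as written, with $(T-t_{k+1})$ in place of $(T-t_k)$, only holds up to $(1-\kappa)^{-p}$. I would either absorb it into the $\lesssim$ used downstream, or note that with $(T-t_k)$ in the maximum the bound is exactly $\kappa^p N$ with no loss. In the latter case, if $T-t_k\ge1$ the summand is $\le\kappa^p\cdot 1$, and if $T-t_k<1$ it is $\le(\kappa(T-t_k))^p(T-t_k)^{-p}=\kappa^p$.

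\textbf{Step count.} Now take $t_{k+1}-t_k=\kappa\min\{1,T-t_k\}$ exactly, with the grid running from $t_0=0$ to $t_N=T-\delta$.
\begin{itemize}
\item \emph{Phase 1} ($T-t_k\ge 1$): each step has length $\kappa$, so this phase takes at most $T/\kappa+1$ steps.
\item \emph{Phase 2} ($T-t_k<1$): the recursion is $T-t_{k+1}=(1-\kappa)(T-t_k)$, so the remaining time shrinks geometrically from at most $1$ down to $\delta$. This phase therefore takes at most $\log(\delta^{-1})/(-\log(1-\kappa))+1$ steps.
\item Using $-\log(1-\kappa)\ge\kappa$, Phase 2 takes at most $\kappa^{-1}\log\delta^{-1}+1$ steps.
\end{itemize}
Adding the two phases gives $N\lesssim\kappa^{-1}(T+\log\delta^{-1})$. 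The final step can be shortened to land exactly on $T-\delta$ without affecting either bound.
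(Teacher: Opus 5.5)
Your proof is correct and follows essentially the paper's argument: the same split into the constant-step regime $T-t_k\ge 1$ and the geometric regime $T-t_k<1$, the same bound $(T-t_k)/(T-t_{k+1})\le(1-\kappa)^{-1}$ on the ratio, and the same step count of $T/\kappa$ plus $\log_{1-\kappa}\delta\asymp\kappa^{-1}\log\delta^{-1}$ (your per-term version also works directly under the stated inequality hypothesis, whereas the paper assumes equality throughout). Your remark that the literal bound $\le\kappa^pN$ holds only up to the factor $(1-\kappa)^{-p}$ is right, and the paper's own proof in fact concludes only $\asymp\kappa^pN$.
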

\begin{proof}
    See \Cref{app:lem:sum-pi-sqrt-proof}.
\end{proof}

\section{Proof of \texorpdfstring{\Cref{thm:gen_conv_tv}}{Theorem 1}}
\label{app:thm:gen_conv_tv_proof}

To begin, note that
\[ \TV{\rvec{q}_{T-\delta}}{p_{T-\delta}} = \TV{\rvec{q}_0}{p_0} + \int_{0}^{T-\delta} \frac{\partial}{\partial t} \TV{\rvec{q}_t}{p_t} \d t. \]

Fix $t \in [0,T-\delta]$. 
Here the TV distance can be equivalently expressed as (cf. Remark 4.3 of \cite{levin2017markov-chain-book})
\[ \TV{\rvec{q}_t}{p_t} = \sum_{x:\rvec{q}_t > p_t} (\rvec{q}_t(x) - p_t(x)). \]
Thus,
\begin{equation}\label{eq:proof_thm_gen_conv_tv_main}
    \frac{\partial}{\partial t} \TV{\rvec{q}_t}{p_t} = \sum_{x:\rvec{q}_t > p_t} \frac{\partial}{\partial t} (\rvec{q}_t(x) - p_t(x)) + \sum_{x \in \calX} (\rvec{q}_t(x) - p_t(x)) \frac{\partial}{\partial t} \ind{\rvec{q}_t(x) > p_t(x)}.
\end{equation}

Now, if the underlying space is continuous, one can simply invoke the Reynolds Transport Theorem (see Appendix~K of \cite{li2025unified}) here. Unfortunately, this theorem does not apply to discrete space $\calX$.

We first focus on the second term in \eqref{eq:proof_thm_gen_conv_tv_main}.  The following lemma provides a similar result but for discrete space.

\begin{lemma}\label{lem:tv_boundary_vanish}
We have
\[ \sum_{x \in \calX} (\rvec{q}_t(x) - p_t(x)) \frac{\partial}{\partial t} \ind{\rvec{q}_t(x) > p_t(x)} = 0. \]
\end{lemma}

\begin{proof}
    See \Cref{app:lem:tv_boundary_vanish_proof}.
\end{proof}

We next turn to the first term in \eqref{eq:proof_thm_gen_conv_tv_main}. By the Kolmogorov forward equation, we have
\begin{align*}
    \frac{\partial}{\partial t} \TV{\rvec{q}_t}{p_t} &= \sum_{x:\rvec{q}_t > p_t} \frac{\partial}{\partial t} (\rvec{q}_t(x) - p_t(x)) \\
    &= \sum_{x:\rvec{q}_t > p_t} \sum_{y \in \calX} \brc{ \rvec{q}_t(y) \rvec{R}_t(y,x) - p_t(y) \hat{R}_t(y,x) } \\
    &\stackrel{(i)}{=} \sum_{x \in \calX} \sum_{y:\rvec{q}_t > p_t}  \brc{ \rvec{q}_t(x) \rvec{R}_t(x,y) - p_t(x) \hat{R}_t(x,y) } \\
    &= \underbrace{\sum_{x:\rvec{q}_t \leq p_t} \sum_{y:\rvec{q}_t > p_t}  \brc{ \rvec{q}_t(x) \rvec{R}_t(x,y) - p_t(x) \hat{R}_t(x,y) }}_{=: T_1} \\
    &\qquad + \underbrace{\sum_{x:\rvec{q}_t > p_t} \sum_{y:\rvec{q}_t > p_t}  \brc{ \rvec{q}_t(x) \rvec{R}_t(x,y) - p_t(x) \hat{R}_t(x,y) } }_{=: T_2}
\end{align*}
where $(i)$ follows by exchanging $x$ and $y$. 
Here for $T_1$,
\begin{align*}
    T_1 &= \sum_{x:\rvec{q}_t \leq p_t} \sum_{y:\rvec{q}_t > p_t}  \brc{ \rvec{q}_t(x) \rvec{R}_t(x,y) - p_t(x) \hat{R}_t(x,y) } \\
    &= \sum_{x:\rvec{q}_t \leq p_t} \sum_{\substack{y:\rvec{q}_t > p_t \\ y \neq x}}  \brc{ \rvec{q}_t(x) \rvec{R}_t(x,y) - p_t(x) \hat{R}_t(x,y) } \\
    &\leq \sum_{x:\rvec{q}_t \leq p_t} \rvec{q}_t(x) \sum_{\substack{y:\rvec{q}_t > p_t \\ y \neq x}}  \brc{  \rvec{R}_t(x,y) - \hat{R}_t(x,y) }
\end{align*}
where the last line follows because $\hat{R}_t(x,y) \geq 0$ when $x \neq y$.
For $T_2$, we extract the summation term where $x = y$ and get
\begin{align*}
    T_2 &= \sum_{x:\rvec{q}_t > p_t} \brc{ \rvec{q}_t(x) \rvec{R}_t(x,x) - p_t(x) \hat{R}_t(x,x) } \\
    &\qquad + \sum_{x:\rvec{q}_t > p_t} \sum_{\substack{y:\rvec{q}_t > p_t \\ y \neq x}}  \brc{ \rvec{q}_t(x) \rvec{R}_t(x,y) - p_t(x) \hat{R}_t(x,y) } \\
    &= - \sum_{x:\rvec{q}_t > p_t} \sum_{y: y \neq x} \brc{ \rvec{q}_t(x) \rvec{R}_t(x,y) - p_t(x) \hat{R}_t(x,y) } \\
    &\qquad + \sum_{x:\rvec{q}_t > p_t} \sum_{\substack{y:\rvec{q}_t > p_t \\ y \neq x}}  \brc{ \rvec{q}_t(x) \rvec{R}_t(x,y) - p_t(x) \hat{R}_t(x,y) } \\
    &= - \sum_{x:\rvec{q}_t > p_t} \sum_{\substack{y:\rvec{q}_t \leq p_t \\ y \neq x}} \brc{ \rvec{q}_t(x) \rvec{R}_t(x,y) - p_t(x) \hat{R}_t(x,y) } \\
    &= \sum_{x:\rvec{q}_t > p_t} \sum_{\substack{y:\rvec{q}_t \leq p_t \\ y \neq x}} \brc{ p_t(x) \hat{R}_t(x,y) - \rvec{q}_t(x) \rvec{R}_t(x,y) } \\
    &< \sum_{x:\rvec{q}_t > p_t} \rvec{q}_t(x) \sum_{\substack{y:\rvec{q}_t \leq p_t \\ y \neq x}} \brc{ \hat{R}_t(x,y) - \rvec{R}_t(x,y) }
\end{align*}
where the last line follows because $\hat{R}_t(x,y) \geq 0$ when $x \neq y$.
Thus,
\begin{align*}
    \frac{\partial}{\partial t} \TV{\rvec{q}_t}{p_t} &= T_1 + T_2 \\
    &\leq \sum_{x:\rvec{q}_t \leq p_t} \rvec{q}_t(x) \sum_{y:\rvec{q}_t > p_t}  \abs{  \rvec{R}_t(x,y) - \hat{R}_t(x,y) } \\
    &\qquad + \sum_{x:\rvec{q}_t > p_t} \rvec{q}_t(x) \sum_{y:\rvec{q}_t \leq p_t}  \abs{  \rvec{R}_t(x,y) - \hat{R}_t(x,y) } \\
    &\leq \sum_{x \in \calX} \rvec{q}_t(x) \sum_{y: y \neq x} \abs{  \rvec{R}_t(x,y) - \hat{R}_t(x,y) } \\
    &= \E_{x_t \sim \rvec{q}_t} \sum_{y: y \neq x_t} \abs{ \rvec{R}_t(x_t,y) - \hat{R}_t(x_t,y) }.
\end{align*}
The proof is now complete.


\section{Proof of \texorpdfstring{\Cref{thm:conv_tv_absorb}}{Theorem 2}}
\label{app:thm:conv_tv_absorb_proof}

We follow the idea of \cite{liang2025sampler} to analyze the Euler method by constructing a truncated version of the vanilla $\tau$-leaping sampler. In Lemma~8 of \cite{liang2025sampler}, it is shown that the truncated $\tau$-leaping sampler is asymptotically equivalent to the Euler method. Also, from Lemma~7 of \cite{liang2025sampler}, one important property of this sampler is that its sampling rate $\hat{R}_{t}$ is piecewise constant and, given $x_{t_k} \in [S]^d$, we have
\begin{equation} \label{eq:approx_spl_rate}
    \hat{R}_{t}(x_{t_k},\cdot) = \hat{R}_{t_k}(x_{t_k},\cdot).
\end{equation}

\subsection{Step 1: Decompose total error}

As follows we write $h_t(x_t) := \sum_{y: y \neq x_t} \abs{ \hat{R}_t(x_t,y) - \rvec{R}_t(x_t,y) }$. To begin, by \Cref{thm:gen_conv_tv}, we have
\begin{align} \label{eq:upper_main}
    &\TV{\rvec{q}_{T-\delta}}{p_{T-\delta}} \nonumber \\
    &\leq \TV{\rvec{q}_0}{p_0} + \sum_{k=0}^{N-1} \int_{t_k}^{t_{k+1}} \E_{x_t \sim \rvec{q}_t} \sbrc{ h_t(x_t) } \d t \nonumber \\
    &= \underbrace{\TV{\rvec{q}_0}{p_0}}_{\text{initialization error}} + \underbrace{\sum_{k=0}^{N-1} (t_{k+1}-t_k) \E_{x_{t_k} \sim \rvec{q}_{t_k}} \sbrc{ h_{t_k} (x_{t_k}) } }_{\text{estimation error}} \nonumber \\
    &\qquad + \underbrace{\sum_{k=0}^{N-1} \int_{t_k}^{t_{k+1}} \E_{\substack{x_t \sim \rvec{q}_t \\ x_{t_k} \sim \rvec{q}_{t_k}}} \sbrc{h_t(x_t) - h_t(x_{t_k})} + \E_{x_{t_k} \sim \rvec{q}_{t_k}} \sbrc{h_t(x_{t_k}) - h_{t_k}(x_{t_k})} \d t }_{\text{discretization error}}. 
\end{align}
By \Cref{ass:score_tv}, the estimation error satisfies that
\[ \sum_{k=0}^{N-1} (t_{k+1}-t_k) \E_{x_{t_k} \sim \rvec{q}_{t_k}} \sbrc{ h_{t_k} (x_{t_k}) } \leq \sqrt{\eps_{\text{score-tv}}}. \]

It now remains to upper-bound the initialization and the discretization error. 

\subsection{Step 2: Upper-bound initialization error}

The following lemma summarizes an upper bound using exact initialization from $p_0 = \bm{\delta}_{\mask^d}$.

\begin{lemma}\label{lem:tv_absorb_init}
Using the absorbing-rate matrix in \eqref{def:rate_absorb}, and suppose that $p_0 = \bm{\delta}_{\mask^d}$, we have
\[ \TV{q_T}{p_0} \lesssim d e^{-T}. \]
\end{lemma}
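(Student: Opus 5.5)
The plan is to bound the TV distance to the all-mask singleton by the probability that $x_T$ is not fully masked, and then control that probability with a union bound over tokens.

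\textbf{Step 1 (reduce to a single probability).} Since $p_0=\bm{\delta}_{\mask^d}$ is a point mass, for any p.m.f.\ $q$ on $[S]^d$ we have
\[ \TV{q}{\bm{\delta}_{\mask^d}} = \sum_{x:q(x)>p_0(x)} (q(x)-p_0(x)) = \sum_{x\neq \mask^d} q(x) = 1-q(\mask^d). \]
Applied to $q=q_T$, this gives $\TV{q_T}{p_0}=\Pr_{x_T\sim q_T}(x_T\neq \mask^d)$, which is always finite and well defined. By contrast, the KL divergence is infinite here, and this is exactly why TV permits exact initialization.

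\textbf{Step 2 (per-token masking probability).} Under the factorized forward process \eqref{eq:forward-factorize} with the absorbing rate \eqref{def:rate_absorb} and $\beta_t\equiv 1$, the token-wise transition is $q^i_{T|0}(\cdot|x_0^i)=\mathrm{Cat}(\bar Q_T^\T e_{x_0^i})$ with $\bar Q_T=e^{-T}I+(1-e^{-T})\mathbf 1_S e_\mask^\T$, i.e.\ \eqref{eq:d3pm-mask-Q} with $\alpha_T=e^{-T}$. To justify this, I would verify that $\exp(TR_{\text{base}})$ equals this matrix, using the fact that $R_{\text{base}}+I$ is idempotent. Consequently, for any $x_0^i$ we get $\Pr(x_T^i\neq\mask\mid x_0^i)\le e^{-T}$, with equality when $x_0^i\neq\mask$ and value $0$ when $x_0^i=\mask$.

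\textbf{Step 3 (union bound).} Conditioning on $x_0\sim q_0$ and applying the union bound over the $d$ coordinates gives
\[ \Pr(x_T\neq\mask^d)\le \E_{x_0\sim q_0}\sum_{i=1}^d \Pr(x_T^i\neq\mask\mid x_0^i)\le d e^{-T}. \]
Combined with Step 1, this yields $\TV{q_T}{p_0}\le d e^{-T}$. Alternatively, the exact value is $1-(1-e^{-T})^{d-m(x_0)}$ averaged over $x_0$, and the bound follows from $1-(1-a)^n\le na$.

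There is no real obstacle here. The only point needing care is Step 2, namely confirming the closed form of the matrix exponential for the absorbing rate. This is a routine check because $R_{\text{base}}+I$ is a projection.
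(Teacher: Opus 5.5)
Your proposal is correct and takes essentially the same route as the paper. Both arguments reduce $\TV{q_T}{\bm{\delta}_{\mask^d}}$ to $1-q_T(\mask^d)$ and then use the explicit absorbing forward kernel; the paper computes $q_T(\mask^d)$ exactly and bounds $1-(1-e^{-T})^d \lesssim d e^{-T}$, which is your stated alternative, and your union bound is an equivalent way to reach the same bound $d e^{-T}$.
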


\begin{proof}
    See \Cref{app:lem:tv_absorb_init_proof}.
\end{proof}

\subsection{Step 3: Upper-bound discretization error}

As follows, we first show that the first term of the discretization error is small compared to the second term.

\begin{lemma} \label{lem:disc_err_vanish_term}
    Fix $k=0,\dots,N-1$ and $t \in [t_k, t_{k+1})$. 
    We have
    \[ \E_{\substack{x_t \sim \rvec{q}_t \\ x_{t_k} \sim \rvec{q}_{t_k}}} \sbrc{h_t(x_t) - h_t(x_{t_k})} \lesssim (t-t_k) d \cdot \E_{x_{t} \sim \rvec{q}_{t}} \sbrc{ h_{t}(x_{t}) }. \]
\end{lemma}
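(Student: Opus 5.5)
We propose to treat $x_t$ and $x_{t_k}$ as coupled through the true reverse CTMC: draw $x_{t_k}\sim\rvec{q}_{t_k}$ and run the true reverse dynamics with rate $\rvec{R}$ on $[t_k,t]$ to obtain $x_t\sim\rvec{q}_t$. Since both expectations in the statement depend only on the marginals, this coupling is legitimate. The difference then becomes $\E[(h_t(x_t)-h_t(x_{t_k}))\ind{x_t\neq x_{t_k}}]$, since the integrand vanishes on the event that no jump occurs. We would drop the negative part and bound the result by $\E[h_t(x_t)\ind{x_t\neq x_{t_k}}]$.

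The key step is to control the conditional probability of a jump given the endpoint $x_t$. Here we use the time-reversal: conditioned on $x_t$, the path on $[t_k,t]$ read backward is the \emph{forward} absorbing CTMC from time $T-t$ to $T-t_k$. Under the forward absorbing process, each unmasked token of $x_t$ masks at rate $1$ independently, and masked tokens stay masked. The backward event $x_{t_k}\neq x_t$ therefore has conditional probability
\[
\Pr(x_{t_k}\neq x_t\mid x_t)=1-e^{-(d-m(x_t))(t-t_k)}\le (t-t_k)\,d .
\]
This follows because the forward transition from $x_t$ depends only on $x_t$ (Markov property) and not on the past. We then get
\[
\E[h_t(x_t)\ind{x_t\ne x_{t_k}}]=\E_{x_t\sim\rvec{q}_t}\bigl[h_t(x_t)\Pr(x_{t_k}\ne x_t\mid x_t)\bigr]\le (t-t_k)\,d\,\E_{x_t\sim\rvec{q}_t}[h_t(x_t)],
\]
which is the claim.

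The main obstacle is justifying that the conditional law of $x_{t_k}$ given $x_t$ in the reverse chain is exactly the forward kernel $q_{T-t_k|T-t}(\cdot\mid x_t)$. This is the standard Bayes/time-reversal identity $\rvec{q}_{t_k|t}(y|x)=q_{T-t_k|T-t}(y|x)$. It holds because the joint law of $(x_{t_k},x_t)$ under the reverse process equals that of $(X_{T-t_k},X_{T-t})$ under the forward process.

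A secondary point is that $h_t$ here uses the piecewise-constant rate from \eqref{eq:approx_spl_rate}. Since we only need $h_t\ge0$ and the same function on both sides, its specific form is irrelevant, and the nonnegativity of $h_t$ is what allows us to discard the $-h_t(x_{t_k})$ term.
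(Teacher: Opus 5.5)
Your argument is correct and is essentially the paper's proof. The paper also conditions on $x_t$ through the forward kernel $q_{T-t_k|T-t}(\cdot|x_t)$, discards the off-diagonal contributions using $h_t\ge 0$, and bounds the remaining term by $-R(x_t,x_t)(t-t_k)=(d-m(x_t))(t-t_k)\le d(t-t_k)$. The only difference is that the paper linearizes the kernel and carries an $o(t-t_k)$ remainder, whereas your exact holding probability gives $1-e^{-(d-m(x_t))(t-t_k)}\le d(t-t_k)$, a clean non-asymptotic inequality.
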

\begin{proof}
See \Cref{app:lem:disc_err_vanish_term_proof}.
\end{proof}

Thus, since $t_{k+1}-t_k \leq \kappa$, we have
\begin{align} \label{eq:disc_err_vanish_main}
    &\sum_{k=0}^{N-1} \int_{t_k}^{t_{k+1}} \E_{\substack{x_t \sim \rvec{q}_t \\ x_{t_k} \sim \rvec{q}_{t_k}}} \sbrc{h_t(x_t) - h_t(x_{t_k})} \d t \nonumber \\
    &= \kappa \cdot O\brc{\sum_{k=0}^{N-1} \int_{t_k}^{t_{k+1}} \E_{x_{t} \sim \rvec{q}_{t}} \sbrc{ h_t(x_{t}) } \d t} \nonumber \\
    &\stackrel{(i)}{=} \kappa \cdot O\brc{\sum_{k=0}^{N-1} \int_{t_k}^{t_{k+1}} \E_{x_{t_k} \sim \rvec{q}_{t_k}} \sbrc{ h_t(x_{t_k}) } \d t} \nonumber \\
    &= \kappa \cdot O\brc{\sum_{k=0}^{N-1} \int_{t_k}^{t_{k+1}} \E_{x_{t_k} \sim \rvec{q}_{t_k}} \sbrc{ h_{t_k}(x_{t_k}) } + \E_{x_{t_k} \sim \rvec{q}_{t_k}} \sbrc{h_t(x_{t_k}) - h_{t_k}(x_{t_k})} \d t } \nonumber \\
    &\stackrel{(ii)}{=} \kappa \cdot O\brc{\sqrt{\eps_{\text{score}}} + \int_{0}^{T-\delta} \E_{x_{t_k} \sim \rvec{q}_{t_k}} \sbrc{h_t(x_{t_k}) - h_{t_k}(x_{t_k})} \d t }
\end{align}
where $(i)$ follows again by \Cref{lem:disc_err_vanish_term}, and $(ii)$ follows by \Cref{ass:score_tv}.
This implies that the second term of the discretization error dominates.

We now turn to the second term of the discretization error, which equals to
\begin{align} \label{eq:tv-unif-main}
    &\E_{x_{t_k} \sim \rvec{q}_{t_k}} \sbrc{h_t(x_{t_k}) - h_{t_k}(x_{t_k})} \nonumber \\
    &= \E_{x_{t_k} \sim \rvec{q}_{t_k}} \sum_{y: y \neq x_{t_k}} \brc{ \abs{ \hat{R}_t(x_{t_k},y) - \rvec{R}_t(x_{t_k},y) } - \abs{ \hat{R}_{t_k}(x_{t_k},y) - \rvec{R}_{t_k}(x_{t_k},y) } } \nonumber \\
    &\leq \E_{x_{t_k} \sim \rvec{q}_{t_k}} \sum_{y: y \neq x_{t_k}} \abs{ \hat{R}_t(x_{t_k},y) - \rvec{R}_t(x_{t_k},y) -  \hat{R}_{t_k}(x_{t_k},y) + \rvec{R}_{t_k}(x_{t_k},y) } \nonumber \\
    &\stackrel{(iii)}{=} \E_{x_{t_k} \sim \rvec{q}_{t_k}} \sum_{y: y \neq x_{t_k}} \abs{ \hat{R}_{t_k}(x_{t_k},y) - \rvec{R}_t(x_{t_k},y) -  \hat{R}_{t_k}(x_{t_k},y) + \rvec{R}_{t_k}(x_{t_k},y) } \nonumber \\
    &= \E_{x_{t_k} \sim \rvec{q}_{t_k}} \sum_{y: y \neq x_{t_k}} \abs{ \rvec{R}_t(x_{t_k},y) -  \rvec{R}_{t_k}(x_{t_k},y) },
\end{align}
where $(iii)$ follows from the property of the sampler given in \eqref{eq:approx_spl_rate}. 


To continue, we write $R_t$ as $R$ for brevity (since $\beta_t \equiv 1$). We have
\begin{align} \label{eq:tv-absorb-main}
    &\E_{x_{t_k} \sim \rvec{q}_{t_k}} \sum_{y: y \neq x_{t_k}} \abs{ \rvec{R}_t(x_{t_k},y) -  \rvec{R}_{t_k}(x_{t_k},y) } \nonumber \\
    &\lesssim \E_{x_{t_k} \sim \rvec{q}_{t_k}} \sum_{y \neq x_{t_k}} \abs{\frac{q_{T-t}(y)}{q_{T-t}(x_{t_k})} - \frac{q_{T-t_k}(y)}{q_{T-t_k}(x_{t_k})}} R(y,x_{t_k}) \nonumber \\
    &\lesssim (t-t_k) \E_{x_{t_k} \sim \rvec{q}_{t_k}} \sum_{y \neq x_{t_k}} \sup_{t' \in (T-t_{k+1}, T-t_k]} \abs{ \frac{\partial}{\partial t'} \brc{ \frac{q_{T-t'}(y)}{q_{T-t'}(x_{t_k})} } } R(y,x_{t_k}) \nonumber \\
    &\stackrel{(iv)}{\lesssim} d S \frac{e^{T-t_{k+1}}}{(e^{T-t_{k+1}}-1)^2} (t-t_k),
\end{align}
where $(iv)$ follows from \cite[Lemma~3]{liang2025absorb}. 

Now, define $k^* := \inf\cbrc{k: T-t_k \leq 1}$.
Continuing from the above, we have
\begin{align} \label{eq:tv-absorb-main2}
    & \sum_{k=0}^{N-1} \int_{t_k}^{t_{k+1}} \E_{x_{t_k} \sim \rvec{q}_{t_k}} \sbrc{h_t(x_{t_k}) - h_{t_k}(x_{t_k})} \d t \nonumber \\
    &\lesssim d S \sum_{k=0}^{N-1} \frac{e^{T-t_{k+1}}}{(e^{T-t_{k+1}}-1)^2} (t_{k+1}-t_k)^2 \nonumber \\
    &= d S \kappa^2 \sum_{k=0}^{k^*-1} \frac{e^{T-t_{k+1}}}{(e^{T-t_{k+1}}-1)^2} + d S \kappa^2 \sum_{k=k^*}^{N-1} \frac{e^{T-t_{k+1}}}{(e^{T-t_{k+1}}-1)^2} (T-t_k)^2 \nonumber \\
    &\stackrel{(v)}{\lesssim} d S \kappa^2 \sum_{k=0}^{k^*-1} e^{-(T-t_{k+1})} + d S \kappa^2 \sum_{k=k^*}^{N-1} \frac{(T-t_k)^2}{(T-t_{k+1})^2} \nonumber \\
    &\stackrel{(vi)}{\lesssim} d S \kappa^2 \sum_{k=0}^{k^*-1} e^{-(T-t_{k+1})} + d S \kappa \log \delta^{-1} \nonumber \\
    &\stackrel{(vii)}{\lesssim} d S \kappa (1-e^{-T}) + d S \kappa \log \delta^{-1}.
\end{align}
Here $(v)$ follows because when $z \geq 1$, we have $1-e^{-z} \in [1-e^{-1}, 1)$ and thus
\[ \frac{e^z}{(e^z-1)^2} = \frac{e^{-z}}{(1-e^{-z})^2} \lesssim e^{-z}. \]
For $(vi)$, the inequality follows because when $k > k^*$, we have $(T-t_k) - (T-t_{k+1}) = \kappa (T-t_{k})$. Also, in order to reach that $T-t_{N} \asymp \delta$ when $T-t_{k^*} = 1 + O(\kappa)$, we need
\[ N-k^* \asymp \log_{1 - \kappa} \delta \asymp \frac{\log \delta^{-1}}{\kappa}. \]
For $(vii)$, it follows because
\[ \kappa \sum_{k=0}^{k^*-1} e^{-(T-t_{k+1})} = \int_{1}^T e^{-z} \d z (1 + O(\kappa)) \lesssim 1 - e^{-T}. \]

Therefore,
\begin{align*}
    &\TV{\rvec{q}_{T-\delta}}{p_{T-\delta}} \\
    &\lesssim \sqrt{d \log S } \cdot e^{-T/2} + \sqrt{\eps_{\text{score-tv}}} + d^2 S \sum_{k=0}^{N-1} \max\{1, (T-t_{k+1})^{-2}\} (t_{k+1}-t_k)^2 \\
    &\lesssim \sqrt{d \log S } \cdot e^{-T/2} + \sqrt{\eps_{\text{score-tv}}} + \kappa d^2 S (T+\log \delta^{-1}),
\end{align*}
where the last line follows from \cite[Lemma~18]{chen2023improved}.
Finally, note that as $\delta \to 0$, (cf. Theorem~6 of \cite{chen2024uniformization})
\[ \TV{\rvec{q}_{T-\delta}}{q_0} \lesssim d \delta. \]
The proof is now complete.







\section{Proof of \texorpdfstring{\Cref{cor:conv_tv_absorb_nostop}}{Corollary 1}}

We write $\gamma$ for $\gamma(q_0)$ as a shorthand. The only difference is to obtain a modified upper bound on the discretization error in \eqref{eq:upper_main}. 
Similarly continuing from \eqref{eq:tv-unif-main}, with the $\gamma$ as defined, we have
\begin{align} \label{eq:tv-absorb-nonstop-main}
    &\E_{x_{t_k} \sim \rvec{q}_{t_k}} \sbrc{h_t(x_{t_k}) - h_{t_k}(x_{t_k})} \nonumber \\
    &\lesssim (t-t_k) \E_{x_{t_k} \sim \rvec{q}_{t_k}} \sum_{y \neq x_{t_k}} \sup_{t' \in (T-t_{k+1}, T-t_k]} \abs{ \frac{\partial}{\partial t'} \brc{ \frac{q_{T-t'}(y)}{q_{T-t'}(x_{t_k})} } } R(y,x_{t_k}) \nonumber \\
    &\stackrel{(i)}{\lesssim} d S \min\cbrc{ \frac{e^{T-t_{k+1}}}{(e^{T-t_{k+1}}-1)^2}, \gamma^{-1} } (t-t_k),
\end{align}
where $(i)$ follows from Lemma 7 of \cite[Lemma~7]{liang2025absorb}. Note that $\frac{e^z}{(e^z-1)^2}$ is monotonically decreasing. Define $k^* := \inf\cbrc{k: \frac{e^{T-t_{k+1}}}{(e^{T-t_{k+1}}-1)^2} \geq \gamma^{-1}}$. Below, we have a slightly different upper bound for the dominant term of the discretization error:
\begin{align*}
    & \sum_{k=0}^{N-1} \int_{t_k}^{t_{k+1}} \E_{x_{t_k} \sim \rvec{q}_{t_k}} \sbrc{h_t(x_{t_k}) - h_{t_k}(x_{t_k})} \d t \\
    &\lesssim d S \sum_{k=0}^{N-1} \min\cbrc{ \frac{e^{T-t_{k+1}}}{(e^{T-t_{k+1}}-1)^2}, \gamma^{-1} } (t_{k+1}-t_k)^2 \\
    &= d S \kappa^2 \sum_{k=0}^{k^*-1} \frac{e^{T-t_{k+1}}}{(e^{T-t_{k+1}}-1)^2} + d S \kappa^2 \gamma^{-1} (N-k^*) \\
    &\stackrel{(ii)}{\lesssim} d S \kappa (1-e^{-T}) + d S \kappa^2 \gamma^{-1} (N-k^*) \\
    &\stackrel{(iii)}{\leq} d S \kappa (1-e^{-T}) + d S \kappa \gamma^{-\frac{1}{2}}
\end{align*}
where $(ii)$ follows similarly from $(v)$ in \eqref{eq:tv-absorb-main2}, and $(iii)$ is because of the following. With the defined $k^*$ and for small $\gamma$, since $\frac{e^z}{(e^z-1)^2} \approx z^{-2}$, we have $T-t_{k^*} \asymp \sqrt{\gamma}$, and thus $N-k^* \asymp \frac{\sqrt{\gamma}}{\kappa}$.
The rest of the proof is similar to  \Cref{thm:conv_tv_absorb}. 

\section{Proof of \texorpdfstring{\Cref{thm:lower_absorb}}{Theorem 3}}
\label{app:thm:lower_absorb_proof}

We first construct a distribution $q^*$ as follows. Define $\gamma := \frac{\eps^{1/4}}{d} > 0$. We want to define an augmented forward process such that $q^*$ is the $\gamma$-perturbation of $q_0 = \bm{\delta}_{\bm{a}}$, denoted by $q_\gamma$. 
Here we let $\bm{\delta}_{\bm{a}}$ denote a delta-distribution such that $a^i \neq \mask,~\forall i \in [d]$. 
Note that such an augmentation is key to the proof, where we are coupling the forward process starting from $\bm{\delta}_{\bm{a}}$ and that starting from $q_\gamma$.

We first explain why such a $q_\gamma$ satisfies $\gamma(q_\gamma) > 0$. Indeed,
\begin{align*}
    \frac{q_\gamma^i(\mask|x^{-i})}{\max_{u^i \in [S]: u^i \neq \mask} q_\gamma^i(u^i|x^{-i})} &= \frac{1}{\max_{u^i \in [S]: u^i \neq \mask} q_\gamma(u^i,x^{-i}) / q_\gamma(\mask,x^{-i})} \\
    &\geq \gamma > 0,
\end{align*}
where the last line follows by the score upper bound in Lemma~1 of \cite{liang2025absorb}.

With such a special setup, the (augmented) forward CTMC has the following properties. Suppose that $y \neq x$ only on the $i$-th component. Then, with $t \in [\gamma, T+\gamma]$ (due to augmentation),
\begin{align*}
    &q_t(x) = (1-e^{-t})^{\sum_i \ind{x^i = \mask}} (e^{-t})^{\sum_i \ind{x^i = a^i}} 0^{\sum_i \ind{x^i \neq \mask, x^i \neq a^i}}\\
    &\frac{q_t(y)}{q_t(x)} = \frac{e^{-t}}{1-e^{-t}},\quad \text{only if}~y^i = a^i,x^i = \mask
\end{align*}
After placing equal-spaced discretization points on $[\gamma, T+\gamma]$, the Euler method proceeds as in \eqref{eq:def_euler} with
\[ \hat{R}_k^i(\mask,y^i) = \frac{e^{-(T+\gamma-t_k)}}{1-e^{-(T+\gamma-t_k)}},\quad \text{only if}~y^i = a^i,~\text{otherwise}~0. \]
Due to the particular design with the absorbing rate, once this token jumps from $\mask$ to $a^i$, it will stay there.
Also note that the probability on the right-hand side is independent of $i$. This implies that the resulting $p_T$ is independent in $i$, with
\begin{equation} \label{eq:lower_def_pm}
    p_T^i(\mask) = \prod_{k=0}^{N-1} \brc{1 - \frac{\kappa}{e^{T+\gamma-t_k} - 1}} =: p_M. 
\end{equation}

Now we proceeds to analyze $p_M$. We have
\begin{align} \label{eq:thm_lower_absorb_logpm}
    \log p_M &= \sum_{k=0}^{N-1} \log \brc{1 - \frac{\kappa}{e^{T+\gamma-t_k} - 1}} \nonumber\\
    &= \sum_{k=0}^{N-1} - \frac{\kappa}{e^{T+\gamma-t_k} - 1} - \frac{\kappa^2}{2 (e^{T+\gamma-t_k} - 1)^2} + O(\kappa^3) \nonumber\\
    &= - \int_{\gamma}^{T+\gamma} \frac{1}{e^{t} - 1} \d t + \Tilde{O}(\gamma^{-1} \kappa),
\end{align}
where we explain the term $\Tilde{O}(\gamma^{-1} \kappa)$ as follows. 
First, by the residual of the Riemann sum, 
\[ \abs{\sum_{k=0}^{N-1} - \frac{\kappa}{e^{T+\gamma-t_k} - 1} - \int_{\gamma}^{T+\gamma} \frac{1}{e^{t} - 1} \d t} \leq \gamma^{-1} \kappa T = \Tilde{O}(\gamma^{-1} \kappa) \]
if $T = \Tilde{O}(1)$. Also,
\[ \sum_{k=0}^{N-1} \frac{\kappa^2}{2 (e^{T+\gamma-t_k} - 1)^2} = O\brc{\kappa \int_{\gamma}^{T+\gamma} \frac{1}{(e^{t} - 1)^2} \d t} = O(\gamma^{-1} \kappa). \]
Also note that $\gamma^{-1} \kappa = O(\eps^{1/4})$. The upper bound for the integral is due to the augmented forward process. For small $\gamma$ and large $T$, the integral can be evaluated as
\[ \int_{\gamma}^{T+\gamma} \frac{1}{e^{t} - 1} \d t = \log\gamma^{-1} - e^{-T} + O(1),\]
and thus
\[ p_M \asymp e^{\log\gamma + e^{-T}} (1 + \Tilde{O}(\gamma^{-1} \kappa)) = \gamma e^{e^{-T}} + \Tilde{O}(e^{e^{-T}} \kappa). \]

Now, note that, when $\gamma$ vanishes quickly enough such that $\gamma d \to 0$, we have
\begin{align*}
    \TV{q_\gamma}{p_T} &\geq \abs{ q_\gamma(\bm{a}) - p_T(\bm{a})} \\
    &= \abs{ e^{-\gamma d} - (1-p_M)^d} \\
    &= \abs{ 1 - \gamma d - (1-d p_M)} + O(d^2 \gamma^2) \\
    &= d \abs{ p_M - \gamma } + O(d^2 \gamma^2).
\end{align*}
Here note that $O(d^2 \gamma^2) = O(\eps^{1/2})$. Thus, in order that $\TV{q_\gamma}{p_T} \lesssim \sqrt{\eps}$, we must have
\begin{align*}
    \sqrt{\eps} &\gtrsim d \abs{p_M - \gamma} + O(d^2 \gamma^2)\\
    &= d \abs{\gamma e^{e^{-T}} + \Tilde{O}(e^{e^{-T}} \kappa) - \gamma} + O(d^2 \gamma^2)\\
    &= d \abs{\gamma e^{-T} + \Tilde{O}(e^{e^{-T}} \kappa)} + O(d^2 \gamma^2 + \gamma d e^{-2T})
\end{align*}
where the last line follows when $T$ is large and because $e^z - 1 = z + O(z^2)$ when $z \to 0$. Thus, when $\gamma = \frac{\eps^{1/4}}{d}$, it is required that
\[ T \gtrsim \log\frac{1}{\eps^{1/4}},\quad \kappa \lesssim \frac{\sqrt{\eps}}{d}. \]
Then, the number of steps satisfies that
\[ N = \frac{T}{\kappa} \gtrsim \frac{d}{\sqrt{\eps}} \log\frac{1}{\eps^{1/4}} = \Tilde{\Omega}\brc{\frac{d}{\sqrt{\eps}}}. \]
The proof is now complete.

\section{Proof of \texorpdfstring{\Cref{prop: equivalence between lse and nelbo}}{Proposition 2}}
\label{app:prop: equivalence between lse and nelbo}

Suppose $x_t$ has $i$-th token being $\mask$, i.e., $x_t = x_t^{-i} \oplus_i \mask$. Recall the connection between the score estimator $s_t(y,x)$ and $\mu_\theta$ in \eqref{eq:equivalence-between-score-and-mu}
\begin{equation*}
s_t(x_t^{-i} \oplus_i a,x_t)
= \frac{\alpha_t}{1-\alpha_t}\, \mu_\theta^i(x_t,t)[a],
\end{equation*}

and the connection between concrete score and clean data distribution in \eqref{eq: equivalence-between-true-score-and-clean-distribution}
\begin{equation*}
\frac{q_t(x_t^{-i}\oplus_i a)}{q_t(x_t)}
= \frac{\alpha_t}{1-\alpha_t}\,
q_0^i(a|x_t^{UM}),
\end{equation*}
where $x_t^{UM}$ denotes the unmasked token collection of $x_t$.

Then we have
\begin{equation}
\label{eq: alphat / 1-alphat}
    \sum_{a\neq \mask} \frac{q_t(x_t^{-i}\oplus_i a)}{q_t(x_t)} = \frac{\alpha_t}{1-\alpha_t} \sum_{a\neq \mask} q_0^i(a|x_t^{UM}) = \frac{\alpha_t}{1-\alpha_t}.
\end{equation}

Based on the definition of the integrated $\mathcal{L}_{SE}$ loss, we derive
\begin{align}
\int_0^{+\infty} \mathcal{L}_{SE}(s,t)\, \mathrm{d}t
&= \int_0^{+\infty}
\mathbb{E}_{x_t \sim q_t}
\sum_{y \neq x_t}
R_t(y,x_t)
\Big(
s_t(y,x_t)
- \frac{q_t(y)}{q_t(x_t)}
- \frac{q_t(y)}{q_t(x_t)}\log \tfrac{s_t(y,x_t)}{q_t(y)/q_t(x_t)}
\Big)
\, \mathrm{d}t \nonumber \\
&\overset{(i)}{=}\int_0^{+\infty} \mathbb{E}_{x_t \sim q_t} \sum_{l: x_t^l = \mask}
\sum_{a \neq \mask}
\left(
\frac{\alpha_t}{1-\alpha_t} \mu_\theta^l(x_t,t)[a]
- \frac{q_t(x_t^{-l}\oplus_l a)}{q_t(x_t)}
\right) \mathrm{d}t \nonumber \\
&\quad - \int_0^{+\infty} \mathbb{E}_{x_t \sim q_t} \sum_{l: x_t^l = \mask}\sum_{a \neq \mask}
\frac{q_t(x_t^{-l}\oplus_l a)}{q_t(x_t)}
\log
\frac{
\frac{\alpha_t}{1-\alpha_t} \mu_\theta^l(x_t,t)[a]
}{
\frac{q_t(x_t^{-l}\oplus_l a)}{q_t(x_t)}
} \mathrm{d}t \nonumber \\
& \overset{(ii)}{=} 0 - \int_0^{+\infty} \mathbb{E}_{x_t \sim q_t} \sum_{l: x_t^l = \mask}\sum_{a \neq \mask}
\frac{q_t(x_t^{-l}\oplus_l a)}{q_t(x_t)}
\log
\frac{
\frac{\alpha_t}{1-\alpha_t} \mu_\theta^l(x_t,t)[a]
}{
\frac{q_t(x_t^{-l}\oplus_l a)}{q_t(x_t)}
} \mathrm{d}t \nonumber\\
& \overset{(iii)}{=} - \int_0^{+\infty} \mathbb{E}_{x_t \sim q_t} \sum_{l: x_t^l = \mask}\sum_{a \neq \mask}
\frac{\alpha_t}{1-\alpha_t}
q_0^l(a|x_t^{UM})
\log
\frac{
\mu_\theta^l(x_t,t)[a]
}{
q_0^l(a|x_t^{UM})
} \mathrm{d}t \nonumber\\
&=\int_0^{+\infty}
- \frac{\alpha_t}{1-\alpha_t}
\mathbb{E}_{x_t}
\sum_{l: x_t^l = \mask}
\sum_{a \neq \mask}
q_0^l(a|x_t^{UM})
\log \mu_\theta^l(x_t,t)[a]
\, \mathrm{d}t \nonumber \\
&\quad -\int_0^{+\infty}
\frac{\alpha_t}{1-\alpha_t}
\mathbb{E}_{x_t}
\sum_{l: x_t^l = \mask}
H\left(q_0^l(\cdot|x_t^{UM})\right)
\, \mathrm{d}t, \label{eq:intlse}
\end{align}
where $(i)$ follows from \eqref{eq:equivalence-between-score-and-mu}, $(ii)$ follows from \eqref{eq: alphat / 1-alphat} and because $\sum_{a\neq \mask}\mu_{\theta}^l(x_t,t)[a] = 1$, and $(iii)$ follows from \eqref{eq: equivalence-between-true-score-and-clean-distribution}, and $H(\cdot)$ in the last equality denotes the entropy. The derivation of \eqref{eq:intlse} implies that $\int_0^{+\infty} \mathcal{L}_{SE}(s,t)\, \mathrm{d}t = 0$ if and only if $q_0^l(\cdot|x_t^{UM}) = \mu_\theta^l(x_t,t)[\cdot]$ for all $t$, $x_t$, and $l$.

First, we show that the first term in \eqref{eq:intlse} satisfies:
\begin{equation}\label{eq:first term in prop 2}
    \int_0^{+\infty}- \frac{\alpha_t}{1-\alpha_t}\mathbb{E}_{x_t}\sum_{l: x_t^l = \mask}\sum_{a \neq \mask}q_0^l(a|x_t^{UM})\log \mu_\theta^l(x_t,t)[a]\, \mathrm{d}t = \mathbb{E}_{x_0 \sim q_0}[\mathcal{L}_\infty(x_0)].
\end{equation}
To proceed, by the definition of NELBO $\mathcal{L}_\infty(x_0)$ in \eqref{eq:def of nelbo}, we have
\begin{equation*}
    \begin{aligned}
        \mathbb{E}_{x_0 \sim q_0}[\mathcal{L}_\infty(x_0)]
&=
\int_0^{+\infty}
- \frac{\alpha_t}{1-\alpha_t}
\mathbb{E}_{x_t,x_0}
\sum_{l: x_t^l = \mask}
\log \mu_\theta^l(x_t,t)[x_0^l]
\, \mathrm{d}t
\\
& = \int_0^{+\infty}
- \frac{\alpha_t}{1-\alpha_t}
\mathbb{E}_{x_t}\mathbb{E}_{x_0|x_t}
\sum_{l: x_t^l = \mask}
\log \mu_\theta^l(x_t,t)[x_0^l]
\, \mathrm{d}t\\
&=
\int_0^{+\infty}
- \frac{\alpha_t}{1-\alpha_t}
\mathbb{E}_{x_t}
\sum_{l: x_t^l = \mask}
\mathbb{E}_{x_0^l \sim q_0^l(\cdot|x_t^{UM})}
\log \mu_\theta^l(x_t,t)[x_0^l]
\, \mathrm{d}t
\\
&=
\int_0^{+\infty}
- \frac{\alpha_t}{1-\alpha_t}
\mathbb{E}_{x_t}
\sum_{l: x_t^l = \mask}
\sum_{a \neq \mask}
q_0^l(a|x_t^{UM})
\log \mu_\theta^l(x_t,t)[a]
\, \mathrm{d}t .
    \end{aligned}
\end{equation*}
This establishes \eqref{eq:first term in prop 2}.

Second, we show the second term in \eqref{eq:intlse} satisfies:
\begin{equation}
\label{eq:second term in prop 2}
     \int_0^{+\infty}
\frac{\alpha_t}{1-\alpha_t}
\mathbb{E}_{x_t}
\sum_{l: x_t^l = \mask}
H\left(q_0^l(\cdot|x_t^{UM})\right)
\, \mathrm{d}t = \sum_{k = 1}^d \frac{1}{k} \mathbb{E}_{x_0 \sim q_0} \mathbb{E}_{\calM_k}
\sum_{l \in \calM_k}
H\left(q_0^l(\cdot|x_0^{\calM_k^c})\right),
\end{equation}
where $\calM_k\subseteq [d]$, with $|\calM_k|=k$ for $k=1,\ldots,d$, denotes the subset with exactly $k$ token indices uniformly sampled from the total index set, $\calM_k^c=[d] \setminus \calM_k$, $x_0^{\calM_k^c}:=\{x_0^l\}_{l\in \calM_k^c}$, and $q_0^l(\cdot|x_0^{\calM_k^c})$ denotes the conditional distribution of $x_0^{l}$ given $x_0^{\calM_k^c}$.

We proceed as follows.
\begin{equation*}
    \begin{aligned}
\int_0^{+\infty}&
\frac{\alpha_t}{1-\alpha_t}
\mathbb{E}_{x_t}
\sum_{l: x_t^l = \mask}
H\left(q_0^l(\cdot|x_t^{UM})\right)
\, \mathrm{d}t\\
& = \int_0^{+\infty}
\frac{\alpha_t}{1-\alpha_t}
\mathbb{E}_{k}\mathbb{E}_{x_0 \sim q_0}\mathbb{E}_{\calM_k}
\sum_{l\in \calM_k}
H\left(q_0^l(\cdot|x_0^{\calM_k^c})\right)
\, \mathrm{d}t\\
& = \int_0^{+\infty}
\frac{\alpha_t}{1-\alpha_t}
\sum_{k = 1}^d \binom{d}{k}(1-\alpha_t)^k\alpha_t^{d-k}\ \mathbb{E}_{x_0 \sim q_0}\mathbb{E}_{\calM_k}
\sum_{l\in \calM_k}
H\left(q_0^l(\cdot|x_0^{\calM_k^c})\right)
\, \mathrm{d}t\\
& = \int_0^1
\frac{1}{1-\alpha_t}
\sum_{k = 1}^d \binom{d}{k}(1-\alpha_t)^k\alpha_t^{d-k}\ \mathbb{E}_{x_0 \sim q_0}\mathbb{E}_{\calM_k}
\sum_{l\in \calM_k}
H\left(q_0^l(\cdot|x_0^{\calM_k^c})\right)
\, \mathrm{d}\alpha_t\\
& = \int_0^1 \sum_{k = 1}^d \frac{d!}{k!(d-k)!}(1-\alpha_t)^{k-1}\alpha_t^{d-k}\ \mathbb{E}_{x_0 \sim q_0}\mathbb{E}_{\calM_k}
\sum_{l\in \calM_k}
H\left(q_0^l(\cdot|x_0^{\calM_k^c})\right)
\, \mathrm{d}\alpha_t\\
& = \sum_{k = 1}^d \frac{1}{k} \left(\int_0^1  \frac{d!}{(k-1)!(d-k)!}(1-\alpha_t)^{k-1}\alpha_t^{d-k}\, \mathrm{d}\alpha_t\right)\ \mathbb{E}_{x_0 \sim q_0}\mathbb{E}_{\calM_k}
\sum_{l\in \calM_k}
H\left(q_0^l(\cdot|x_0^{\calM_k^c})\right) \\
& = \sum_{k = 1}^d \frac{1}{k} \mathbb{E}_{x_0 \sim q_0}\mathbb{E}_{\calM_k}
\sum_{l\in \calM_k}
H\left(q_0^l(\cdot|x_0^{\calM_k^c})\right). \\
    \end{aligned}
\end{equation*}
This establishes \eqref{eq:second term in prop 2}.

Finally, combining \eqref{eq:first term in prop 2} and \eqref{eq:second term in prop 2}, we obtain as desired,
\begin{align}
\int_0^{+\infty} \mathcal{L}_{SE}(s,t)\, \mathrm{d}t
&=
\mathbb{E}_{x_0 \sim q_0}[\mathcal{L}_\infty(x_0)]
-
\sum_{k = 1}^d \frac{1}{k} \mathbb{E}_{x_0 \sim q_0}\mathbb{E}_{\calM_k}
\sum_{l\in \calM_k}
H\left(q_0^l(\cdot|x_0^{\calM_k^c})\right).
\end{align}

The proof is now complete.

\section{FHS Sampling in CTMC Framework: \Cref{prop: fhs exact} and its Proof}


In this section, we establish \Cref{prop: fhs exact}, which provides statistical characterization of the FHS sampling process and show that it exactly matches the reverse process under the CTMC framework for absorbing-rate diffusion models. This will highly facilitate us to develop the error analysis for \Cref{thm: error bound of fhs}. 




We let $\mathbb{P}_{\mathrm{FHS}}$ and $\mathbb{P}_{\mathrm{CTMC}}$ denote the probability measure over the entire path generated by the FHS sampling algorithm and the ground truth reverse absorbing-rate continuous-time Markov chain (CTMC). Both processes admit a common simple representation. Since the sequence contains $d$ tokens, starts from the fully masked state, and each token remains fixed once it is unmasked, both processes are fully characterized by: (i) the $d$ unmasking times, (ii) which token is unmasked at each such time, and (iii) the token value to which that token is unmasked. We therefore introduce notation for these random variables. Let $k \in {0,1,\ldots,d}$ index the unmasking events. 
\begin{enumerate}
\item For $k\in[d]$, let $\tau_k$ denote the time at which the $(d-k)$-th unmasking occurs. Specifically, $\tau_d$ denotes the initialization time of the sampling processes. Clearly, unmasking times $\{\tau_k\}_k$ are increasing monotonically with respect to $k$, satisfying
\begin{equation*}
\tau_0 < \tau_1 < \cdots < \tau_{d-1} < \tau_d = +\infty,
\end{equation*}
where $\tau_d = +\infty$ is initialized by $\alpha(\tau_d) = e^{-\tau_d} = 0$.

\item Let $x_{\tau_k}$ denote the sequence state immediately after time $\tau_k$, i.e., $\tau_k - \delta$. Between consecutive unmasking events, the state remains constant; specifically, we set $x_t \equiv x_{\tau_k}$ for $t \in (\tau_{k-1},\tau_k]$ (equivalently, until the next unmasking time). 

\item Let $i_{\tau_k}\in [d]$ denote the token index unmasked at time $\tau_k$.

\item Let $a_{\tau_k}\in [S]$ denote the realized token value assigned at that time. Thus, the newly revealed token satisfies $x_{\tau_k}^{i_{\tau_k}} = a_{\tau_k}$. 
\end{enumerate}
To simplify the notation, we use $x_k,i_k,a_k$ to denote $x_{\tau_k}, i_{\tau_k}, a_{\tau_k}$, respectively. Further note that $x_k$ is fully determined by $\{i_j\}_{j=d}^k$ and $\{a_j\}_{j=d}^k$. 


The following proposition provides useful statistical properties of the FHS and reverse CTMC processes and their connections.
\begin{proposition}\label{prop: fhs exact}
The FHS and reverse absorbing-rate CTMC processes satisfy the following properties:
\begin{equation*}
    \begin{aligned}
    &\mathbb{P}_{\mathrm{FHS}}(\tau_{k-1}| x_k, \tau_k) = \mathbb{P}_{\mathrm{CTMC}}(\tau_{k-1}| x_k, \tau_k)=\mathbb{P}_{\mathrm{CTMC}}(\tau_{k-1}|\tau_k),\\
    &\mathbb{P}_{\mathrm{FHS}}(i_{k-1}| x_k, \tau_{k-1}) = \mathbb{P}_{\mathrm{CTMC}}(i_{k-1}| x_k, \tau_{k-1}) = \frac{1}{k},\\
    &\mathbb{P}_{\mathrm{FHS}}(a_{k-1}| x_k, \tau_{k-1},i_{k-1}) = \mu_\theta^{i_{k-1}}(x_k,\tau_{k-1})[a_{k-1}],\\
    &\mathbb{P}_{\mathrm{CTMC}}(a_{k-1}| x_k, \tau_{k-1},i_{k-1}) 
    = q_0^{i_{k-1}}(a_{k-1}|x_k^{UM}). 
    \end{aligned}
\end{equation*}
\end{proposition}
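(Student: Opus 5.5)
The plan is to verify the four identities separately, treating the FHS side by reading off \Cref{alg:first-hitting-sampling} and the CTMC side by computing the law of the reverse absorbing-rate chain. The convenient description of the reverse chain comes from the forward one: under the absorbing rate with $\beta_t\equiv 1$, each token of the forward process is masked independently at an $\mathrm{Exp}(1)$ time $E_i$, independent of $x_0$. The reverse process is the time-reversal, so the reverse unmasking times are exactly the order statistics of $E_1,\dots,E_d$, the index of the token unmasked at each event is determined by which $E_i$ attains that order statistic, and the revealed value is $x_0^i$.

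\textbf{Time identity.} I will compute the conditional law of the next-smaller order statistic. Given that the $k$ still-masked tokens at time $\tau_k$ are exactly those with $E_i<\tau_k$, those $E_i$ are i.i.d. with density $e^{-t}/(1-e^{-\tau_k})$ on $[0,\tau_k)$. This holds by independence of the $E_i$ and their independence from $x_0$, so conditioning additionally on $x_k$ (i.e., on the revealed values and on which indices are unmasked) does not change it. Hence $\tau_{k-1}=\max$ of these $k$ variables, with CDF
\[
\mathbb{P}(\tau_{k-1}\le s\mid\tau_k)=\Bigl(\tfrac{1-e^{-s}}{1-e^{-\tau_k}}\Bigr)^k .
\]
Inverting this CDF at $u^{1/k}$ gives exactly the FHS update $\tau_{k-1}=\alpha^{-1}\bigl(1-u^{1/k}(1-\alpha_{\tau_k})\bigr)$ for $u\sim\mathrm{Uniform}([0,1])$; the case $\tau_d=\infty$ corresponds to $\alpha_{\tau_d}=0$. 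This gives the first line.

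\textbf{Index identity.} By exchangeability of the i.i.d. $E_i$ among the masked coordinates, the argmax is uniform over the $k$ masked indices and independent of the value of the max. In FHS the index is drawn uniformly by construction, which gives the second line.

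\textbf{Value identities.} On the FHS side, $a_{k-1}\sim\mathrm{Cat}(\mu_\theta^{i}(x_k,\tau_{k-1}))$ by construction, which is the third line. On the CTMC side, I need the conditional law of $x_0^{i}$ given $x_k$, $\tau_{k-1}$ and $i_{k-1}=i$. Since the masking times are independent of $x_0$, conditioning on the times and index only specifies which coordinates are revealed. So the law is $q_0^i(\cdot\mid x_k^{UM})$, where the revealed values of $x_0$ are exactly $x_k^{UM}$. This matches the paper's identity \eqref{eq: equivalence-between-true-score-and-clean-distribution}.

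\textbf{Main obstacle.} The step requiring the most care is justifying that the reverse CTMC's event times and indices are independent of the data values. I would do this by working with the joint law of $(x_0,E_1,\dots,E_d)$ and noting that the reverse path is a measurable function of it, rather than computing with the reverse rates. As a cross-check, the total rate computed from the reverse rates, $k\cdot\frac{e^{-t}}{1-e^{-t}}$ via \eqref{eq: alphat / 1-alphat}, reproduces the same time density.
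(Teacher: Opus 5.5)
Your proof is correct, but it takes a different route from the paper's.

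\textbf{The paper's route.} It never represents the forward chain pathwise. It computes the reverse rates $\widetilde R_t(x,x^{-i}\oplus_i a)=q_t(x^{-i}\oplus_i a)/q_t(x)$. It then uses \eqref{eq: alphat / 1-alphat} to get the per-coordinate exit rate $\alpha_t/(1-\alpha_t)$ and the total rate $k\,\alpha_t/(1-\alpha_t)$. Finally it invokes a general lemma on time-inhomogeneous CTMCs: the holding-time survival $\exp(-\int\Lambda)$, derived from an ODE, and the jump probabilities $\widetilde R_t(x,y)/\Lambda(t,x)$. From these it reads off the time, index and value laws, the last via the concrete-score identity \eqref{eq: equivalence-between-true-score-and-clean-distribution}.

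\textbf{Your route.} You realize the forward chain by i.i.d.\ $\mathrm{Exp}(1)$ masking clocks independent of $x_0$, and read the reversed path off their order statistics. Everything then reduces to the product density of $(x_0,E_1,\dots,E_d)$.

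\textbf{What your route buys.} It dispenses with both the CTMC lemma and the reverse-rate computation. It treats the infinite-horizon start $\tau_d=+\infty$ without any limiting argument. It also makes the decoupling transparent, and in fact gives a stronger statement: the whole sequence of times and indices is a function of the clocks alone, hence independent of all token values. That is exactly the independence invoked in step (ii) of the proof of \Cref{thm: error bound of fhs}. As a bonus, \Cref{lem: alpha follows beta distribution} comes for free, since the $1-e^{-E_i}$ are i.i.d.\ uniform.

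\textbf{What the paper's route buys.} It works directly with the objects the sampler estimates (reverse rates and concrete scores). This exhibits FHS as an exact simulation of the reverse chain's holding times and jump chain. It would also adapt to reverse dynamics specified only through their rates.

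\textbf{Two small points.}
\begin{itemize}
\item You invert the CDF at $u$, not at $u^{1/k}$. Solving $\bigl(\tfrac{1-e^{-s}}{1-e^{-\tau_k}}\bigr)^k=u$ is what produces the FHS update.
\item Your clock representation presupposes $x_0^i\neq\mask$ for every $i$, so that there are exactly $d$ unmasking events. The paper's proof presupposes the same.
\end{itemize}
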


Clearly, the second equality of the first property shows that given $\tau_k$, the next umasking time $\tau_{k-1}$ is independent of $x_k$.
\begin{proof}
The proof contains three steps: Step 1 shows the basic properties of transition dynamics of reverse CTMC, Step 2 further develops the statistical distributions for reverse CTMC, and Step 3 establishes the connection between the statistics of FHS and reverse CTMC processes.

{\bf Step 1. Basic properties of reverse CTMC:} We use $\widetilde{R}_t$ to denote the rate matrix of the reverse absorbing-rate CTMC indexed by forward time $t$. 
This is for the convenience to analyze both processes with the same time indexing. 
Specifically, we define the reverse rate matrix $\widetilde{R}_t$ indexed by the forward time $t$ as, for $y \neq x$, 
\begin{equation}
\label{eq: def of r tilde}
\widetilde R_t(x,y)
\;:=\;
\lim_{\delta \downarrow 0}
\frac{
\mathbb{P}\!\left( X_{t-\delta} = y \,\middle|\, X_t = x \right)
}{\delta},
\end{equation}
or equivalently,
\begin{equation*}
    \mathbb{P}(X_{t-\Delta t} = y|X_{t} = x) = \ind{y = x} + \widetilde{R}_t(x,y)\Delta t + o(\Delta t).
\end{equation*}

The diagonal entries are defined by
\begin{equation*}
\widetilde R_s(x,x)
\;=\;
- \sum_{y \neq x} \widetilde R_s(x,y).
\end{equation*}

When horizon $T$ is finite, then the defined reverse rate matrix $\widetilde{R}_t$ indexed by the forward time is equivalent to previous reverse rate matrix $\rvec{R}_t$ by
\begin{equation*}
    \widetilde{R}_t(x,y) = \rvec{R}_{T-t}(x,y),\quad \forall y\neq x.
\end{equation*}

Based on the notations and definitions above, we next have several basic results concerning time-inhomogeneous CTMCs in \Cref{lem: CTMC rate property}.
\begin{lemma}\label{lem: CTMC rate property}
Let $(Y_t)_t$ denote the reverse absorbing-rate time-inhomogeneous CTMC with the generator matrix $\{\widetilde{R}_t\}$ defined in \eqref{eq: def of r tilde}. Let the unmasking time in state $x$ be $ \tau := \sup\{s<t : Y_s \neq x\} $. Then
\begin{enumerate}
\item The instantaneous total rate of leaving $x$ at time $t$ is
\[
\Lambda(t,x)= \sum_{y\neq x} \widetilde{R}_t(x,y).
\]
\item The unmasking time $\tau$ in state $x$ satisfies
\[
\mathbb{P}(\tau<t-h\mid Y_t=x)
= \exp\!\left(-\int_{t-h}^{t} \Lambda(u,x)\,du\right).
\]
\item Given that unmasking occurs at time $\tau$, the distribution of the predicted token value satisfies
\[
\lim\limits_{\delta\downarrow0}\ \mathbb{P}(Y_\tau = y \mid Y_t=x, \tau \in (t-\delta,t])
= 
\frac{\widetilde{R}_t(x,y)}{\Lambda(t,x)}.
\]
\end{enumerate}

\end{lemma}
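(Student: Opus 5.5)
The plan is to derive all three items directly from the infinitesimal definition \eqref{eq: def of r tilde} of $\widetilde R_t$, treating the reverse chain (run backward in forward time $t$) as an ordinary time-inhomogeneous CTMC. We assume that $\widetilde R_t(x,\cdot)$ is continuous in $t$ on the relevant interval. For the absorbing case this holds, since by \eqref{eq: equivalence-between-true-score-and-clean-distribution} the rates are $\frac{\alpha_t}{1-\alpha_t}q_0^i(\cdot|x^{UM})$, which are smooth for $t>0$. We also assume $\Lambda(t,x)<\infty$, which holds because the state space is finite.

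\textbf{Item 1.} This follows from the definition. Summing $\mathbb{P}(Y_{t-\Delta t}=y\mid Y_t=x)=\widetilde R_t(x,y)\Delta t+o(\Delta t)$ over $y\neq x$ gives
\[
\mathbb{P}(Y_{t-\Delta t}\neq x\mid Y_t=x)=\Lambda(t,x)\Delta t+o(\Delta t).
\]
So $\Lambda(t,x)$ is the instantaneous leaving rate, and it equals $-\widetilde R_t(x,x)$.

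\textbf{Item 2.} Set $G(h):=\mathbb{P}(Y_s=x\ \forall s\in[t-h,t]\mid Y_t=x)$, the probability of no jump on $[t-h,t]$. This event coincides with $\{\tau<t-h\}$. Using the Markov property of the reverse chain at time $t-h$, we get
\[
G(h+\Delta)=G(h)\bigl(1-\Lambda(t-h,x)\Delta+o(\Delta)\bigr),
\]
where conditioning on the path over $[t-h,t]$ reduces to conditioning on $Y_{t-h}=x$. Two points need care here. First, the event ``no jump in a small window'' must differ from ``$Y_{t-h-\Delta}=x$'' only by $o(\Delta)$, since two jumps in a window of length $\Delta$ have probability $O(\Delta^2)$ by boundedness of the rates. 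Second, uniformity of the $o(\Delta)$ term in $h$ on compacts follows from continuity of the rates.

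Dividing by $\Delta$ gives the ODE $G'(h)=-\Lambda(t-h,x)G(h)$ with $G(0)=1$. Its solution is $\exp\bigl(-\int_{t-h}^t\Lambda(u,x)\,du\bigr)$.

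\textbf{Item 3.} By Bayes' rule,
\[
\mathbb{P}(Y_\tau=y\mid Y_t=x,\tau\in(t-\delta,t])=\frac{\mathbb{P}(Y_\tau=y,\tau\in(t-\delta,t]\mid Y_t=x)}{\mathbb{P}(\tau\in(t-\delta,t]\mid Y_t=x)}.
\]
By item 2, the denominator is $1-e^{-\int_{t-\delta}^t\Lambda}=\Lambda(t,x)\delta+o(\delta)$. For the numerator, up to an $O(\delta^2)$ multi-jump correction, the event is $\{Y_{t-\delta}=y\}$ for the single-jump target $y$. Its probability is therefore $\widetilde R_t(x,y)\delta+o(\delta)$. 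Taking the ratio and letting $\delta\downarrow0$ gives $\widetilde R_t(x,y)/\Lambda(t,x)$.

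\textbf{Main obstacle.} The main difficulty is controlling the $o(\cdot)$ terms rigorously in items 2 and 3. This requires showing that at most one jump occurs in short windows up to $O(\delta^2)$, uniformly, and justifying the limit interchange near $t$. Both reduce to the local boundedness and continuity of $t\mapsto\widetilde R_t$ on $(0,\infty)$ noted above. The endpoint $t=+\infty$ needs no separate treatment, because each statement only concerns a fixed finite $t$.
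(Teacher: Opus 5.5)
Your proposal is correct and follows the paper's proof essentially step for step. Item 1 sums the local transition probabilities over $y\neq x$. Item 2 uses the survival function and the Markov property at time $t-h$ to get $S(h+\delta)=S(h)\bigl(1-\Lambda(t-h,x)\delta+o(\delta)\bigr)$, then solves the resulting linear ODE. Item 3 takes the Bayes ratio $\bigl(\widetilde R_t(x,y)\delta+o(\delta)\bigr)/\bigl(\Lambda(t,x)\delta+o(\delta)\bigr)$ and lets $\delta\downarrow 0$. Your extra care about the $O(\delta^2)$ multi-jump corrections and the uniformity of the $o(\cdot)$ terms, which the paper leaves implicit, tightens the argument without changing it.
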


The full proof of \Cref{lem: CTMC rate property} is provided in \Cref{app:lem: CTMC rate property}.


{\bf Step 2. Distributions induced by $\mathbb{P}_{\mathrm{CTMC}}$:} We characterize some important distributions induced by $\mathbb{P}_{\mathrm{CTMC}}$, which will be useful to establish the connection between $\mathbb{P}_{\mathrm{FHS}}$ and $\mathbb{P}_{\mathrm{CTMC}}$. For any $x_k$ with $k\ge 1$, there must exist $k$ $\mask$ tokens in $x_k$ due to the definition of $x_k$. Then without loss of generality, let $x^i$ be one of the $\mask$ tokens in the following proof.

Consider the reverse rate matrix $\widetilde{R}_t$ with the forward time index $t$ of CTMC. For any $a \neq \mask$, we have
\begin{equation}
\label{eq:reverse-rate-h}
\widetilde{R}_t\big(x,x^{-i}\oplus_i a\big)
=
R_{t}\big(x^{-i}\oplus_i a, x\big)\cdot \frac{q_{t}(x^{-i}\oplus_i a)}{q_{t}(x)}
=
 \frac{q_{t}(x^{-i}\oplus_i a)}{q_{t}(x)}.
\end{equation}

Following from Claim 1 of \Cref{lem: CTMC rate property}, the total rate of leaving $x$ at time $t$ on index $i$ is given by
\begin{equation}
\label{eq: total rate of leaving x at time t on index i}
    r_i(t,x):=\sum_{b\neq \mask}\widetilde{R}_{t}(x,x^{-i}\oplus_i b) = \frac{\sum_{b\neq \mask}q_{t}(x^{-i}\oplus_i b)}{q_{t}(x)} = \frac{\alpha_t}{1-\alpha_t},
\end{equation}
where the last equality follows from \eqref{eq: alphat / 1-alphat}.
Then the total reverse jump rate of leaving $x$ at time $t$ is given by
\begin{equation}
\label{eq: Lambda}
\Lambda(t,x) = \sum_{y\neq x} \widetilde{R}_t(x,y) = \sum_{i: x^i=\mask} r_i(t,x)=m(x) \cdot \frac{\alpha_{t}}{1-\alpha_{t}},
\end{equation}
where $m(x)$ denotes the number of masked tokens in $x$.

Then, substituting \eqref{eq: Lambda} into Claim 2 of \Cref{lem: CTMC rate property}, we can derive the conditional distribution of $\tau_{k-1}$ given $x_k$ and $\tau_k$ as:
\begin{equation*}
\begin{aligned}
        \mathbb{P}_{\mathrm{CTMC}}(\tau_{k-1} < t|x_k,\tau_k )& = \exp{\left(-\int_t^{\tau_k}\Lambda(u,x_k)\mathrm{d}u\right)}\\
        & =\exp\!\Big(-\!\!\int_{t}^{\tau_k} k\cdot \frac{e^{-u}}{1-e^{-u}}\,du\Big)\\
& = \exp\!\Big(-\!\! k\cdot \log(1-e^{-u})|_{u = t}^{\tau_k}\Big)\\
& = \left(\frac{1-e^{-t}}{1-e^{-\tau_k}}\right)^k,
\end{aligned}
\end{equation*}
which we note that $\tau_{k-1}$ does not depend on $x_k$ given $\tau_k$.

Next, we derive $\mathbb{P}_{\mathrm{CTMC}}(i_{k-1}|x_k,\tau_{k-1})$. Applying claim 3 in \Cref{lem: CTMC rate property} together with \eqref{eq: total rate of leaving x at time t on index i} and \eqref{eq: Lambda}, we obtain 
\begin{align*}
    \mathbb{P}_{\mathrm{CTMC}}(i_{k-1}|x_k,\tau_{k-1}) & = \sum_{b\neq \mask}  \frac{\widetilde{R}_{\tau_{k-1}}(x_k,x_k^{-i_{k-1}}\oplus_{i_{k-1}} b)}{\Lambda (\tau_{k-1},x_k)} \\
    &= \frac{r_{i_{k-1}}(\tau_{k-1},x_k)}{\Lambda(\tau_{k-1},x_k)} = \frac{\frac{\alpha_t}{1-\alpha_t}}{k\cdot \frac{\alpha_t}{1-\alpha_t}} = \frac{1}{k}.
\end{align*}
 
Finally, we derive the conditional distribution over the predicted token value $a$. Combining \eqref{eq: total rate of leaving x at time t on index i} and Claim 3 in \Cref{lem: CTMC rate property}, we derive, for all $t$ and $i$,
\begin{align*}
    \mathbb{P}_{\mathrm{CTMC}}(a|x_k,t,i)& =\frac{
\widetilde{R}_t\big(x_k, x_k^{-i} \oplus_i a\big)/\Lambda(t,x_k)
}{
\sum_{b\neq \mask} \widetilde{R}_t\big(x_k, x_k^{-i}\oplus_i b \big)/\Lambda(t,x_k)
}.   \\
&= \frac{
\widetilde{R}_t\big(x_k, x_k^{-i} \oplus_i a\big)
}{
\sum_{b\neq \mask} \widetilde{R}_t\big(x_k, x_k^{-i}\oplus_i b \big)
} \\
&\overset{(i)}{=} \frac{
q_{t}\!\big(x_k^{-i}\oplus_i a\big)
}{
\sum_{b\neq \mask} q_{t}(x_k^{-i}\oplus_i b) 
}\\
&=  \frac{
q_{t}\!\big(x_k)
}{
\sum_{b\neq \mask} q_{t}(x_k^{-i}\oplus_i b) 
}  \cdot \frac{
q_{t}\!\big(x_k^{-i}\oplus_i a\big)
}{
q_{t}\!\big(x_k)
}  \\ 
& = \frac{1-\alpha_{t}}{\alpha_{t}}\cdot \frac{
q_{t}\!\big(x_k^{-i}\oplus_i a\big)
}{
q_{t}\!\big(x_k)}\\
& \overset{(ii)}{=} q_0^i(a|x_k^{UM}), 
\end{align*}
where $(i)$ follows by substituting \eqref{eq:reverse-rate-h} and canceling $q_{t}(x_k)$ in the numerator and the denominator, and $(ii)$ is by \eqref{eq: equivalence-between-true-score-and-clean-distribution}.

Therefore, 
\begin{equation*}
    \mathbb{P}_{\mathrm{CTMC}}(a_{k-1}|x_k,\tau_{k-1},i_{k-1}) = q_0^{i_{k-1}}(a_{k-1}|x_k^{UM}) .
\end{equation*}

{\bf Step 3. Connection between $\mathbb{P}_{\mathrm{FHS}}$ and $\mathbb{P}_{\mathrm{CTMC}}$:} 
We derive the distribution of $\mathbb{P}_{\mathrm{FHS}}$ and show its connection to $\mathbb{P}_{\mathrm{CTMC}}$. 

Recall in the FHS algorithm, the update rule sets $\tau_{k-1}$ as 
\begin{equation*}
   \alpha_{\tau_d} \gets 0,\quad \tau_{k-1} \gets \alpha^{-1}(1-u_k^{1/k}(1-\alpha_{\tau_k})), \text{for }k = 1,2,\cdots,d,
\end{equation*}
where $\alpha_t = e^{-t}$, $\alpha^{-1}(a) = -\log(a)$, and $\{u_k\}_{k = 1}^d \overset{\mathrm{i.i.d}}{\sim} \text{Unif}(0,1)$. The above update rule for $\tau_{k-1}$
can be rewritten as
\begin{equation*}
    \left(\frac{1-e^{-\tau_{k-1}}}{1-e^{-\tau_k}}\right)^k = u_k\sim \text{Unif}(0,1),
\end{equation*}
which, based on the inverse CDF method (or inverse transform sampling), is equivalent to sample $\tau_{k-1}$ from the distribution with the CDF of $\mathbb{P}_{\mathrm{CTMC}}(\tau_{k-1} < t|x_k,\tau_k ) = \left(\frac{1-e^{-t}}{1-e^{-\tau_k}}\right)^k$. Therefore,
\begin{equation*}
    \mathbb{P}_{\mathrm{FHS}}(\tau_{k-1}|x_k,\tau_k ) = \mathbb{P}_{\mathrm{CTMC}}(\tau_{k-1} |x_k,\tau_k ),
\end{equation*}
and we further note that $\tau_{k-1}$ does not depend on $x_k$ given $\tau_k$ for both FHS and reverse CTMC processes.

Next, note that FHS updates $i_{k-1}$ by randomly and uniformly selecting an index $i$ from $\calM(x_k) = \{l:x_k^l = \mask\}$ for each $k$, and hence
\begin{equation*}
    \mathbb{P}_{\mathrm{FHS}}(i_{k-1}|x_k,\tau_{k-1}) = \frac{1}{k} = \mathbb{P}_{\mathrm{CTMC}}(i_{k-1}|x_k,\tau_{k-1}).
\end{equation*}

Finally, given $x_k$, $\tau_{k-1}$, and $i_{k-1}$, FHS samples the predicted token $a_{k-1}$ by 
\begin{equation*}
    \mathbb{P}_{\mathrm{FHS}}(a_{k-1}|x_k,\tau_{k-1},i_{k-1}) = \mu_\theta^{i_{k-1}}(x_k,\tau_{k-1})[a_{k-1}].
\end{equation*}
Here, $\mathbb{P}_{\mathrm{FHS}}(a_{k-1}|x_k,\tau_{k-1},i_{k-1}) = \mu_\theta^{i_{k-1}}(x_k,\tau_{k-1})[a_{k-1}]$ is an estimator of the ground-truth distribution token-prediction distribution $\mathbb{P}_{\mathrm{CTMC}}(a_{k-1}|x_k,\tau_{k-1},i_{k-1}) = q_0^{i_{k-1}}(a_{k-1}|x_k^{UM})$.

The proof is now complete.    
\end{proof}

\section{Proof of \texorpdfstring{\Cref{thm: error bound of fhs}}{Theorem 4}}
\label{app:thm: error bound of fhs}

As shown by \Cref{prop: fhs exact}, the FHS sampling process and the reverse CTMC decoupling structure: they share the same statistical distributions on the unmasking times $\tau_k$ and the indices $i_k$ of tokens that are unmasked, and differ only in the mechanism used to sample the predicted token value $a_{k}$ for each $k$. In particular, FHS predicts $a_{k-1}$ via $\mu_\theta^{i_{k-1}}$ as an approximation of $q_0^{i_{k-1}}(a |x_k^{UM})$ used in the ground-truth reverse CTMC. Thus, the estimation error in $\mu_\theta$ affects only the sampling of the token value $a_k$ and does not propagate to $\tau_k$ or $i_k$. As a consequence, the subsequent convergence analysis of FHS can be carried out by technically accumulating only the errors arising from sampling the predicted token value $a_{k-1}$ for $k = {1,2,\ldots,d}$.

In the following, we first derive the KL divergence between the path distributions of the two processes, i.e., $\KL{\mathbb{P}_{\mathrm{CTMC}}(\mathrm{path})}{\mathbb{P}_{\mathrm{FHS}}(\mathrm{path})}$. Then, by applying the data processing inequality, we obtain the error bound on the distance between the ground-truth distribution $q_0(\cdot)$ and the output distribution of FHS, i.e.,  $\KL{q_0(\cdot)}{\mathbb{P}_{\mathrm{FHS}}(\cdot)}$.

By the Markov property, the joint path distribution can be factorized as   
\begin{multline}
\label{eq: P path}
    \mathbb{P}_{\mathrm{CTMC}}(\mathrm{path}) \\ = p_d(x_d, \tau_d) \prod_{k=d}^{1} \mathbb{P}_{\mathrm{CTMC}}(\tau_{k-1}|x_k,\tau_k) \mathbb{P}_{\mathrm{CTMC}}(i_{k-1}|x_k, \tau_{k-1}) \mathbb{P}_{\mathrm{CTMC}}(a_{k-1}|x_k, \tau_{k-1}, i_{k-1}),
\end{multline}
and
\begin{equation}
\label{eq: Q path}
    \mathbb{P}_{\mathrm{FHS}}(\mathrm{path}) = p_d(x_d, \tau_d) \prod_{k=d}^{1} \mathbb{P}_{\mathrm{FHS}}(\tau_{k-1}|x_k,\tau_k) \mathbb{P}_{\mathrm{FHS}}(i_{k-1}|x_k, \tau_{k-1}) \mathbb{P}_{\mathrm{FHS}}(a_{k-1}|x_k, \tau_{k-1}, i_{k-1}),
\end{equation}
where $$\mathbb{P}_{\mathrm{CTMC}}(a_{k-1}|x_k, \tau_{k-1}, i_{k-1}) = q_0^{i_{k-1}}(a_{k-1}|x_k^{UM})$$ and
$$\mathbb{P}_{\mathrm{FHS}}(a_{k-1}|x_k, \tau_{k-1}, i_{k-1}) = \mu_\theta^{i_{k-1}}(x_k,\tau_{k-1})[a_{k-1}]$$ as shown in \Cref{prop: fhs exact}, and $p_d(x_d, \tau_d) = 1$ is the common initialization distribution with $x_d$ being all masked and $\tau_d$ satisfying $\alpha_{\tau_d} = 0$.


Dividing \eqref{eq: P path} by \eqref{eq: Q path} and applying the properties in \Cref{prop: fhs exact}, we have
\begin{equation*}
\frac{\mathbb{P}_{\mathrm{CTMC}}(\mathrm{path})}{\mathbb{P}_{\mathrm{FHS}}(\mathrm{path})} = \prod_{k=1}^{d}  \frac{\mathbb{P}_{\mathrm{CTMC}}(a_{k-1}|x_k, \tau_{k-1}, i_{k-1})}{\mathbb{P}_{\mathrm{FHS}}(a_{k-1}|x_k, \tau_{k-1}, i_{k-1})}.   
\end{equation*}

Then, we derive the KL divergence between the path distributions as follows:
\begin{equation}
\label{eq: total error decomposition}
    \begin{aligned}
      &\KL{\mathbb{P}_{\mathrm{CTMC}}(\mathrm{path})}{\mathbb{P}_{\mathrm{FHS}}(\mathrm{path})} \\
      &= \mathbb{E}_{\mathrm{path}\sim \mathbb{P}_{\mathrm{CTMC}}}\left[\log \frac{\mathbb{P}_{\mathrm{CTMC}}(\mathrm{path})}{\mathbb{P}_{\mathrm{FHS}}(\mathrm{path})}\right]\\
      &= \mathbb{E}_{\mathrm{path}\sim \mathbb{P}_{\mathrm{CTMC}}}\left[\sum_{k=1}^{d}\log \left(\frac{\mathbb{P}_{\mathrm{CTMC}}(a_{k-1}|x_k, \tau_{k-1}, i_{k-1})}{\mathbb{P}_{\mathrm{FHS}}(a_{k-1}|x_k, \tau_{k-1}, i_{k-1})}\right)\right]\\
      &= \sum_{k=1}^{d}\mathbb{E}_{\mathrm{path}\sim \mathbb{P}_{\mathrm{CTMC}}}\left[\log \left(\frac{\mathbb{P}_{\mathrm{CTMC}}(a_{k-1}|x_k, \tau_{k-1}, i_{k-1})}{\mathbb{P}_{\mathrm{FHS}}(a_{k-1}|x_k, \tau_{k-1}, i_{k-1})}\right)\right]\\
      &= \sum_{k=1}^{d} \mathbb{E}_{\substack{(x_k, \tau_{k-1}, i_{k-1})\sim \mathbb{P}_{\mathrm{CTMC}} \\ (a_{k-1}|x_k, \tau_{k-1}, i_{k-1})\sim \mathbb{P}_{\mathrm{CTMC}}(a_{k-1}|x_k, \tau_{k-1}, i_{k-1})}}\bigg[\log \left(  \frac{\mathbb{P}_{\mathrm{CTMC}}(a_{k-1}|x_k, \tau_{k-1}, i_{k-1})}{\mathbb{P}_{\mathrm{FHS}}(a_{k-1}|x_k, \tau_{k-1}, i_{k-1})} \right)\bigg]\\
      &=  \sum_{k=1}^{d} \mathbb{E}_{(x_k, \tau_{k-1}, i_{k-1})\sim \mathbb{P}_{\mathrm{CTMC}}}\left[  \KL{\mathbb{P}_{\mathrm{CTMC}}(\cdot|x_k,\tau_{k-1},i_{k-1})}{ \mathbb{P}_{\mathrm{FHS}}(\cdot|x_k,\tau_{k-1},i_{k-1})}\right].
    \end{aligned}
\end{equation}

Thus, the KL divergence between two path distributions $\mathbb{P}_{\mathrm{CTMC}}$ and $\mathbb{P}_{\mathrm{FHS}}$ has been decomposed into summation of the expected KL divergence of two token-prediction distributions, which is the error arising from predicting $x_{k-1}^{i_{k-1}}$ via $\mu_\theta$ in FHS. We next analyze $$\mathbb{E}_{(x_k, \tau_{k-1}, i_{k-1})\sim \mathbb{P}_{\mathrm{CTMC}}}\left[\KL{ \mathbb{P}_{\mathrm{CTMC}}(\cdot|x_k,\tau_{k-1},i_{k-1})}{ \mathbb{P}_{\mathrm{FHS}}(\cdot|x_k,\tau_{k-1},i_{k-1})}\right]$$ for each $k$.

The following lemma shows that $\alpha_{\tau_{k-1}}$ follows the Beta distribution.
\begin{lemma}
    \label{lem: alpha follows beta distribution}
The random variable $\alpha_{\tau_{k-1}}$ sampled by FHS algorithm follows a Beta distribution, i.e., for each $k = 1,2,\cdots,d$, 
    \[\alpha_{\tau_{k-1}} \sim Beta(d-k+1,k).\]
\end{lemma}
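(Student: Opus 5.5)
The plan is to work with the transformed variables $V_j := 1-\alpha_{\tau_j} = 1-e^{-\tau_j}$, because the FHS update has a clean multiplicative form in them. Rewriting the update rule as in Step~3 of the proof of \Cref{prop: fhs exact} gives $V_d = 1$ and
\[
V_{k-1} = u_k^{1/k}\, V_k, \qquad k=d,d-1,\dots,1,
\]
where the $u_k$ are i.i.d.\ $\mathrm{Unif}(0,1)$. Unrolling this recursion expresses each time in closed form:
\[
V_{k-1} = \prod_{j=k}^{d} u_j^{1/j}.
\]
The target statement $\alpha_{\tau_{k-1}} \sim \mathrm{Beta}(d-k+1,k)$ is equivalent to $V_{k-1} = 1-\alpha_{\tau_{k-1}} \sim \mathrm{Beta}(k,d-k+1)$. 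So it suffices to identify the law of this product.

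The cleanest route is an order-statistics interpretation, which I would make precise as follows.

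\begin{itemize}
\item Let $U_{(1)}<\dots<U_{(d)}$ be the order statistics of $d$ i.i.d.\ uniforms. It is standard that $U_{(d)} \stackrel{d}{=} u^{1/d}$.
\item Conditionally on $U_{(d)}=v$, the remaining $d-1$ points are i.i.d.\ uniform on $(0,v)$. Hence $U_{(d-1)} \stackrel{d}{=} v\, u'^{1/(d-1)}$ with $u'$ independent.
\item Inducting downward yields the joint identity in law
\[
(V_{d-1},\dots,V_0) \stackrel{d}{=} (U_{(d)},\dots,U_{(1)}).
\]
\item In particular $V_{k-1} \stackrel{d}{=} U_{(k)}$, and the $k$-th order statistic of $d$ uniforms is $\mathrm{Beta}(k,d-k+1)$. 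This is exactly the needed law.
\end{itemize}

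Alternatively, one can use the first property of \Cref{prop: fhs exact}, which says the FHS time law equals the CTMC one. In the forward masking process each token is unmasked independently at an $\mathrm{Exp}(1)$ time $\sigma_i$, so $1-e^{-\sigma_i}$ are i.i.d.\ uniform. The reverse unmasking times $\tau_{k-1}$ are then order statistics of these $\sigma_i$, giving the same conclusion.

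As a sanity check I will verify the case $k=d$ directly. There $V_{d-1}=u_d^{1/d}$ has CDF $v^d$, which is $\mathrm{Beta}(d,1)$. Hence $\alpha_{\tau_{d-1}} \sim \mathrm{Beta}(1,d)$, matching the formula.

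The only delicate step is the conditional-uniformity induction, namely checking that the product of independent $u_j^{1/j}$ reproduces the joint order-statistic law. To avoid hand-waving I would do this by a direct density computation. The Jacobian of the map $(u_d,\dots,u_k)\mapsto(V_{d-1},\dots,V_{k-1})$ shows that the joint density of $(V_{d-1},\dots,V_{k-1})$ is
\[
\frac{d!}{(k-1)!}\, v_{k-1}^{k-1} \quad \text{on } 1>v_{d-1}>\dots>v_{k-1}>0.
\]
Integrating out the upper variables gives the marginal density
\[
\frac{d!}{(k-1)!\,(d-k)!}\; v^{k-1}(1-v)^{d-k},
\]
which is the $\mathrm{Beta}(k,d-k+1)$ density.
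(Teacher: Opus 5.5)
Your proposal is correct and is essentially the paper's argument. The paper also identifies $1-\alpha_{\tau_{k-1}}$ with the $k$-th order statistic of $d$ i.i.d.\ $\mathrm{Unif}(0,1)$ variables, uses that this order statistic is $\mathrm{Beta}(k,d+1-k)$, and then applies the reflection $1-X\sim\mathrm{Beta}(b,a)$ for $X\sim\mathrm{Beta}(a,b)$. The paper cites this identification from the proof of Proposition~4.1 in \cite{zheng2025fhs}, whereas you derive it yourself from the recursion $V_{k-1}=u_k^{1/k}V_k$ (your telescoping Jacobian computation, giving joint density $\frac{d!}{(k-1)!}v_{k-1}^{k-1}$ and hence the $\mathrm{Beta}(k,d-k+1)$ marginal, checks out), so your version is self-contained.
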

The proof of \Cref{lem: alpha follows beta distribution} is provided in \Cref{app:lem: alpha follows beta distribution}.

Further, \citep{zheng2025fhs} shows that NELBO can be equivalently reformulated as 
\begin{equation*}
\begin{aligned}
    \mathcal{L}_\infty(x_0)
= - \sum_{k=1}^d
\mathbb{E}_{x_k\sim \tilde q_{k|0}(\cdot \mid x_0)}
\left[
\frac{1}{k}
\sum_{l: x_k^l = \mask}
\log \bar\mu_\theta^{(l)}(x_k))[x_0^l]
\right],
\end{aligned}
\end{equation*}
where $\tilde{q}_{k|0} (x_k|x_0)$ denotes the discrete forward process which randomly and uniformly masks $k$ tokens of $x_0$, and
\begin{equation}
\label{eq: def-mu-bar}
\log \bar\mu_\theta(x_k)
= \mathbb{E}_{\alpha' \sim \mathrm{Beta}(d-k+1,k)}
\big[ \log \mu_\theta(x_k, \alpha^{-1}(\alpha')) \big].
\end{equation}

Then combining \Cref{lem: alpha follows beta distribution} and \eqref{eq: def-mu-bar}, we have 
\begin{equation}
\label{eq: mu bar}
    \mathbb{E}_{\tau_{k-1}}\left[-\log(\mu_\theta^{i_{k-1}}(x_k,\tau_{k-1})[a])\right] = -\log(\bar{\mu}_\theta^{i_{k-1}}(x_k)[a]).
\end{equation}

Thus, given \eqref{eq: mu bar}, we derive 
\begin{align}
        &\mathbb{E}_{(x_k,\tau_{k-1},i_{k-1}) \sim \mathbb{P}_{\mathrm{CTMC}}} \left[\KL{\mathbb{P}_{\mathrm{CTMC}}(\cdot | x_k,\tau_{k-1},i_{k-1})}{\mathbb{P}_{\mathrm{FHS}}(\cdot | x_k,\tau_{k-1},i_{k-1})}\right] \nonumber\\
        &= \mathbb{E}_{(x_k,\tau_{k-1},i_{k-1}) \sim \mathbb{P}_{\mathrm{CTMC}}} \bigg[\mathbb{E}_{a \sim \mathbb{P}_{\mathrm{CTMC}} (\cdot| x_k,\tau_{k-1},i_{k-1})} \nonumber\\ 
        & \qquad \big[-\log\left(\mathbb{P}_{\mathrm{FHS}}(a|x_k,\tau_{k-1},i_{k-1})\right) +\log\left(\mathbb{P}_{\mathrm{CTMC}}\left(a|x_k,\tau_{k-1},i_{k-1}\right)\right)\big]\bigg] \nonumber\\
        & \overset{(i)}{=} \mathbb{E}_{(x_k,\tau_{k-1},i_{k-1}) \sim \mathbb{P}_{\mathrm{CTMC}}} \left[ \mathbb{E}_{a \sim q_0^{i_{k-1}}(\cdot |x_k^{UM})}\left[-\log\left(\mu_\theta^{i_{k-1}}(x_k,\tau_{k-1})[a]\right)+ \log(q_0^{i_{k-1}}(a|x_k^{UM}))\right]\right] \nonumber\\
        & = \mathbb{E}_{x_k} \mathbb{E}_{i_{k-1}|x_k} \mathbb{E}_{\tau_{k-1}|(x_k,i_{k-1})} \mathbb{E}_{a\sim q_0^{i_{k-1}}(\cdot |x_k^{UM})}\left[-\log\left(\mu_\theta^{i_{k-1}}(x_k,\tau_{k-1})[a]\right)+ \log(q_0^{i_{k-1}}(a|x_k^{UM}))\right]\nonumber\\
        & \overset{(ii)}{=} \mathbb{E}_{x_k} \mathbb{E}_{i_{k-1}|x_k} \mathbb{E}_{\tau_{k-1}} \mathbb{E}_{a\sim q_0^{i_{k-1}}(\cdot |x_k^{UM})}\left[-\log\left(\mu_\theta^{i_{k-1}}(x_k,\tau_{k-1})[a]\right)+ \log(q_0^{i_{k-1}}(a|x_k^{UM})) \right] \nonumber\\ 
        & = \mathbb{E}_{x_k} \mathbb{E}_{i_{k-1}|x_k} \mathbb{E}_{a\sim q_0^{i_{k-1}}(\cdot |x_k^{UM})}  \mathbb{E}_{\tau_{k-1}} \left[-\log\left(\mu_\theta^{i_{k-1}}(x_k,\tau_{k-1})[a]\right)+ \log(q_0^{i_{k-1}}(a|x_k^{UM}))\right]\nonumber\\ 
        & \overset{(iii)}{=}  \mathbb{E}_{x_k} \frac{1}{k}\sum_{l: x_k^l = \mask}\mathbb{E}_{a\sim q_0^l(\cdot |x_k^{UM})} \left[ -\log(\bar{\mu}_\theta^{l}(x_k)[a])\right] - \mathbb{E}_{x_k} \frac{1}{k}\sum_{l: x_k^l = \mask} H(q_0^l(\cdot|x_k^{UM})), \label{eq: single KL}
    \end{align}
where $(i)$ follows from \Cref{prop: fhs exact}, $(ii)$ follows because the distribution of the CTMC path implies that $\tau_{k-1}$ is independent from $\{i_{j}\}_{j=d}^{k-1}$ and $\{a_{j}\}_{j=d}^{k-1}$ and hence independent from $x_{k-1}$, $(iii)$ follows from \eqref{eq: mu bar}, and $H$ denotes the entropy. A key step is that two expectations $\mathbb{E}_{\tau_{k-1}}$ and $\mathbb{E}_{a\sim q_0^{i_{k-1}}(\cdot | x_k^{UM})}$ are interchangeable thanks to the true token-prediction distribution $q_0^{i_{k-1}}(\cdot | x_k^{UM})$ is independent of time $\tau_{k-1}$.

Notice that
\begin{equation}
\label{eq: L_infty and error}
    \begin{aligned}
&\mathbb{E}_{x_{0}}\left[\mathcal{L}_{\infty}\left(x_{0}\right)\right] \\
&=  \mathbb{E}_{x_{0}} \sum_{k=1}^{d} \mathbb{E}_{x_k \sim \tilde{q}_{k|0}\left(x_k \mid x_{0}\right)} \frac{1}{k} \sum_{l: x_k^l = \mask}(-\log \bar{\mu}_{\theta}\left(x_k\right)\left[x_{0}^{l}\right]) \\
&= \sum_{k=1}^{d} \frac{1}{k} \mathbb{E}_{x_{0}, x_k} \sum_{l: x_k^l = \mask}(-\log \bar{\mu}_{\theta}\left(x_{k}\right)\left[x_{0}^{l}\right]). \\
& = \sum_{k=1}^{d} \frac{1}{k} \mathbb{E}_{x_k} \mathbb{E}_{x_{0} \mid x_k} \sum_{l: x_k^l = \mask}(-\log \bar{\mu}_{\theta}\left(x_k\right)\left[x_{0}^{l}\right]). \\
& =  \sum_{k=1}^{d} \frac{1}{k} \mathbb{E}_{x_k} \sum_{l: x_k^l = \mask} \mathbb{E}_{x_{0} \mid x_k}\left[-\log \bar{\mu}_{\theta}\left(x_k\right)\left[x_{0}^{l}\right]\right]. \\
&=  \sum_{k=1}^{d} \frac{1}{k} \mathbb{E}_{x_k} \sum_{l: x_k^l = \mask}\mathbb{E}_{x_{0}^l \mid x_k}\left[-\log \bar{\mu}_{\theta}\left(x_k\right)\left[x_{0}^{l}\right]\right]. \\
&=  \sum_{k=1}^{d} \frac{1}{k} \mathbb{E}_{x_k} \sum_{l: x_k^l = \mask} \mathbb{E}_{a\sim q_0^l(\cdot | x_k^{UM})} \left[-\log \bar{\mu}_{\theta}\left(x_k\right)\left[a\right]\right],
\end{aligned}
\end{equation}
which is precisely the first term 
in \eqref{eq: single KL}.

Finally, combining \eqref{eq: total error decomposition}, \eqref{eq: single KL}, and \eqref{eq: L_infty and error}, we have
\begin{equation*}
\begin{aligned}
\KL{\mathbb{P}_{\mathrm{CTMC}}(\mathrm{path})}{\mathbb{P}_{\mathrm{FHS}}(\mathrm{path})}
        & = -\sum_{k = 1}^d \mathbb{E}_{x_k} \frac{1}{k}\sum_{l: x_k^l = \mask}\mathbb{E}_{a\sim q_0^l(\cdot |x_k^{UM})} \left[\log(\bar{\mu}_\theta^{l}(x_k)[a])\right]\\
        &\quad \ - \sum_{k = 1}^d\mathbb{E}_{x_k} \frac{1}{k}\sum_{l: x_k^l = \mask} H(q_0^l(\cdot|x_k^{UM}))\\
        & = \mathbb{E}_{x_0\sim q_0}\left[\mathcal{L}_{\infty}(x_0)\right]  - \sum_{k = 1}^d\mathbb{E}_{x_k} \frac{1}{k}\sum_{l: x_k^l = \mask} H(q_0^l(\cdot|x_k^{UM}))\\
        & = \mathbb{E}_{x_0\sim q_0}\left[\mathcal{L}_{\infty}(x_0)\right]  - \sum_{k = 1}^d \frac{1}{k} \mathbb{E}_{x_0 \sim q_0}\mathbb{E}_{\calM_k}
\sum_{l\in \calM_k}
H\left(q_0^l(\cdot|x_0^{\calM_k^c})\right).
\end{aligned}
\end{equation*}

Apply the equivalence between integrated $\mathcal{L}_{SE}$ and expected NELBO in \Cref{prop: equivalence between lse and nelbo}, and \Cref{ass: fhs integrated Lse error}, then we have
\begin{equation*}
    \KL{\mathbb{P}_{\mathrm{CTMC}}(\mathrm{path})}{\mathbb{P}_{\mathrm{FHS}}(\mathrm{path})} \le \varepsilon''_{\mathrm{score}}.
\end{equation*}

Using the data processing inequality, we finally come to the conclusion 
\begin{equation*}
    \KL{q_0(x_0)}{\mathbb{P}_{\mathrm{FHS}}(x_0)}\le \KL{\mathbb{P}_{\mathrm{CTMC}}(\mathrm{path})}{\mathbb{P}_{\mathrm{FHS}}(\mathrm{path})}  \le \varepsilon''_{\mathrm{score}}.
\end{equation*}

The proof is now complete.

\section{Proof of \texorpdfstring{\Cref{prop: fhs upper bound is tight}}{Theorem 5}}
\label{app:prop: fhs upper bound is tight}

We construct both a target distribution $q_0$ and a special form of the estimated predictive model $\mu_\theta$, which yield a lower error bound. 

To proceed, fix $\bm{a},\bm{b} \in [S]^d$ where $\bm{b}^i\neq \bm{a}^i$ and $\bm{a}^i,\bm{b}^i \neq \mask$ for all $i \in [d]$. Let $q_0 = \bm{\delta}_{\bm{a}}$. Also, for any $x_t$, $t$, and $i$, let $\mu_\theta$ be as follows:
\begin{equation*}
  \mu_\theta^i(x_t,t)[y^i] = \begin{cases}
        1-\rho, &\text{ if } y^i = \bm{a}^i,\\
        \rho, &\text{ if } y^i =\bm{b}^i.
    \end{cases}
\end{equation*}
where we choose choose $\rho = 1-e^{-\varepsilon''_{\mathrm{score}}/d}$.

We next show that the above constructed $\mu_\theta$ satisfies
\Cref{ass: fhs integrated Lse error}, i.e., $\int_0^{+\infty} \mathcal{L}_{SE}(s;t)\, \mathrm{d}t \le \varepsilon''_{\mathrm{score}}$. First note that under $q_0 = \bm{\delta}_{\bm{a}}$, we have $q_0^l(\cdot|x_0^{\calM_k^c}) = \bm{\delta}^l_{\bm{a}^l}$. Then, we can apply \Cref{prop: equivalence between lse and nelbo} and compute the integrated $\mathcal{L}_{SE}$ as follows.
\begin{align*}
        \int_0^{+\infty} \mathcal{L}_{SE}(s;t)\, \mathrm{d}t & =\mathbb{E}_{x_0 \sim q_0}[\mathcal{L}_\infty(x_0)]
    - \sum_{k = 1}^d \frac{1}{k} \mathbb{E}_{x_0\sim q_0}\mathbb{E}_{\calM_k}\sum_{l\in \calM_k}H\left(q_0^l(\cdot|x_0^{\calM_k^c})\right) \\
& = \mathcal{L}_\infty(\bm{a}) - 0  \\
& = \int_0^{+\infty}\frac{-
\alpha_t}{1-\alpha_t}\mathbb{E}_{q_{t|0}(x_t|\bm{a})}\left[\sum_{l:x_t^l = \mask}\log\left(\mu_\theta^l(x_t,t)[\bm{a}^l]\right)\right]\mathrm{d}t \\
& = \int_0^{+\infty}\frac{-
\alpha_t}{1-\alpha_t}\mathbb{E}_{q_{t|0}(x_t|\bm{a})}\left[\sum_{l:x_t^l = \mask}\log\left(1-\rho\right)\right]\mathrm{d}t\\
& = \int_0^{+\infty}\frac{-
\alpha_t}{1-\alpha_t} \log\left(1-\rho\right) \mathbb{E}_{q_{t|0}(x_t|\bm{a})}\left[m(x_t)\right]\mathrm{d}t\\
& = \int_0^{+\infty}\frac{-
\alpha_t}{1-\alpha_t} \log\left(1-\rho\right) d(1-\alpha_t)\mathrm{d}t\\
& =  -d \log(1-\rho) = \varepsilon''_{\mathrm{score}} 
    \end{align*}
where $m(x_t)$ denotes the number of $\mask$ tokens in $x_t$, and the last inequality follows from the choice of $\rho$. 


Now, under FHS, since each token is predicted exactly once and $\mu_\theta$ predicts the correct token $\bm{a}^i$ at each position $i$ with the probability $(1-\rho)$, we have $\mathbb{P}_{\mathrm{FHS}}(\bm{a}) = (1-\rho)^d$.
Then, since $q_0 = \bm{\delta}_{\bm{a}}$, it follows that 
\begin{equation}
\label{eq: constructed fhs KL}
    \KL{q_0}{\mathbb{P}_{\mathrm{FHS}}} = - \log\left(\mathbb{P}_{\mathrm{FHS}}(a)\right) = -d \log(1-\rho)= \varepsilon''_{\mathrm{score}}
\end{equation}
where the last inequality follows from the choice of $\rho$. 

The proof is now complete. Note that this special case with such a $q_0$ and $\mu_\theta$ indicates that the upper bound argued in \Cref{thm: error bound of fhs} is tight.



\section{Proofs of Auxiliary Lemmas}

In this section, we provide all the proofs for the auxiliary lemmas used in this paper.

\subsection{Proof of \texorpdfstring{\Cref{lem:sum-pi-sqrt}}{Lemma 1}}
\label{app:lem:sum-pi-sqrt-proof}

Recall that $(T-t_k) - (T-t_{k+1}) = \kappa \min\cbrc{1, T-t_{k}}$. Define $k^* := \sup\cbrc{k: T-t_k > 1}$. We thus have the following cases.
\begin{enumerate}
    \item Case 1: $k < k^*$. This implies that $T-t_k > T-t_{k+1} > 1$, and $t_{k+1} - t_k = \kappa$. Thus, we have
    \[ \sum_{k=0}^{k^*-1} \frac{(t_{k+1}-t_k)^p}{\min\cbrc{1, T-t_{k+1}}^p} = \kappa^p k^*. \]
    \item Case 2: $k > k^*$. This implies that $T-t_{k+1} < T-t_{k} \leq 1$, and
    \begin{align*}
        \sum_{k=k^*+1}^{N-1} \frac{(t_{k+1}-t_k)^p}{\min\cbrc{1, T-t_{k+1}}^p} &= \kappa^p \sum_{k=k^*+1}^{N-1} \brc{ \frac{T-t_{k}}{T-t_{k+1}} }^p \\
        &= \kappa^p \sum_{k=k^*+1}^{N-1} \brc{ \frac{1}{1-\kappa} }^p \\
        &\asymp \kappa^p (N-k^*-1)
    \end{align*}
    where the first line follows by definition of $t_{k+1}$, and the last line follows due to the following. Note that for Case 2, the step-sizes satisfies that
    \[ T-t_k \leq T-t_{k+1} + \kappa. \]
    Since $T-t_{k+1} \geq \delta^{-1}$, this implies that $1 \leq \frac{T-t_{k}}{T-t_{k+1}} = 1+O(\kappa) = 1+o(1)$.
    \item Case 3: $k = k^*$. This implies that $T-t_{k+1} \in (1-\kappa, 1]$, $T-t_{k} \in (1,1+\kappa]$, and $t_{k+1} - t_k \leq \kappa$. For this case, we can similarly get $\frac{T-t_{k}}{T-t_{k+1}} = 1+o(1)$, and thus
    \[ \frac{(t_{k+1}-t_k)^p}{\min\cbrc{1, T-t_{k+1}}^p} = \kappa^p (1+o(1)). \]
\end{enumerate}
Summing up all three cases, we have
\[ \sum_{k=0}^{N-1} \frac{(t_{k+1}-t_k)^p}{\min\cbrc{1, T-t_{k+1}}^p} \asymp \kappa^p N. \]

Next, we need to determine the total number of steps. When $k \leq k^*$, the number of steps is simply
\[ k^* =: N_1 = \floor{\frac{T-1}{\kappa}} \asymp \frac{T}{\kappa}. \]
When $k > k^*$, in order to reach that $T-t_{N} \asymp \delta$, we need
\[ N_2 \asymp \log_{1 - \kappa} \delta \asymp \frac{\log \delta^{-1}}{\kappa}. \]
Therefore, the total number of steps satisfies that
\[ N = N_1 + N_2 \asymp \frac{T + \log \delta^{-1}}{\kappa}. \]
The proof is now complete.

\subsection{Proof of \texorpdfstring{\Cref{lem:tv_boundary_vanish}}{Lemma 2}}
\label{app:lem:tv_boundary_vanish_proof}

Before we dive into the proof, we provide some intuition for our result. Intuitively, the instantaneous change of the indicator function should only be non-zero for those $x$ on the boundary (i.e., where $\rvec{q}_t(x) = p_t(x)$). As a result, this second term should be exactly zero. This is characterized by the following lemma.

Write $\Phi_t := \{x: \rvec{q}_t(x) \neq p_t(x)\}$. Suppose that we can show that for all such $x \in \Phi_t$ we have $\frac{\partial}{\partial t} \ind{\rvec{q}_t(x) > p_t(x)} = 0$, then the result is obvious.

Now fix $x \in \Phi_t$. From the Kolmogorov forward equation, we have
\begin{align*}
    \rvec{q}_{t+\Delta t} (x) &= \sum_{y \in \calX} \rvec{q}_{t}(y) \brc{ \ind{y=x} + \rvec{R}_t(y,x) \Delta t } + o(\Delta t),\\
    p_{t+\Delta t} (x) &= \sum_{y \in \calX} p_{t}(y) \brc{ \ind{y=x} + \hat{R}_t(y,x) \Delta t } + o(\Delta t).
\end{align*}
Thus,
\[ \rvec{q}_{t+\Delta t}(x) - p_{t+\Delta t}(x) = \rvec{q}_{t}(x) - p_{t}(x) + O(\Delta t). \]
We now divide into the following two cases within $\Phi_t$. The first case is where $\rvec{q}_t(x) > p_t(x)$. Since $\calX$ is discrete, there exists some $\epsilon_t > 0$ such that $\rvec{q}_t(x) - p_t(x) \geq \epsilon_t$. Then, for vanishing $\Delta t$, $\rvec{q}_{t+\Delta t}(x) - p_{t+\Delta t}(x) \geq \epsilon_t - O(\Delta t) > 0$. The second case is where $\rvec{q}_t(x) < p_t(x)$. With a similar argument, we get $\rvec{q}_{t+\Delta t}(x) < p_{t+\Delta t}(x)$. For both cases, we get, for vanishing $\Delta t$ and $x \in \Phi_t$,
\[ \ind{\rvec{q}_{t+\Delta t}(x) > p_{t+\Delta t}(x)} = \ind{\rvec{q}_t(x) > p_t(x)}. \]
The proof is now complete.

\subsection{Proof of \texorpdfstring{\Cref{lem:tv_absorb_init}}{Lemma 3}}
\label{app:lem:tv_absorb_init_proof}

Note that $p_0 = \bm{\delta}_{\mask^d}$ if we use the absorbing rate matrix. Also, a useful result under the absorbing-rate is that we can explicitly write the forward conditional probability as
\begin{multline}
\label{eq:absorb-fwd-cond-prob}
    q_{t|0}(x|x_0) = (1-e^{-t})^{\sum_i \ind{x^i = \mask,x_0^i \neq \mask}} \\
    \cdot (e^{-t})^{\sum_i \ind{x^i = x_0^i \neq \mask}} \cdot
    0^{\sum_i \ind{x^i \neq \mask, x^i \neq x_0^i}}.
\end{multline}

Thus,
\begin{align*}
    \TV{q_T}{p_0} &= \sum_{x:p_0>q_T} p_0(x) - q_T(x) \\
    &\stackrel{(i)}{=} p_0(\mask^d) - q_T(\mask^d) \\
    &\stackrel{(ii)}{=} 1 - \sum_{x_0 \in \calX} q_0(x_0) (1-e^{-T})^{\sum_i \ind{x_0^i \neq \mask}} \\
    &\leq 1 - (1-e^{-T})^d \\
    &\lesssim d e^{-T},
\end{align*}
where $(i)$ follows because $p_0$ is a singleton at $\mask^d$, and $(ii)$ follows from \eqref{eq:absorb-fwd-cond-prob}.

\subsection{Proof of \texorpdfstring{\Cref{lem:disc_err_vanish_term}}{Lemma 4}}
\label{app:lem:disc_err_vanish_term_proof}

The proof is similar to Lemma~1 of \cite{liang2025sampler}. The only difference is a different definition for $h_t$. We provide the proof here for completeness.
\begin{align*}
    &\E_{\substack{x_t \sim \rvec{q}_t \\ x_{t_k} \sim \rvec{q}_{t_k}}} \sbrc{h_t(x_t) - h_t(x_{t_k})}\\
    &= \E_{x_t \sim \rvec{q}_t} \sbrc{h_t(x_t) - \sum_{x_{t_k} \in [S]^d} q_{T-t_k|T-t}(x_{t_k}|x_t) h_t(x_{t_k}) } \\
    &= \E_{x_t \sim \rvec{q}_t} \sbrc{h_t(x_t) - \sum_{x_{t_k} \in [S]^d} \brc{\ind{x_{t_k}=x_t} + R_t(x_t,x_{t_k}) (t-t_k)} h_t(x_{t_k}) } + o(t-t_k) \\
    &= (t-t_k) \E_{x_t \sim \rvec{q}_t} \sbrc{ - \sum_{x_{t_k} \in [S]^d} R_t(x_t,x_{t_k}) h_t(x_{t_k}) } + o(t-t_k) \\
    &\stackrel{(i)}{\leq} (t-t_k) \E_{x_t \sim \rvec{q}_t} \sbrc{ (- R_t(x_t,x_t)) h_t(x_t) } + o(t-t_k)
\end{align*}
where $(i)$ follows because $h_t(x) \geq 0$ and $R_t(x,y) \geq 0$ if $x \neq y$. Now, for uniform-rate, we have $R_t(x,x) = -\sum_{y:y \neq x} R_t(x,y) = - \frac{S-1}{S} d$. Also, for absorbing-rate, we have $R_t(x,x) = -\sum_{y:y \neq x} R_t(x,y) = - (d-m(x))$, where $m(x) ~(\leq d)$ is the number of mask states in $x$. We have the desired result for both cases.

\subsection{Proof of \texorpdfstring{\Cref{lem: CTMC rate property}}{Lemma 5}}
\label{app:lem: CTMC rate property}

\noindent\textbf{(1) Instantaneous total rate of leaving $x$ at time $t$.} For any $t$ and $x\neq y$, the reverse CTMC process
satisfies the local transition given by
\begin{equation}
\label{eq:local-forward}
\mathbb{P}(Y_{t-dt}=y \mid Y_t=x)
= \widetilde{R}_t(x,y)\,dt + o(dt).
\end{equation}
Taking summation over all $y\neq x$, we obtain
\begin{equation}
\label{eq: not leave x}
  \mathbb{P}(Y_{t-dt}\neq x \mid Y_t=x)
= \sum_{y\neq x} \widetilde{R}_t(x,y)\,dt + o(dt)
= \Lambda(t,x)\,dt + o(dt),  
\end{equation}
where
\begin{equation*}
    \Lambda(t,x)
= \sum_{y\neq x} \widetilde{R}_t(x,y).
\end{equation*}
Thus $\Lambda(t,x)$ is the instantaneous rate of leaving
state $x$ at time $t$. 

\medskip
\noindent\textbf{(2) Unmasking time $\tau$ in state $x$.}
Fix $t$ and $x$, and define the survival function
\begin{equation*}
   S(h) := \mathbb{P}(\tau < t-h \mid Y_t=x), \qquad h\ge 0, 
\end{equation*}
where
\begin{equation*}
    \tau := \sup\{s<t : Y_s \neq x\}
\end{equation*}
is the first unmasking time after $t$ for the reverse process indexed by the forward time. 

To derive a differential equation for $S(h)$, let $\delta>0$ be a small constant, and $\mathbf{1}_{\{\cdot\}}$ denote the indicator function. Using the Markov property at time $t-h$, we have
\begin{equation*}
    \begin{aligned}
        S(h+\delta) & = \mathbb{P}(\tau<t-h-\delta | Y_t=x)\\
        & = \mathbb{E}_\tau\left[\mathbf{1}_{\{\tau<t-h\}} \mathbf{1}_{\{\tau<t-h-\delta\}}|Y_t=x \right]\\
        & = \mathbb{E}_{Y_{t-h}}\left[\mathbb{E}_{\tau|Y_{t-h}}\left[\mathbf{1}_{\{\tau<t-h\}} \mathbf{1}_{\{\tau<t-h-\delta\}}|Y_{t-h}, Y_t = x\right]|Y_t = x\right]\\
        & \overset{(i)}{=} \mathbb{E}_{Y_{t-h}}\left[\mathbb{E}_{\tau|Y_{t-h}}\left[\mathbf{1}_{\{\tau<t-h\}} \mathbf{1}_{\{\tau<t-h-\delta\}}|Y_{t-h}=x, Y_t = x\right]|Y_t = x\right]\\
        & \overset{(ii)}{=} \mathbb{E}_{Y_{t-h}}\left[
        \mathbb{E}_{\tau|Y_{t-h}} \sbrc{\mathbf{1}_{\{\tau<t-h\}} | Y_{t-h}=x, Y_t=x}\mathbb{P}(\tau < t-h-
    \delta|Y_{t-h}=x)|Y_{t} = x
        \right]\\
        & = \mathbb{E}_{Y_{t-h}}\left[
        \mathbb{E}_{\tau|Y_{t-h}} \sbrc{\mathbf{1}_{\{\tau<t-h\}} | Y_{t-h}=x, Y_t=x}|Y_{t} = x
        \right] \mathbb{P}(\tau < t-h-
    \delta|Y_{t-h}=x)\\
    & \overset{(iii)}{=} \mathbb{E}_{Y_{t-h}}\left[
        \mathbb{E}_{\tau|Y_{t-h}} \sbrc{\mathbf{1}_{\{\tau<t-h\}} | Y_{t-h}, Y_t=x}|Y_{t} = x
        \right] \mathbb{P}(\tau < t-h-
    \delta|Y_{t-h}=x)\\
    & =\mathbb{E}_{\tau}\sbrc{\mathbf{1}_{\{\tau<t-h\}} | Y_t=x}
     \mathbb{P}(\tau < t-h-
    \delta|Y_{t-h}=x),
    \end{aligned}
\end{equation*}
Here $(i)$ follows because $\mathbf{1}_{\{\tau<t-h\}}$ is non-zero only when $Y_{t-h} = Y_t = x$. Also, $(ii)$ follows because $\mathbf{1}_{\{\tau<t-h\}}$ is a (deterministic) function of $Y_{t-h}$ and $Y_t$. Also, by Markov property, $\mathbb{P}(\tau < t-h-\delta|Y_{t-h}, Y_t) = \mathbb{P}(\tau < t-h-\delta|Y_{t-h})$. Then, $(iii)$ follows again because $\mathbf{1}_{\{\tau<t-h\}}$ is non-zero only when $Y_{t-h} = Y_t = x$.
Thus,
\begin{align}
S(h+\delta)
&= S(h)\,\mathbb{P}(\tau<t-h-\delta \mid Y_{t-h}=x) \nonumber \\
&=S(h)\big(1 - \Lambda(t-h,x)\,\delta + o(\delta)\big) \label{eq:sh}
\end{align}
where the second equality follows because given $Y_{t-h}=x$, the condition $\tau<t-h-\delta$ exactly defines the process that does not leave $x$ in $[t-h-\delta,\,t-h]$, and hence  
\eqref{eq: not leave x} yields
\begin{equation*}
    \mathbb{P}(\tau<t-h-\delta \mid Y_{t-h}=x)
= 1 - \Lambda(t-h,x)\,\delta + o(\delta).
\end{equation*}

Equation \eqref{eq:sh} further yields
\begin{equation*}
 \frac{S(h+\delta)-S(h)}{\delta}
= -\Lambda(t-h,x)\,S(h) + o(1),   
\end{equation*}
which implies the following ODE as $\delta\to 0$:
\begin{equation*}
   S'(h) = -\Lambda(t-h,x)\,S(h),
\qquad S(0)=1. 
\end{equation*}
Solving the above ODE, we get
\begin{equation*}
    S(h)
= \exp\!\Big(-\int_h^0 \Lambda(t-u,x)\,du\Big)
= \exp\!\Big(-\int_{t-h}^{t} \Lambda(u,x)\,du\Big),
\end{equation*}
which completes the proof of the second claim.

\medskip
\noindent\textbf{(3) Distribution of the predicted token.}
We first derive:
\begin{equation*}
\begin{aligned}
        \mathbb{P}&(Y_\tau = y \mid Y_t=x, \tau \in (t - \delta,t])= \frac{\mathbb(Y_{\tau} = y\mid Y_t = x)}{\mathbb{P}(Y_{\tau} \neq x\mid Y_t = x)}= \frac{\widetilde{R}_t(x,y)\delta + o(\delta)}{\Lambda(t,x)\delta + o(\delta)}.
\end{aligned}
\end{equation*}

Then, letting $\delta \downarrow 0$, we have
\begin{equation*}
\lim\limits_{\delta\downarrow0}\ \mathbb{P}(Y_\tau = y \mid Y_t=x, \tau \in (t-\delta,t])
= 
\frac{\widetilde{R}_t(x,y)}{\Lambda(t,x)},
\end{equation*}
which implies the distribution of next state.
The proof is now complete.

\subsection{Proof of \texorpdfstring{\Cref{lem: alpha follows beta distribution}}{Lemma 6}}
\label{app:lem: alpha follows beta distribution}

Following from the proof of Proposition 4.1 in \citep{zheng2025fhs}, we have 
\begin{equation*}
    1-\alpha_{\tau_{k-1}} = u_{(k)},
\end{equation*}
where $\{u_{(k)}\}_{k = 1}^d$ are order statistics of $d$ independent and uniformly distributed variables on $[0,1]$, satisfying $u_{(d)} > \cdots >u_{(1)}$. It is well known that the $k$-th order statistic of $d$ i.i.d.\ random variables with the distribution of $\text{Unif} \;(0,1)$ follows a $Beta(k,d+1-k)$ distribution given by
\begin{equation*}
    u_{(k)} \sim Beta(k,d+1-k).
\end{equation*}

We use the following fact for the Beta distribution. Suppose that $X \sim Beta(a,b)$, then $1-X \sim Beta(b,a)$. 
Thus, we obtain the distribution of $\alpha_{\tau_{k-1}}$ as
\begin{equation*}
    \alpha_{\tau_{k-1}} \sim Beta(d+1-k,k).
\end{equation*}

The proof is now complete.

\end{document}